\newtheorem{proposition}{Proposition}
\newtheorem{theorem}{Theorem}
\newtheorem{proof}{Proof}
\DeclareMathOperator*{\argmax}{arg\,max}
\definecolor{cvprblue}{rgb}{0.21,0.49,0.74}
\title{Rethinking Adversarial Training with Neural Tangent Kernel}
\author{
Guanlin Li \\
Nanyang Technological University, S-Lab\\
{\tt\small guanlin001@e.ntu.edu.sg}
\and
Han Qiu\\
Tsinghua University\\
{\tt\small qiuhan@tsinghua.edu.cn}
\and
Shangwei Guo\\
Chongqing University\\
{\tt\small gswei5555@gmail.com}
\and
Jiwei Li\\
Zhejiang University, Shannon.AI\\
{\tt\small jiwei\_li@shannonai.com}
\and
Tianwei Zhang\\
Nanyang Technological University\\
{\tt\small tianwei.zhang@ntu.edu.sg}
}
\begin{document}

\maketitle

\begin{abstract}
Adversarial training (AT) is an important and attractive topic in deep learning security, exhibiting mysteries and odd properties. Recent studies of neural network training dynamics based on Neural Tangent Kernel (NTK) make it possible to reacquaint AT and deeply analyze its properties. In this paper, we perform an in-depth investigation of AT process and properties with NTK, such as NTK evolution. We uncover three new findings that are missed in previous works. First, we disclose the impact of data normalization on AT and the importance of unbiased estimators in batch normalization layers. Second, we experimentally explore the kernel dynamics and propose more time-saving AT methods. Third, we study the spectrum feature inside the kernel to address the catastrophic overfitting problem. To the best of our knowledge, it is the first work leveraging the observations of kernel dynamics to improve existing AT methods.
\end{abstract}

\vspace{-5pt}
\section{Introduction}
\vspace{-5pt}

In the past few years, Neural Tangent Kernel (NTK) has become a prevalent tool to analyze and understand deep learning models (i.e., neural networks). To study the optimization process in the function space, \cite{jacot_neural_2018} proved that the dynamics of an infinite-wide neural network under gradient descent can be computed with NTK. \cite{lee_wide_2019} further found that the learning process of such a neural network can be approximated as a linear model with the initial values of its parameters. \cite{atanasov_neural_2022} proved that neural networks can be regarded as kernel machines, whose learning dynamics can be divided into two phases: in the ``kernel learning'' stage, NTK of a neural network is aligned with the features of the network in a tangent space; in the ``lazy training'' phase, the kernel stays static, and the model parameters do not change significantly. Besides the theoretical studies of NTK properties, there are also empirical investigations on NTK of finite-wide networks trained on practical datasets to explore the generalization ability~\cite{fort_deep_2020,ortiz-jimenez_what_2021} and learning process~\cite{baratin_implicit_2021} of networks. 

The NTK theory also inspires many interesting applications. For example, \cite{zhou_meta-learning_2021} proposed a new meta-learning paradigm based on NTK to improve the generalization of the model to other domains. \cite{yuan_neural_2021} adopted the ``lazy training'' property of neural networks and crafted poisons to decrease the victim model's performance under a black-box threat model. \cite{hayase_few-shot_2022} leveraged NTK to decrease the ratio of modified data in backdoor attacks. In this paper, we aim to investigate and enhance adversarial training (AT) and model robustness with NTK. 

It is found that neural networks are fragile to a kind of data, dubbed adversarial examples (AEs), which mislead the models to make wrong predictions~\cite{goodfellow_explaining_2015,szegedy_intriguing_2014}. It is essential to explore how to defend against AEs and improve the model's robustness. An important and effective defense approach is adversarial training (AT)~\cite{madry_towards_2018}, which generates AEs on-the-fly to train the model. However, AT can cause the model to have lower clean accuracy and overfit AEs on the test set in an earlier training stage~\cite{rice_overfitting_2020}. A lot of efforts have been devoted to the exploration and explanation of such odd property~\cite{dong_exploring_2022,ilyas_adversarial_2019,li_overfitting_2020,schmidt_adversarially_2018}. Recently, there are two works~\cite{loo_evolution_2022,tsilivis_what_2022} that leverage NTK to study AT. \cite{loo_evolution_2022} empirically studied the dynamics of finite-wide neural networks under normal training and AT. They found that compared with normal training, AT can make the kernel converge to a different one inheriting the model robustness at a much higher speed, and the corresponding eigenvectors of NTK can reflect more visually interpretable features. \cite{tsilivis_what_2022} studied the adversarial robustness by analyzing the features contained in the neural network's NTK. They empirically proved the transferability of AEs directly generated from NTKs by treating the neural networks as kernel machines~\cite{atanasov_neural_2022}. 
They also studied the spectrum of NTKs and found the robust features~\cite{ilyas_adversarial_2019} and useful features~\cite{ilyas_adversarial_2019} can be recognized from the contribution to adversarial perturbation. However, the above two works exhibit the following weaknesses. (1) They lack a theoretical analysis of NTK for modern neural networks with AT. (2) They only focus on the early training stage but do not consider the characteristics of NTK in the entire AT process. (3) They just analyze one typical AT strategy~\cite{madry_towards_2018}, and the applicability to other solutions is unknown. (4) These studies only provide some analysis of AT, but not practical solutions to enhance existing methods. 

Motivated by these limitations, this paper presents a more comprehensive analysis of AT with NTK from both theoretical and empirical perspectives. First, we theoretically analyze the dynamics of neural networks under AT and normal training and analyze the functions of running mean and standard variance\footnote{The running mean and running standard variance are two statistics calculated during the training process. Please refer to https://pytorch.org/docs/stable/\_modules/torch/nn/modules/batchnorm.html.} in batch normalization layers~\cite{ioffe_batch_2015} during AT (Section~\ref{sec:errorgap}). Second, we observe that NTK changes significantly in the AT process. So, we explore the full training traces of neural networks with normal training and various AT strategies to reveal the evolution of NTK (Section~\ref{sec:atevolution}).

Our analysis can deepen our understanding of AT, and more importantly, provide new strategies to enhance existing AT methods. We provide three case studies. (1) Inspired from the theoretical analysis, we show how global statistics in batch normalization~\cite{ioffe_batch_2015} can affect the clean accuracy and robust accuracy in AT (Section~\ref{sec:cs1}). (2) By studying the NTK evolution process, we propose a new training paradigm to reduce the time cost of AT without sacrificing the model robustness and performance (Section~\ref{sec:cs2}). (3) We also prove that catastrophic overfitting~\cite{kim_understanding_2021} in the single-step AT is sourced from the isotropic features, and provide a simple solution to address this problem (Section~\ref{sec:cs3}).

\vspace{-5pt}
\section{Theoretical Analysis}
\label{sec:errorgap}
\vspace{-5pt}

Prior studies empirically calculate NTK (i.e., ENTK) in AT based on the AEs involved in training~\cite{loo_evolution_2022,tsilivis_what_2022}. Similarly, ENTK is calculated based on the clean training data for the normal training process. On the other hand, it is well-known that AT will harm clean accuracy. We aim to give a simple analysis from the perspective of NTK in this section. Furthermore, we give a direct analysis of the global statistics in batch normalization layers to show how they influence AT. Without loss of generality, our study focuses on AT in 2D image classification tasks, which is the same as previous works~\cite{loo_evolution_2022,tsilivis_what_2022}.


\subsection{Notations}
\label{sec:ntk}

\textbf{Neural Tangent Kernel}. Consider a neural network $f: \mathbb{R}^{n_\mathrm{in}} \to \mathbb{R}^{n_\mathrm{out}}$ parameterized by a vector of parameters $\theta$. To study the evolution of $\theta$ in the training phase, we rewrite it as $f_t=f(\cdot, \theta_t)$ to represent the specific training state at the time $t$. To describe the changes in parameters and outputs of $f$ on the training set $\mathcal{D}=\{(x,y)\vert x\in\mathbb{R}^{n_\mathrm{in}}, y\in\mathbb{R}^{n_\mathrm{out}}\}$, we further define a loss function $\mathcal{L}=\mathbb{E}_{(x,y)\thicksim\mathcal{D}} \ell (f_t(x),y)$ and a learning rate $\eta$ for parameter update. Then, we have
\begin{align*}
   \Delta(\theta)_t &= \theta_t - \theta_{t-1} =  -\eta \nabla_{\theta_{t-1}} f_{t-1}(\mathcal{X})^T \nabla_{f_{t-1}(\mathcal{X})}\mathcal{L}\\
   \Delta(f(\mathcal{X}))_t &= f_t (\mathcal{X}) - f_{t-1} (\mathcal{X}) =  \nabla_{\theta_{t-1}}f_{t-1}(\mathcal{X})\Delta(\theta)_t,
\end{align*}
where $\mathcal{X} = \{x\vert (x,y)\in \mathcal{D}\}$. $f(\mathcal{X})$ represents a matrix with the shape of $\vert \mathcal{X} \vert n_\mathrm{out} \times 1$, i.e., the outputs of all training data. 
We find that $\Delta(f(\mathcal{X}))_t = -\eta\nabla_{\theta_{t-1}}f_{t-1}(\mathcal{X})\nabla_{\theta_{t-1}} f_{t-1}(\mathcal{X})^T \nabla_{f_{t-1}(\mathcal{X})}\mathcal{L}$. This enables us to deduce NTK~\cite{jacot_neural_2018} as: $\Theta(\mathcal{X}, \mathcal{X})_{t-1} = \nabla_{\theta_{t-1}}f_{t-1}(\mathcal{X})\nabla_{\theta_{t-1}} f_{t-1}(\mathcal{X})^T$, which reflects that how the changes of model parameters with respect to some data affect the predictions of other (or same) data. So, the shape of $\Theta$ is $\vert \mathcal{X} \vert n_\mathrm{out} \times \vert \mathcal{X} \vert n_\mathrm{out}$. To be aligned with the formation in deep learning tools (e.g., Pytorch~\cite{paszke_pytorch_2019}, JAX~\cite{bradbury_jax_2018}), we reformat NTK to have the shape of $\vert \mathcal{X} \vert \times \vert \mathcal{X} \vert \times n_\mathrm{out} \times  n_\mathrm{out}$. 

\noindent\textbf{Adversarial Training}. In AT, AEs are generated and participate in model training. We denote $x\in\mathcal{X}$ as a clean sample, and the corresponding $x' \in \mathcal{B}_{p}^{\epsilon}(x)$ as an AE, if it satisfies $\argmax f(x) \neq \argmax f(x')$, where $\mathcal{B}_{p}^{\epsilon}(x)$ is a $l_p$-norm ball with the center of $x$ and radius of $\epsilon$.  Generally, searching such an AE from $x$ is based on the gradients with respect to $x$, i.e., $x' = x + \alpha \nabla_x \ell(f(x), y)$, where $\alpha$ is the step size. Therefore, the training object of AT is to minimize the empirical loss $\mathbb{E}_{(x,y)\backsim\mathcal{D}} \ell(f_t(x'), y)$ at epoch $t$.

\subsection{AT Harms Clean Accuracy}

Section~\ref{sec:ntk} gives the definition of NTK for a model trained with clean data. In this case, input data are independent of model parameters. In AT, input data (AEs) are related to the current model parameters obeying the following connection:
\begin{align*}
    x' = x + \omega \quad & s.t. \quad \omega^{i,j,k} \in \{-\epsilon, \epsilon\}, \\
    \mathrm{P}(\omega^{i,j,k} = \epsilon)  = & \mathrm{P}(\nabla_x \ell(f(x), y)^{i,j,k} > 0),
\end{align*}
where $\omega \in \{-\epsilon, \epsilon\}^3$ is the perturbation added to the clean sample $x$, which is an RGB image. Clearly, the perturbation for each pixel only depends on the gradient with respect to the corresponding pixel. Note that we consider the perturbation is under $l_\infty$-norm. The generated AEs are usually clipped into $[0,1]$ to fit the model's input range. However, for simplicity, we omit this step and set the perturbation at each pixel as either $-\epsilon$ or $\epsilon$, which is aligned with the practical observation that the perturbation reaches either the maximum or minimum. For a multi-step attack, such as PGD, AEs are generated by overlaying multi-round gradients. In such a case, $\nabla_x \ell(f(x), y)^{i,j,k}$ represents the overlaid gradients.

So, this gives us two aspects to study NTK of a model with AT: clean sample NTK and AE NTK. Based on the definition in Section~\ref{sec:ntk}, in AT, we have
\begin{align*}
          &\Delta(\theta)_t = \theta_t - \theta_{t-1} =  \\
          &\quad\quad\quad\quad -\eta \nabla_{\theta_{t-1}} f_{t-1}(\mathcal{X}+ \Omega_{t-1})^T \nabla_{f_{t-1}(\mathcal{X}+ \Omega_{t-1})}\mathcal{L}_{\mathrm{AT}}\\
   &\Delta(f(\mathcal{X}))_t = f_t (\mathcal{X}) - f_{t-1} (\mathcal{X}) =  \nabla_{\theta_{t-1}}f_{t-1}(\mathcal{X})\Delta(\theta)_t\\   &\Delta(f(\mathcal{X}+ \Omega_{t-1}))_t = f_t (\mathcal{X}+ \Omega_{t-1}) - f_{t-1} (\mathcal{X}+ \Omega_{t-1}) =  \\
   &\quad\quad\quad\quad \quad\quad\quad\quad\nabla_{\theta_{t-1}}f_{t-1}(\mathcal{X}+ \Omega_{t-1})\Delta(\theta)_t,
\end{align*}
where $\Omega_{t-1} = [\omega_1, \omega_2, \dots, \omega_{\vert \mathcal{X}\vert}]$ is all perturbation for $x_i \in\mathcal{X}$ at epoch $t-1$, and $\mathcal{L}_\mathrm{AT}=\mathbb{E}_{(x,y)\thicksim\mathcal{D}} \ell (f_t(x+\omega),y)$. Clearly, NTK of $x'$ can be calculated as $\Theta(\mathcal{X}+ \Omega_{t-1}, \mathcal{X}+ \Omega_{t-1})_{t-1} = \nabla_{\theta_{t-1}}f_{t-1}(\mathcal{X}+ \Omega_{t-1})\nabla_{\theta_{t-1}} f_{t-1}(\mathcal{X}+ \Omega_{t-1})^T$. On the other hand, NTK of $x$ can be calculated as $\Theta(\mathcal{X}+ \Omega_{t-1}, \mathcal{X})_{t-1} = \nabla_{\theta_{t-1}}f_{t-1}(\mathcal{X}+ \Omega_{t-1})\nabla_{\theta_{t-1}} f_{t-1}(\mathcal{X})^T$, when the model is trained on $x'$. Considering that in normal training, we have $\Theta(\mathcal{X}, \mathcal{X})_{t-1} = \nabla_{\theta_{t-1}}f_{t-1}(\mathcal{X})\nabla_{\theta_{t-1}} f_{t-1}(\mathcal{X})^T$. Therefore, we explore the implicit connection between $\Theta(\mathcal{X}+ \Omega_{t-1}, \mathcal{X})_{t-1}$ and $\Theta(\mathcal{X}, \mathcal{X})_{t-1}$ in AT.

\begin{theorem}\label{the:1}
Given a $\mathcal{C}^2$ function $f: \mathbb{R}^{n_\mathrm{in}} \to \mathbb{R}^{n_\mathrm{out}}$, parameterized by a vector of parameters $\theta$, and trained by $x' = x + \omega$, where $\omega \in \{-\epsilon, \epsilon\}^3$, on a dataset $\mathcal{D}=\{(x,y)\vert x\in\mathbb{R}^{n_\mathrm{in}}, y\in\mathbb{R}^{n_\mathrm{out}}\}$, NTK of $\mathcal{X}$ satisfies $\Theta(\mathcal{X}+ \Omega_{t-1}, \mathcal{X})_{t-1} \approx \Theta(\mathcal{X}, \mathcal{X})_{t-1} + o(\epsilon)$, where $\mathcal{X} = \{x\vert (x,y)\in \mathcal{D}\}$ and $\Omega_{t-1} = [\omega_1, \omega_2, \dots, \omega_{\vert \mathcal{X}\vert}]$, which is a random perturbation set independent of $x$, $\theta$, and $\ell$.
\end{theorem}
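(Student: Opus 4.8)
The plan is to Taylor-expand the parameter-gradient feature map in the input perturbation and then use the randomness of $\Omega_{t-1}$ to cancel the first-order contribution in expectation. Write $\phi(x):=\nabla_{\theta_{t-1}} f_{t-1}(x)$ for the Jacobian feature map at the fixed parameters $\theta_{t-1}$; since $f$ is $\mathcal{C}^2$, the map $\phi$ is $\mathcal{C}^1$ in $x$, so a first-order Taylor expansion with a controlled remainder is available.

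First I would expand each block $\phi(x_i+\omega_i)$ about $\omega_i=0$:
\[
\phi(x_i+\omega_i)=\phi(x_i)+D_x\phi(x_i)\,\omega_i+r(x_i,\omega_i),
\]
where $D_x\phi$ denotes the input-Jacobian of the feature map and the remainder obeys $\lVert r(x_i,\omega_i)\rVert=O(\lVert\omega_i\rVert^2)$. Because every coordinate of $\omega_i$ lies in $\{-\epsilon,\epsilon\}$ and the input dimension is fixed, we have $\lVert\omega_i\rVert=O(\epsilon)$ and hence $r(x_i,\omega_i)=O(\epsilon^2)$ uniformly over the finite dataset.

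Next I would substitute this expansion into the definition $\Theta(\mathcal{X}+\Omega_{t-1},\mathcal{X})_{t-1}=\phi(\mathcal{X}+\Omega_{t-1})\,\phi(\mathcal{X})^T$, which gives
\[
\Theta(\mathcal{X}+\Omega_{t-1},\mathcal{X})_{t-1}=\Theta(\mathcal{X},\mathcal{X})_{t-1}+\big(D_x\phi(\mathcal{X})\,\Omega_{t-1}\big)\phi(\mathcal{X})^T+O(\epsilon^2),
\]
so the deviation from the clean-input NTK splits into a term linear in $\Omega_{t-1}$ plus a quadratic remainder. The last step is to take the expectation over the random perturbation set: by hypothesis $\Omega_{t-1}$ is independent of $x$, $\theta$, and $\ell$, and modelling the two sign outcomes symmetrically yields $\mathbb{E}[\omega_i]=0$, so the linear term vanishes in expectation and only the $O(\epsilon^2)=o(\epsilon)$ remainder survives, giving $\Theta(\mathcal{X}+\Omega_{t-1},\mathcal{X})_{t-1}\approx\Theta(\mathcal{X},\mathcal{X})_{t-1}+o(\epsilon)$.

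The main obstacle is precisely the upgrade from $O(\epsilon)$ to $o(\epsilon)$: a deterministic perturbation leaves a genuine first-order gap, so the improvement rests entirely on the zero-mean property obtained by treating $\Omega_{t-1}$ as a random set independent of the loss gradient. I would therefore emphasize that the approximation holds in the expectation sense and describes the averaged perturbation rather than the worst-case adversarial sign pattern $\epsilon\,\mathrm{sign}(\nabla_x\ell)$, whose nonzero mean would instead leave an $O(\epsilon)$ discrepancy; I would also check that the Taylor remainder is bounded uniformly so that summing the $O(\epsilon^2)$ terms over the finite training set does not degrade the order.
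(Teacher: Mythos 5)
Your proposal is correct at the same level of rigor as the paper's own argument and shares its skeleton: both expand the parameter-Jacobian $\nabla_{\theta_{t-1}}f_{t-1}(x+\omega)$ to first order in $\omega$ and then use the randomness and independence of $\Omega_{t-1}$ to argue the linear term is negligible. The difference lies in \emph{which} factor is assumed to have (near-)zero mean. You impose symmetry on the perturbation, $\mathrm{P}(\omega^{i,j,k}=\epsilon)=\tfrac12$, so $\mathbb{E}[\omega]=0$ kills the cross term exactly in expectation and only the uniformly bounded $O(\epsilon^2)$ Taylor remainder survives. The paper instead allows an arbitrary sign bias $\mathrm{P}(\omega^{i,j,k}=\epsilon)=p$, so $\mathbb{E}[\omega]=\epsilon(2p-1)$ is generically $O(\epsilon)$; it recovers $o(\epsilon)$ by additionally assuming that the mixed derivatives $g\in\nabla_{\mathcal X}\nabla_{\theta_{t-1}}f_{t-1}(\mathcal X)$ have mean $\mu_{t-1}$ close to zero and are independent of $\omega$, giving $\mathbb{E}[\omega g]=\epsilon(2p-1)\mu_{t-1}=o(\epsilon)$, and then invokes the Central Limit Theorem to argue each entry of $\Omega_{t-1}^T\nabla_{\mathcal X}\nabla_{\theta_{t-1}}f_{t-1}(\mathcal X)$ concentrates near that expectation. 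Your route buys a cleaner cancellation (no assumption on the network's second derivatives) and an explicit, uniform control of the remainder, at the cost of the extra symmetry assumption and of only asserting the result in expectation; the paper's route tolerates biased signs and addresses the realized random matrix via concentration, but leans on the unproven claim that $\mu_{t-1}\approx 0$ and handles the second-order remainder and variance only loosely. Your closing caveat --- that the genuinely adversarial choice $\omega=\epsilon\,\mathrm{sign}(\nabla_x\ell)$ has nonzero mean and would leave an $O(\epsilon)$ gap --- is exactly the limitation the paper acknowledges when it declares $\Omega_{t-1}$ independent of $x$, $\theta$, and $\ell$, so your proof is faithful to the statement as given.
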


Proof can be found in the supplementary materials. Because it is very complex if we consider exactly the same setting as the adversarial training, where the perturbation is highly related to $x$, $\theta$, and $\ell$, we make some strong assumptions in our proof to simplify it. From Theorem~\ref{the:1}, we know adversarial training will make NTK of $x$ shift. With the perturbation size increasing, NTK of $x$ will significantly change, which could explain the reason that AT will hurt the clean accuracy.

\subsection{Global Statistics in BN}

In practice, the training process is usually based on mini-batches. Therefore, the running mean and standard variance in the batch normalization layers are used as global statistics. To study how global statistics influence clean accuracy and robust accuracy, we give the following theory.

\begin{theorem}\label{the:2}
Given a $C$-Lipschitz function $f: \mathbb{R} \to \mathbb{R}$ defined on a $l_2$-normed metric space, suppose $\mathcal{X}=\{x\vert x\in\mathbb{R}\}$ is the input set, whose cardinality is $\vert \mathcal{X}\vert$. Suppose there are $k$ disjoint subsets of $\mathcal{X}$ having the same size, i.e., $\tilde{\mathcal{X}}_i\in\mathcal{X}, i\in [k]$, and $\vert\tilde{\mathcal{X}}_i\vert = \alpha \le \vert \mathcal{X}\vert$. Let $S_i^1 = \sum_{j=1}^\alpha x_j$ and $S_i^2 = \sum_{j=1}^\alpha x_j^2$, for $x_j \in \tilde{\mathcal{X}}_i$, and $A = S_i^2 - (S_i^1)^2$.
For $x\in \tilde{\mathcal{X}}_i$, we have 
\begin{align*}
    \small \Vert f(\frac{x-\tilde{\Sigma}_i}{\tilde{\Pi}_i}) - f(\frac{x-\Sigma}{\Pi}) \Vert_2 \le &C \Vert x\Vert_2(\Vert \frac{1}{\tilde{\Pi}_i}\Vert_2 + \Vert\frac{\alpha}{\sqrt{\mathbb{E}(A)}}\Vert_2) \\
    &+ C (\Vert \frac{\tilde{\Sigma}_i}{\tilde{\Pi}_i}\Vert_2 + \Vert\frac{\mathbb{E}(S_i^1)}{\sqrt{\mathbb{E}(A)}}\Vert_2),
\end{align*} 
where $\tilde{\Sigma}_i$ and $\tilde{\Pi}_i$ are mean and standard variance for elements in $\tilde{\mathcal{X}}_i$, and ${\Sigma}$ and ${\Pi}$ are unbiased estimators of mean and standard variance for elements in ${\mathcal{X}}$. 
\end{theorem}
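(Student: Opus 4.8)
The plan is to reduce the whole statement to the Lipschitz property of $f$ and then track the two normalizations term by term. First I would invoke $C$-Lipschitzness to pass from a difference of function values to a difference of arguments, writing
\[
\left\| f\!\left(\frac{x-\tilde{\Sigma}_i}{\tilde{\Pi}_i}\right) - f\!\left(\frac{x-\Sigma}{\Pi}\right)\right\|_2 \le C\left\|\frac{x-\tilde{\Sigma}_i}{\tilde{\Pi}_i} - \frac{x-\Sigma}{\Pi}\right\|_2.
\]
This collapses the problem to bounding the discrepancy between the batch-normalized input (using the batch mean $\tilde{\Sigma}_i$ and batch standard deviation $\tilde{\Pi}_i$) and the globally-normalized input (using the running statistics $\Sigma$ and $\Pi$), which is a purely algebraic object.

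Next I would split the argument on the right into a piece that scales $x$ and a pure shift, grouping it as $x(\tfrac{1}{\tilde{\Pi}_i}-\tfrac{1}{\Pi}) - (\tfrac{\tilde{\Sigma}_i}{\tilde{\Pi}_i}-\tfrac{\Sigma}{\Pi})$, and apply the triangle inequality twice to obtain
\[
\|x\|_2\left(\left\|\frac{1}{\tilde{\Pi}_i}\right\|_2 + \left\|\frac{1}{\Pi}\right\|_2\right) + \left(\left\|\frac{\tilde{\Sigma}_i}{\tilde{\Pi}_i}\right\|_2 + \left\|\frac{\Sigma}{\Pi}\right\|_2\right).
\]
The two batch terms $\tfrac{1}{\tilde{\Pi}_i}$ and $\tfrac{\tilde{\Sigma}_i}{\tilde{\Pi}_i}$ already appear verbatim in the target bound, so the only remaining task is to re-express the two global-statistic terms $\tfrac{1}{\Pi}$ and $\tfrac{\Sigma}{\Pi}$ through $\mathbb{E}(S_i^1)$ and $\mathbb{E}(A)$.

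The crux is identifying the unbiased running statistics with expectations taken over batches. I would use that the unbiased running mean equals the expected batch mean, $\Sigma = \mathbb{E}(S_i^1)/\alpha$, and that the batch variance satisfies $\tilde{\Pi}_i^2 \propto A/\alpha^2$, so that the unbiased running standard deviation is $\Pi = \sqrt{\mathbb{E}(A)}/\alpha$. Substituting these yields $\tfrac{1}{\Pi} = \alpha/\sqrt{\mathbb{E}(A)}$ and $\tfrac{\Sigma}{\Pi} = \mathbb{E}(S_i^1)/\sqrt{\mathbb{E}(A)}$, which, after multiplying through by $C$, reproduce the stated inequality exactly.

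The main obstacle is precisely this last identification: making the notion of "unbiased estimator" sharp enough to pin $\Pi$ to $\sqrt{\mathbb{E}(A)}/\alpha$ rather than $\mathbb{E}(\tilde{\Pi}_i)$. Since $\mathbb{E}(\sqrt{A}) \le \sqrt{\mathbb{E}(A)}$ by Jensen, one must either define the running \emph{variance} (not the standard deviation) as the expectation and then take a square root, or explicitly absorb the Jensen gap into the bound; likewise the defining constant $A = S_i^2 - (S_i^1)^2$ has to be reconciled with the standard batch-variance identity $\alpha^2\tilde{\Pi}_i^2 = \alpha S_i^2 - (S_i^1)^2$, a scaling that only affects constants. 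Once the statistical identities are fixed, everything else is a mechanical application of Lipschitzness and the triangle inequality.
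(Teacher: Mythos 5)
Your proposal follows essentially the same route as the paper's proof: Lipschitz continuity to reduce to the arguments, a triangle-inequality split into the $x$-scaling term and the shift term, and then substitution of the global statistics $\Sigma,\Pi$ by expectations of batch statistics (the paper takes $\Sigma=\mathbb{E}(\tilde{\Sigma}_i)$ and $\Pi=\sqrt{\tfrac{\alpha}{\alpha-1}\mathbb{E}(\tilde{\Pi}_i^2)}$, drops the Bessel factor via $\tfrac{\alpha}{\alpha-1}\ge 1$, and rewrites $\mathbb{E}(\tilde{\Pi}_i^2)$ in terms of $\mathbb{E}(A)$, which is exactly your "define the running variance as the expectation and then take the square root" option). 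The two caveats you flag --- the Jensen gap in defining the unbiased estimator of the standard deviation, and the $\alpha$-scaling mismatch between $A=S_i^2-(S_i^1)^2$ and the identity $\alpha^2\tilde{\Pi}_i^2=\alpha S_i^2-(S_i^1)^2$ --- are real and are present (the latter unaddressed) in the paper's own derivation as well.
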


Proof can be found in the supplementary materials. From Theorem~\ref{the:2}, for a neural network with normalization (e.g., batch normalization~\cite{ioffe_batch_2015}), the computation error of a kernel $f$ between mini-batch training and full dataset training mainly depends on the gap between the mean and variance of the batch and dataset. Specifically, in Theorem~\ref{the:2}, suppose the neural network is equipped with batch normalization layers, the kernel of a model trained with mini-batch is asymptotically closed to the kernel of a model trained directly on all data, regardless of the training strategies (normal or AT). Because the running mean and standard variance in batch normalization layers are unbiased estimators of the mean and standard variance of the data in the training set~\cite{ioffe_batch_2015}. Therefore, if we remove the running mean and standard variance in the batch normalization layers in the training phase, accuracy will be influenced. Inspired by such analysis, we will further study how these unbiased estimators effect clean accuracy and robust accuracy in Section~\ref{sec:cs1}.

\begin{figure*}[h]
\centering
\begin{subfigure}[b]{0.3\linewidth}
\centering
\includegraphics[width=\linewidth]{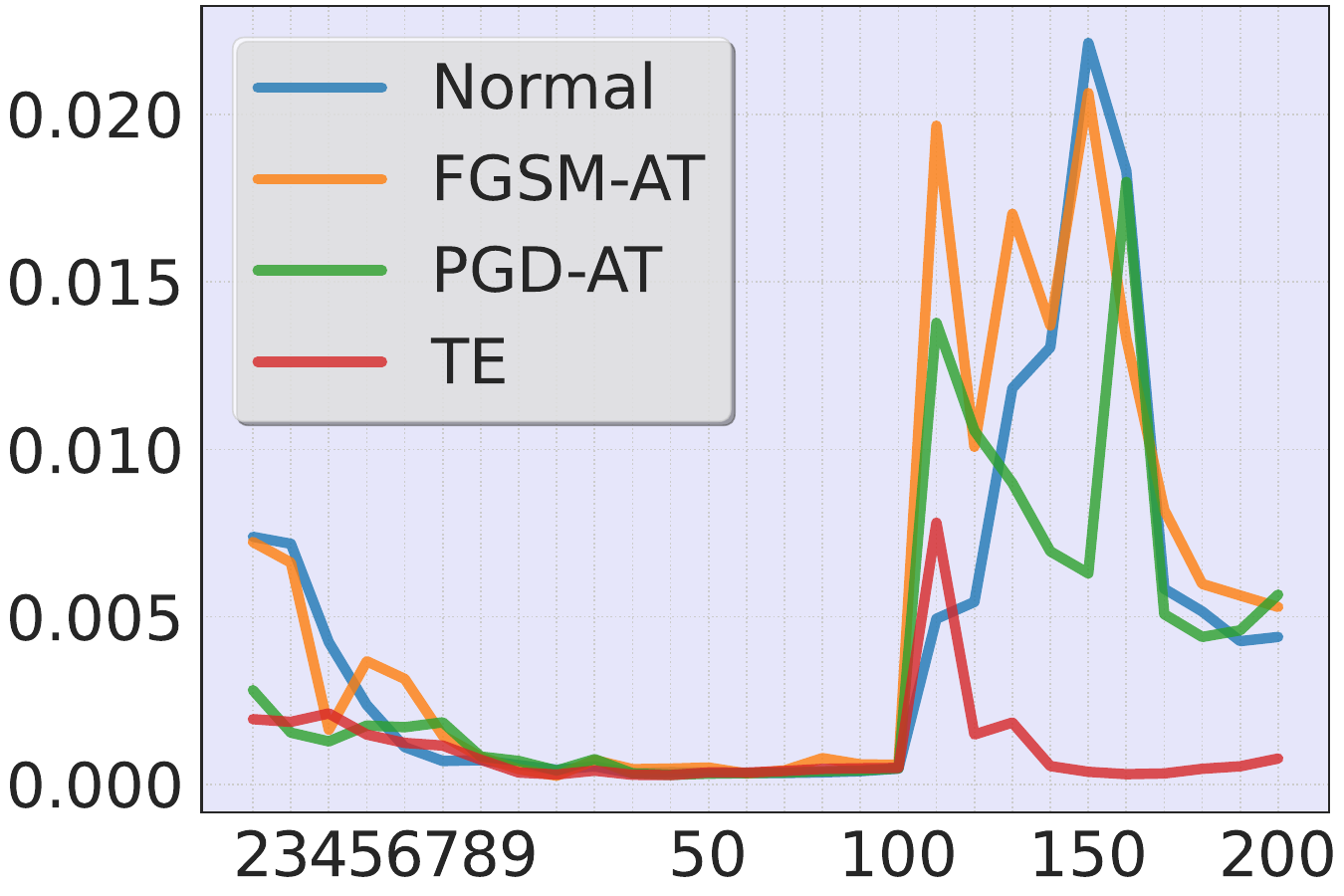}
\caption{\textbf{KD} on C-NTKs}
\label{fig:kd-ker1} 
\end{subfigure}
\begin{subfigure}[b]{0.3\linewidth}
\centering
\includegraphics[width=\linewidth]{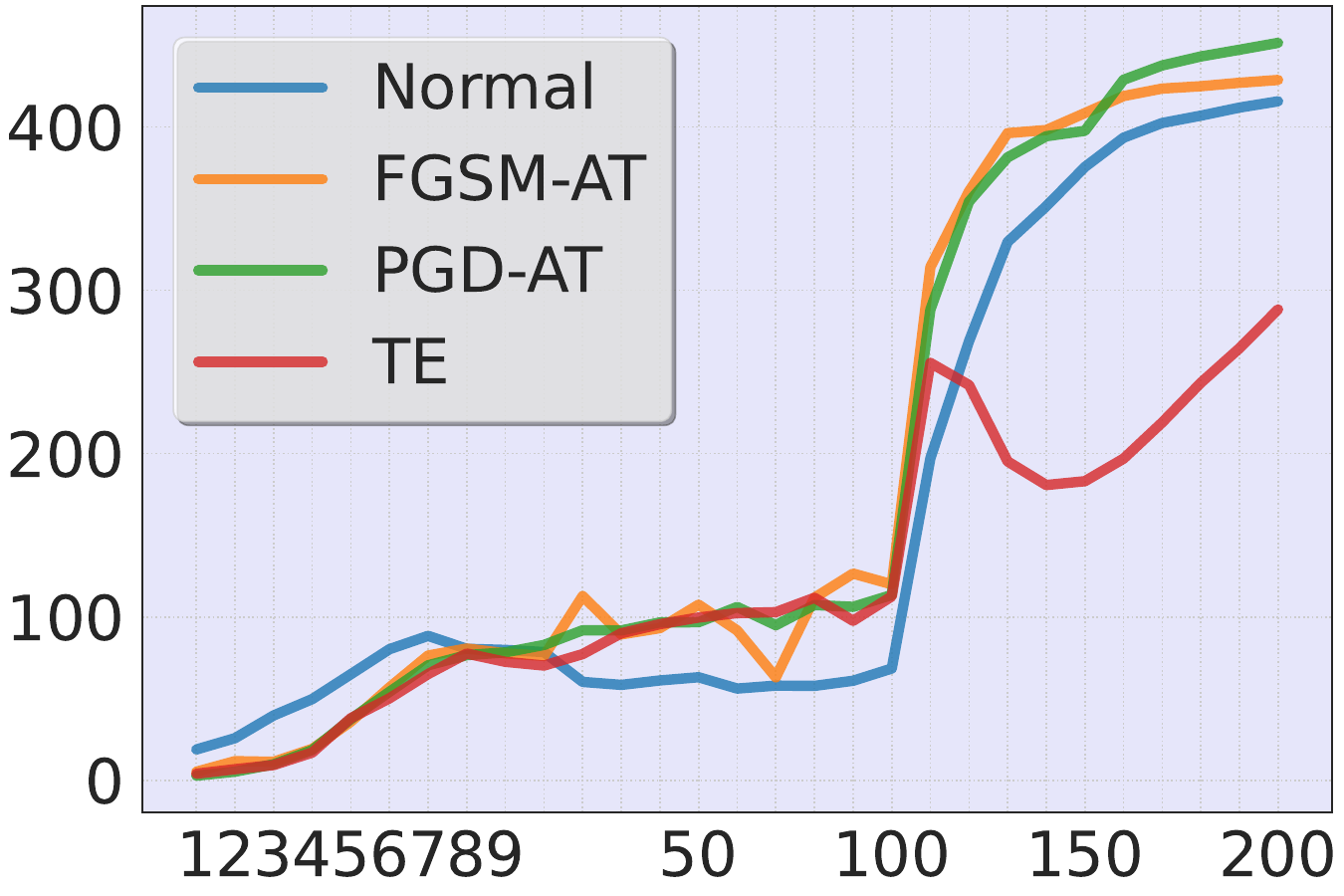}
\caption{\textbf{KER} on C-NTKs}
\label{fig:kd-ker3} 
\end{subfigure}
\begin{subfigure}[b]{0.3\linewidth}
\centering
\includegraphics[width=\linewidth]{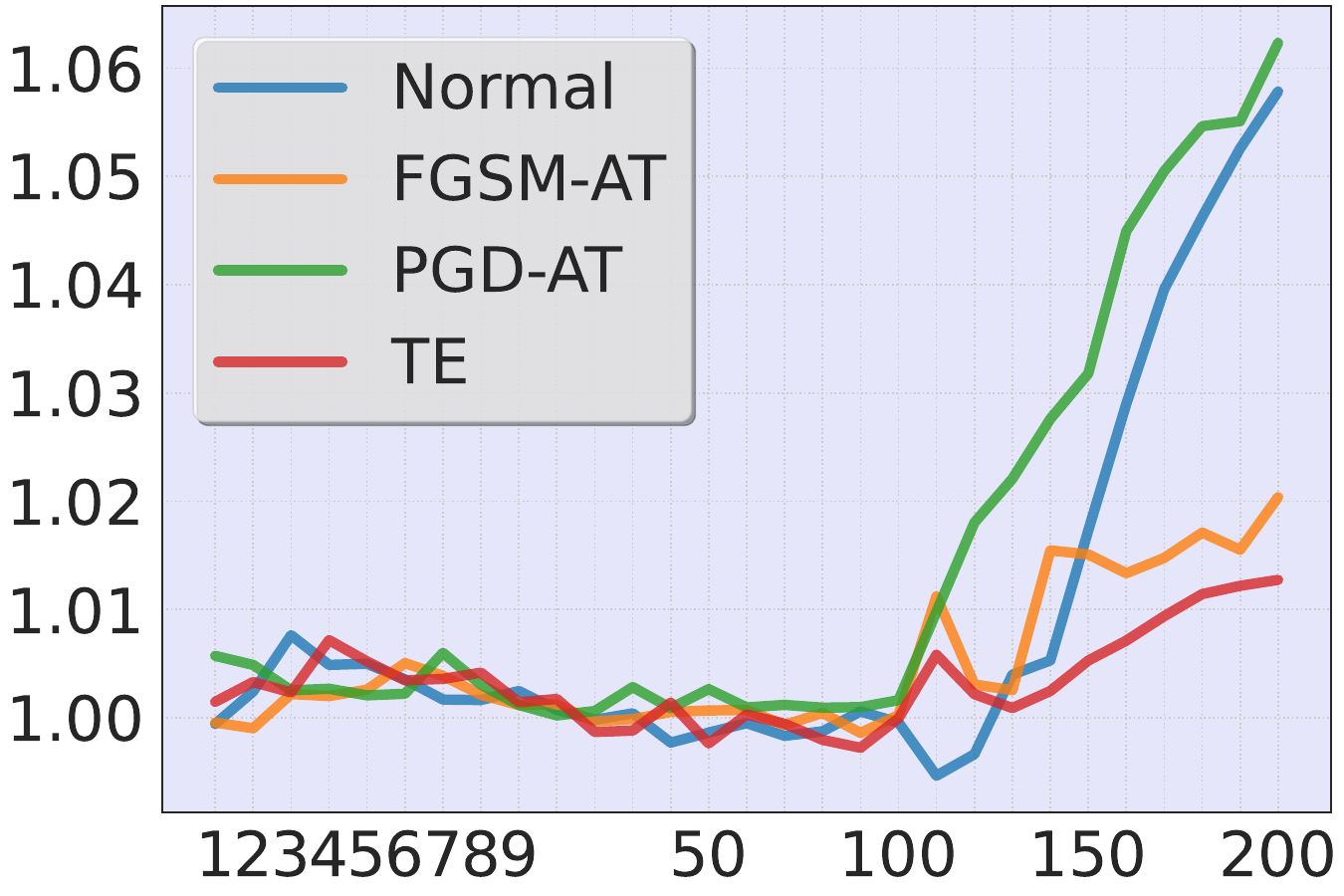}
\caption{\textbf{KS} on C-NTKs}
\label{fig:ks1} 
\end{subfigure}\\
\begin{subfigure}[b]{0.3\linewidth}
\centering
\includegraphics[width=\linewidth]{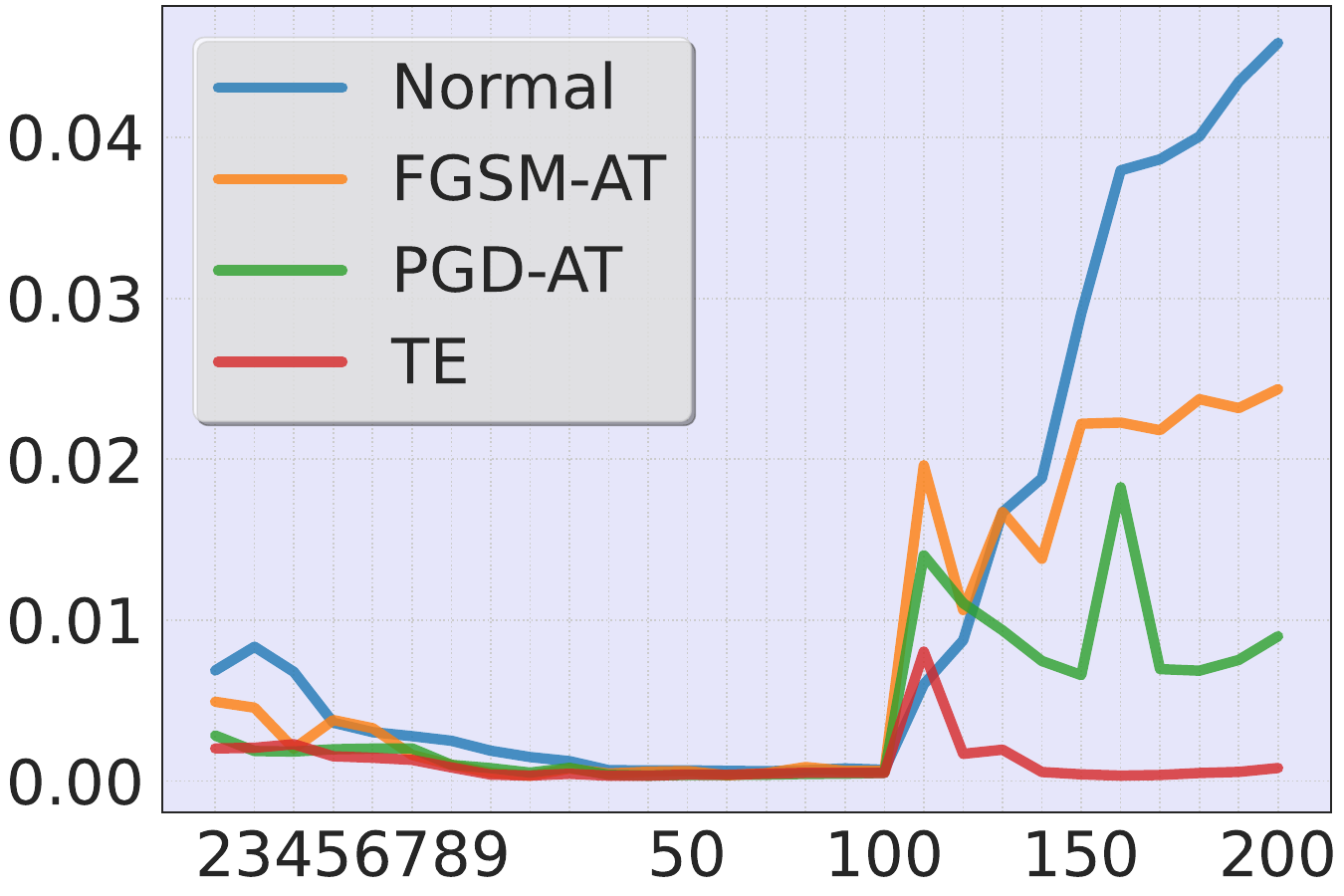}
\caption{\textbf{KD} on AE-NTKs}
\label{fig:kd-ker2} 
\end{subfigure}
\begin{subfigure}[b]{0.3\linewidth}
\centering
\includegraphics[width=\linewidth]{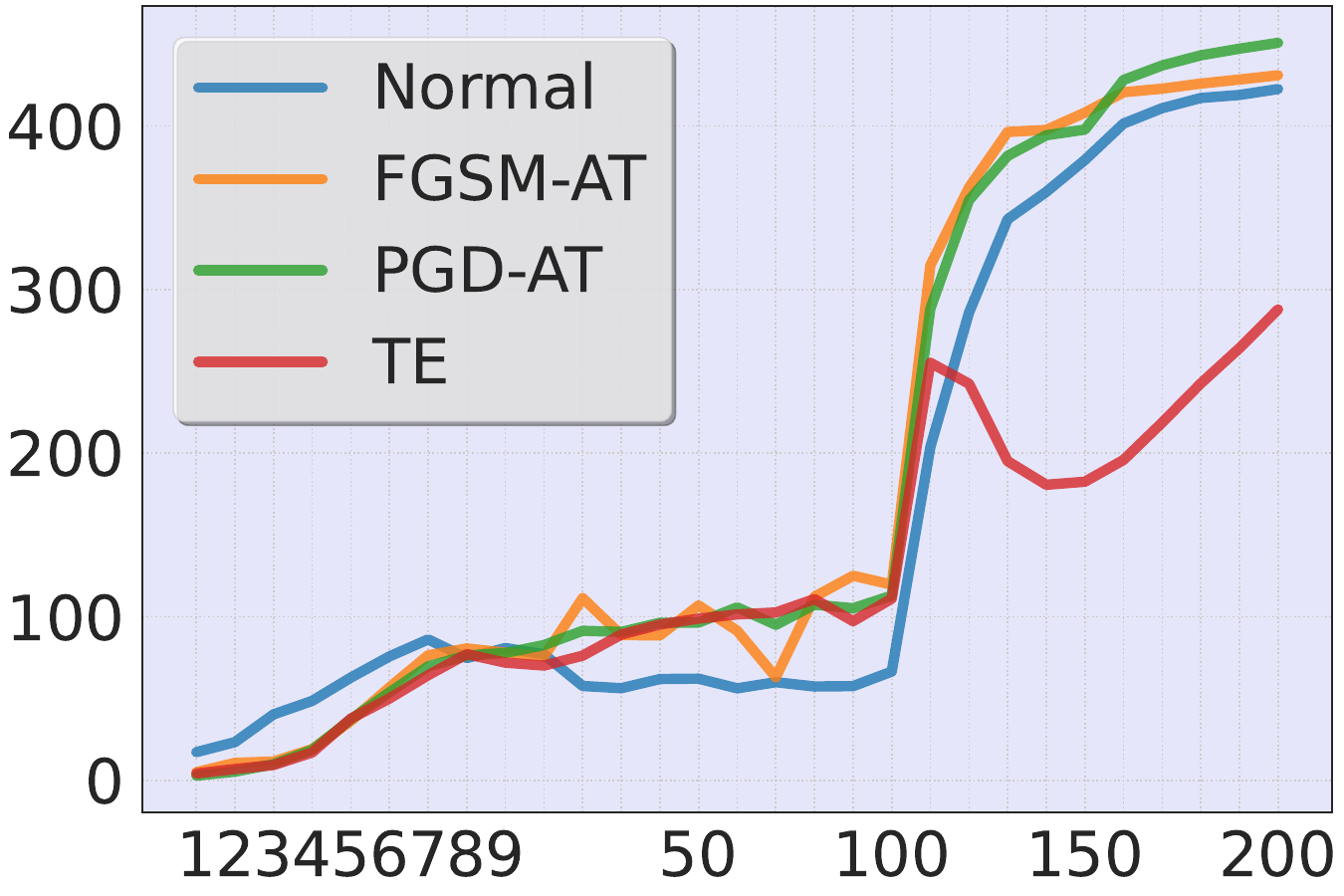}
\caption{\textbf{KER} on AE-NTKs}
\label{fig:kd-ker4} 
\end{subfigure}
\begin{subfigure}[b]{0.3\linewidth}
\centering
\includegraphics[width=\linewidth]{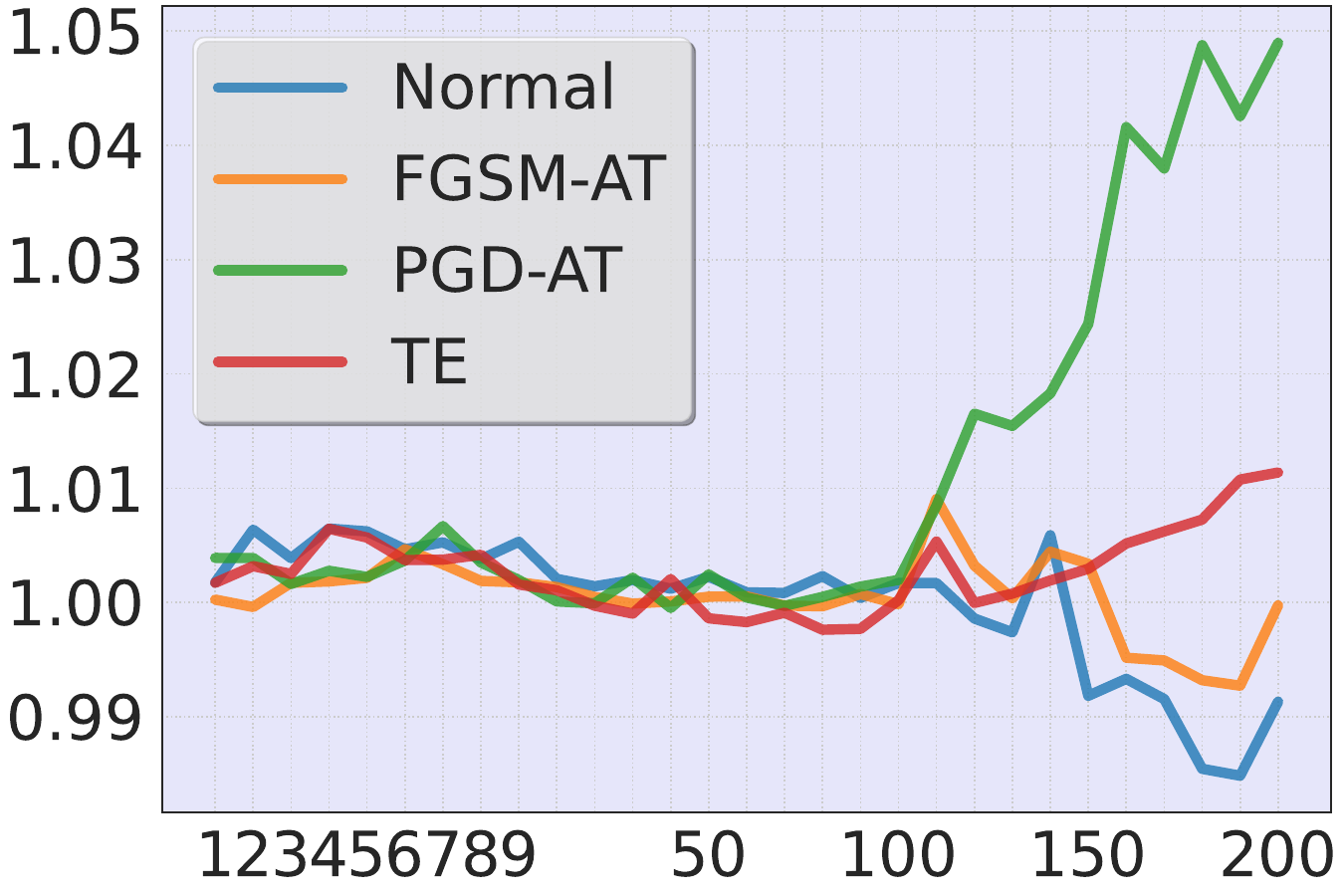}
\caption{\textbf{KS} on AE-NTKs}
\label{fig:ks2}  
\end{subfigure}
\vspace{-5pt}
\caption{\textbf{KD}, \textbf{KER}, and \textbf{KS} in training. x-axis: training epoch; y-axis: metrics.}
\label{fig:kd-ker}
\vspace{-13pt}
\end{figure*}

\begin{figure*}[h]
\centering
\begin{subfigure}[b]{0.3\linewidth}
\centering
\includegraphics[width=\linewidth]{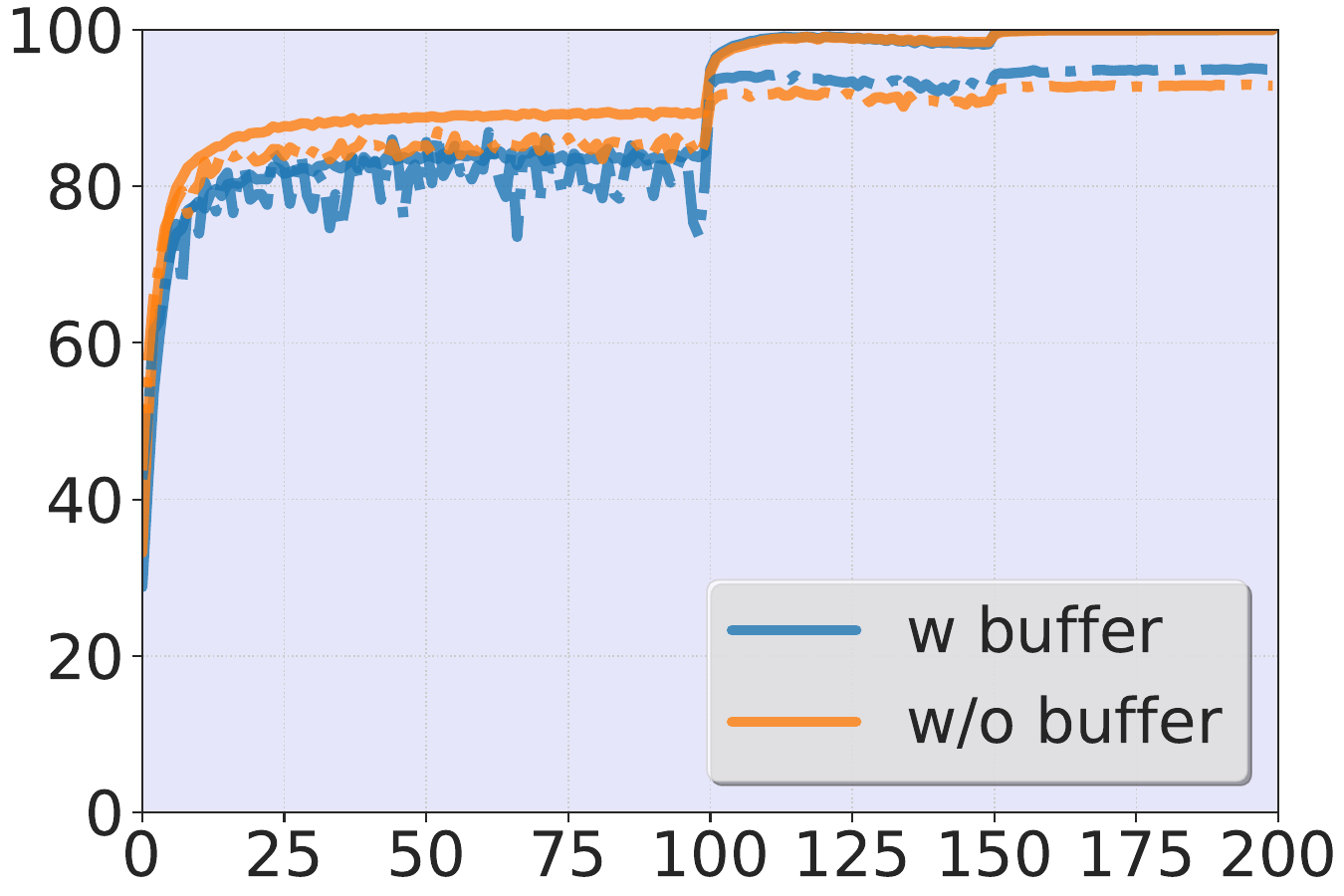}
\caption{Normal training on clean data}
\label{fig:bn-acc1} 
\end{subfigure}
\begin{subfigure}[b]{0.3\linewidth}
\centering
\includegraphics[width=\linewidth]{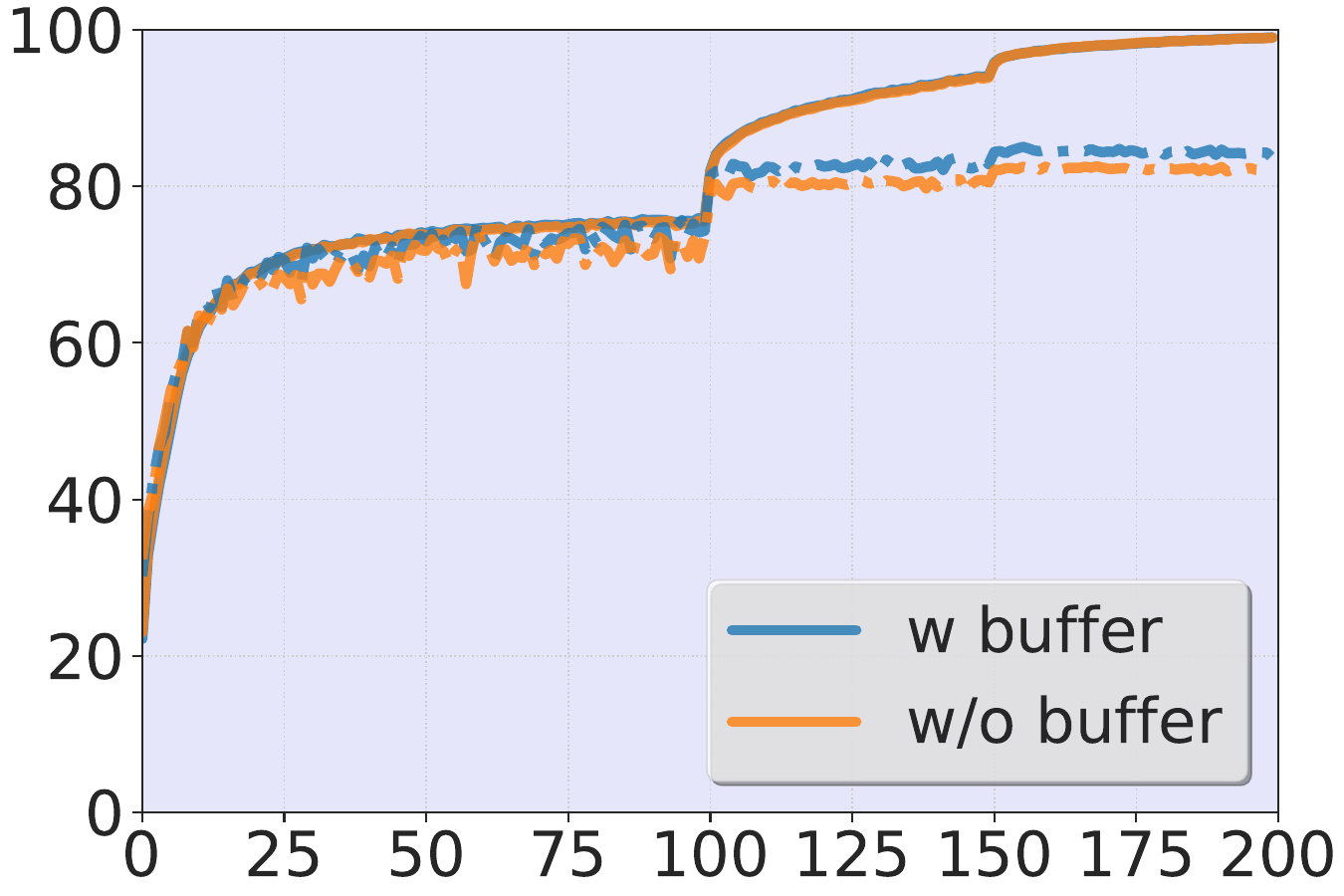}
\caption{PGD-AT on clean data}
\label{fig:bn-acc2} 
\end{subfigure}
\begin{subfigure}[b]{0.3\linewidth}
\centering
\includegraphics[width=\linewidth]{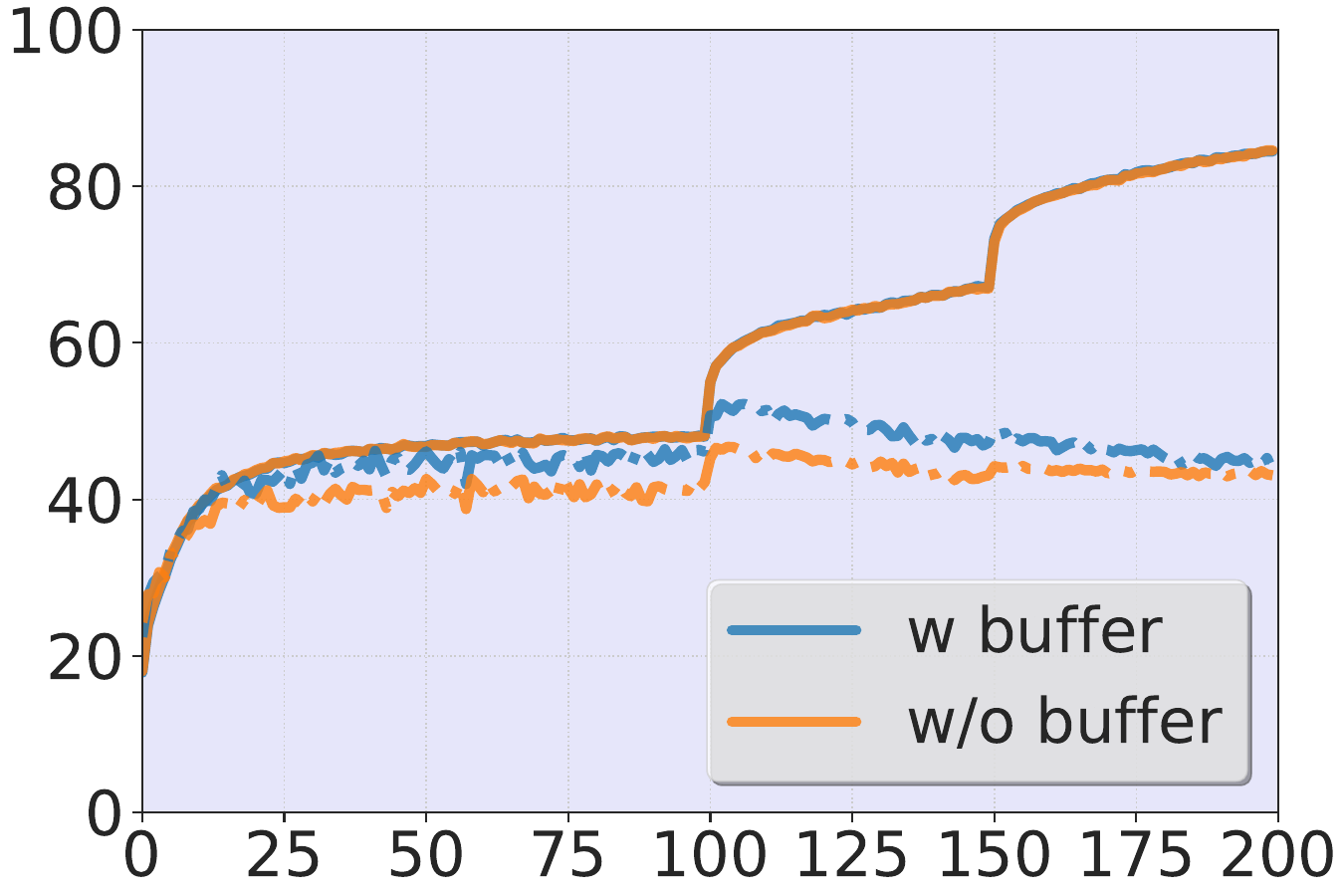}
\caption{PGD-AT on AEs}
\label{fig:bn-acc3} 
\end{subfigure}
\vspace{-5pt}
\caption{Accuracy in different settings. x-axis: training epoch; y-axis: accuracy. Solid and dash lines are for training set and test set accuracy, respectively.}
\label{fig:bn-acc} 
\vspace{-15pt}
\end{figure*}

\begin{figure*}[h]
\centering
\begin{subfigure}[b]{0.23\linewidth}
\centering
\includegraphics[width=\linewidth]{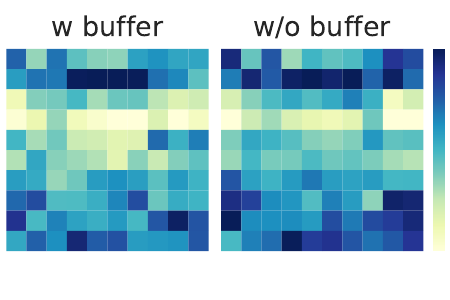}
\vspace{-20pt}
\caption{$M_\mathrm{ks}(\Theta_{10})$ on C-NTKs}
\label{fig:bn-ks-cl1} 
\end{subfigure}
\begin{subfigure}[b]{0.23\linewidth}
\centering
\includegraphics[width=\linewidth]{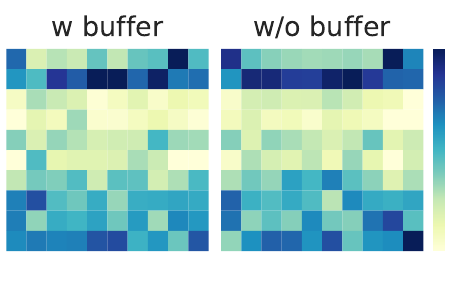}
\vspace{-20pt}
\caption{$M_\mathrm{ks}(\Theta_{100})$ on C-NTKs}
\label{fig:bn-ks-cl2} 
\end{subfigure}
\begin{subfigure}[b]{0.23\linewidth}
\centering
\includegraphics[width=\linewidth]{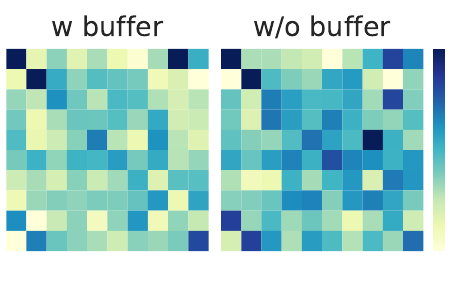}
\vspace{-20pt}
\caption{$M_\mathrm{ks}(\Theta_{150})$ on C-NTKs}
\label{fig:bn-ks-cl3} 
\end{subfigure}
\begin{subfigure}[b]{0.23\linewidth}
\centering
\includegraphics[width=\linewidth]{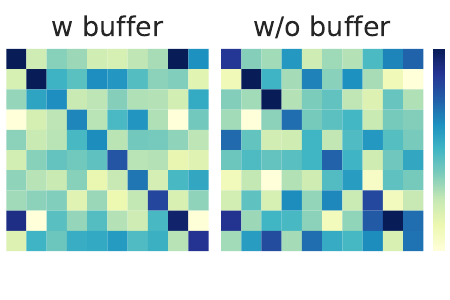}
\vspace{-20pt}
\caption{$M_\mathrm{ks}(\Theta_{200})$ on C-NTKs}
\label{fig:bn-ks-cl4} 
\end{subfigure}
\vspace{-10pt}
\caption{\textbf{KS} strengths during PGD-AT under different BN strategies on C-NTKs. Both the x-axis and y-axis represent the classes in CIFAR-10 in the same order, from left to right and from up to down, respectively, which is the same for the other \textbf{KS} strength plots.}
\label{fig:bn-ks-cl} 
\vspace{-10pt}
\end{figure*}

\section{Analysis of NTK Evolution in AT}
\label{sec:atevolution}

Prior studies~\cite{loo_evolution_2022,tsilivis_what_2022} only focus on NTK at the early stage of AT. However, it can change along with the training process. We experimentally demonstrate the evolution of NTK with different AT strategies, which can shed light on the improvement strategies.

\subsection{Experiment Configurations}

\textbf{Setup}. Due to the overwhelming computation complexity, it is too slow to compute the full NTK on the entire dataset. Instead, we randomly sample some data from the dataset to calculate ENTK~\cite{loo_evolution_2022}. We use NTK $\Theta(\cdot,\cdot)$ to represent ENTK, if no ambiguity in the followings. Specifically, we randomly select 500 data evenly from all classes to compute ENTK. Similar to previous works~\cite{loo_evolution_2022,tsilivis_what_2022}, we calculate NTK after every training epoch, instead of every gradient descent step. For kernels calculated on clean samples and AEs, we call them C-NTKs and AE-NTKs, respectively.

We consider normal training and three specific AT methods under $l_\infty$-norm (FGSM-AT, PGD-AT~\cite{madry_towards_2018}, and TE~\cite{dong_exploring_2022}). We choose CIFAR-10 and CIFAR-100 datasets, ResNet-18 model~\cite{he_deep_2016} and SGD optimizer~\cite{ruder_overview_2016}. The total number of training epochs is 200, and the weight decay is 0.0005. The learning rate at the beginning is 0.1, and is decayed at the 100-th and 150-th epochs by a factor of 0.1. \textbf{This learning rate schedule follows the previous work~\cite{rice_overfitting_2020}, which becomes a standard for AT.} The batch size is 128. We further use the most common data augmentation methods over the training set: random cropping and random horizontal flipping. For FGSM-AT, we adopt FGSM~\cite{goodfellow_explaining_2015} to generate AEs, with hyperparameters $\epsilon=8/255$ and $\alpha=8/255$. For PGD-AT and TE, we adopt PGD-10 to generate AEs, with hyperparameters $\epsilon=8/255$, $\alpha=2/255$, and 10 steps. For robustness evaluation, we choose PGD-20 to generate AEs on the test set under $l_\infty$-norm. All experiments are conducted with the \textit{functorch} library of Pytorch-2.0. Additionally, in the supplementary materials, we study how model initialization methods can influence kernel dynamics to generalize our conclusions to other deep learning frameworks.

\noindent\textbf{Metrics}. As aforementioned, NTK has the shape of $\vert \mathcal{X} \vert \times \vert \mathcal{X} \vert \times n_\mathrm{out} \times  n_\mathrm{out}$. In most cases, we mainly care about the kernel for $f^i(\mathcal{X})$, i.e., the specific output for class $i$ in NTK. The corresponding entry in NTK is described by $\Theta(\mathcal{X}, \mathcal{X})_t^{i,i} = \nabla_{\theta_{t}}f_{t}^{i}(\mathcal{X})\nabla_{\theta_{t}} f_{t}^{i}(\mathcal{X})^T$, which is a square matrix with the shape of $\vert \mathcal{X} \vert \times \vert \mathcal{X} \vert$. To study all classes in the dataset $\mathcal{D}$, we average the corresponding entries in NTK, i.e., $\hat{\Theta}(\mathcal{X}, \mathcal{X})_t = \frac{\sum_{i=1}^{n_\mathrm{out}} \Theta^{i,i}_t}{n_\mathrm{out}}$, where $n_\mathrm{out}$ is the number of classes in $\mathcal{D}$. $\hat{\Theta}(\mathcal{X}, \mathcal{X})_t$ is called the ``\textit{traced kernel}'' in previous works~\cite{shan_theory_2021,loo_evolution_2022}. We adopt three kernel metrics proposed by prior works:

1. \textit{Kernel Distance}~\cite{loo_evolution_2022}: this is defined as the similarity between two kernels, i.e.,
\begin{align*}
    \mathrm{\textbf{KD}}(\hat{\Theta}_{t_1}, \hat{\Theta}_{t_2}) = 1 - \frac{\mathrm{Tr}(\hat{\Theta}_{t_1}^T\hat{\Theta}_{t_2})}{\Vert \hat{\Theta}_{t_1} \Vert_F\Vert \hat{\Theta}_{t_2} \Vert_F}
\end{align*}
where $\mathrm{Tr(\cdot)}$ is the trace of the square matrix, and $\Vert \cdot \Vert_F$ is the Frobenius norm for the matrix.

2. \textit{Kernel Effective Rank}~\cite{roy_effective_2007}: for a kernel's effective rank, we first do singular value decomposition (SVD) on it, i.e., $\hat{\Theta}_{t} = U \Sigma V^*$, where $U$ and $V$ are two unitary matrices with the shape of $\vert \mathcal{X}\vert \times \vert \mathcal{X}\vert$, $V^{*}$ is the conjugate transpose of $V$, and $\Sigma$ is a diagonal matrix containing singular values. Suppose the singular values satisfy $\sigma_1 \ge \sigma_2 \ge \dots \ge \sigma_n >0$, then the effective rank is defined as follows:
\begin{align*}
    \mathrm{\textbf{KER}}(\hat{\Theta}_t) = \exp({-\sum_{i=1}^n p_i \log p_i}),
\end{align*}
where $p_i = \frac{\sigma_i}{\sum_{j=1}^n \sigma_j}$. A large value of $\mathrm{\textbf{KER}}(\hat{\Theta}_t)$ implies that features in the kernel have more equal importance or the dataset is more complex~\cite{loo_evolution_2022}. Note that as we calculate NTK with respect to $\mathcal{X}$ and $\mathcal{X}$, $\hat{\Theta}_t$ is a positive semi-definite square matrix, which implies that the singular values in $\Sigma$ equal the eigenvalues of $\hat{\Theta}_t$. So, we use eigenvalues to represent the singular values in some cases to align with previous works' conclusions.

3. \textit{Kernel Specialization}~\cite{shan_theory_2021}: this metric studies how the kernel alignment~\cite{cortes_algorithms_2012} appears and evolves during the training process. We first define a matrix $M_\mathrm{ks}(\Theta_t)$ to represent the specialization strengths for a pair of two different classes $(i,j)$ in NTK $\Theta_t$:
\begin{align*}
    M_\mathrm{ks}^{i,j}(\Theta_t) = \frac{A(\Theta_t^{i,i}, \mathcal{Y}_j\mathcal{Y}_j^T)}{\frac{1}{n_\mathrm{out}}\sum_{l=1}^{n_\mathrm{out}}A(\Theta_t^{l,l},\mathcal{Y}_j\mathcal{Y}_j^T)},
\end{align*}
where $A(\Theta_t^{i,i}, \mathcal{Y}_j\mathcal{Y}_j^T) = 1 - \mathrm{\textbf{KD}}(\Theta_t^{i,i}, \mathcal{Y}_j\mathcal{Y}_j^T)$, and $\mathcal{Y}_j$ is a matrix of shape $\vert \mathcal{X} \vert \times 1$. The element in $\mathcal{Y}_j$ is 1 if the corresponding data $x$ has label $j$ or 0 otherwise. So, $M_\mathrm{ks}(\Theta_t)$ has the shape of $n_\mathrm{out} \times n_\mathrm{out}$. Then we deduce the kernel specialization from the corresponding $M_\mathrm{ks}(\Theta_t)$:
\begin{align*}
    \mathrm{\textbf{KS}}(\Theta_t) = \frac{\mathrm{Tr}(M_\mathrm{ks}(\Theta_t))}{n_\mathrm{out}}
\end{align*}
\cite{shan_theory_2021} proved that the kernel evolves different subkernels for each class $i$ to accelerate the learning process, and each subkernel is aligned with the target function with respect to its class. Therefore, the kernel specialization occurs if the kernel  consists of different subkernels. The larger value of $\mathrm{\textbf{KS}}(\Theta_t)$ is, the more severely kernel specialization happens in the kernel, which means that the model is mainly influenced by the feature of the target class.

\textbf{Note that we adopt these three metrics to align with previous studies~\cite{loo_evolution_2022,tsilivis_what_2022}. Furthermore, we do not propose or design new metrics, as the aforementioned metrics are enough to study NTK. Our contributions are in studying different training strategies and the complete training process.}

\begin{figure*}[h]
\centering
\begin{subfigure}[b]{0.45\linewidth}
\centering
\includegraphics[width=\linewidth]{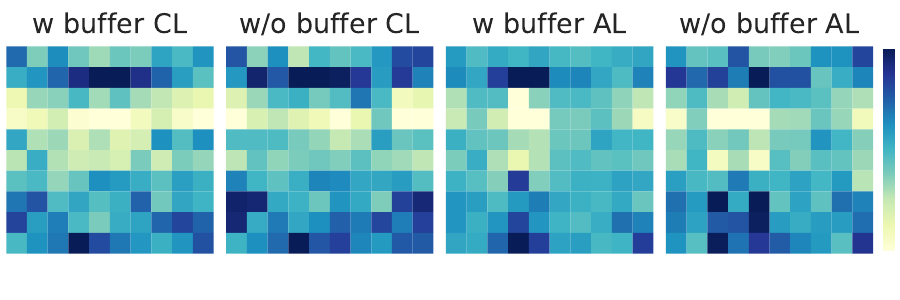}
\vspace{-20pt}
\caption{$M_\mathrm{ks}(\Theta_{10})$ on AE-NTKs}
\label{fig:bn-ks-ae1} 
\end{subfigure}
\begin{subfigure}[b]{0.45\linewidth}
\centering
\includegraphics[width=\linewidth]{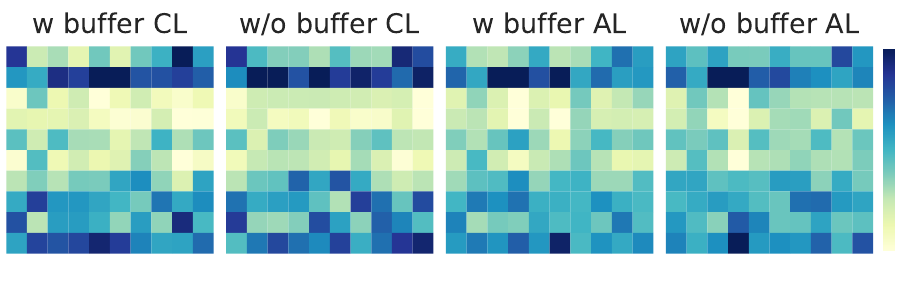}
\vspace{-20pt}
\caption{$M_\mathrm{ks}(\Theta_{100})$ on AE-NTKs}
\label{fig:bn-ks-ae2} 
\end{subfigure}\\
\begin{subfigure}[b]{0.45\linewidth}
\centering
\includegraphics[width=\linewidth]{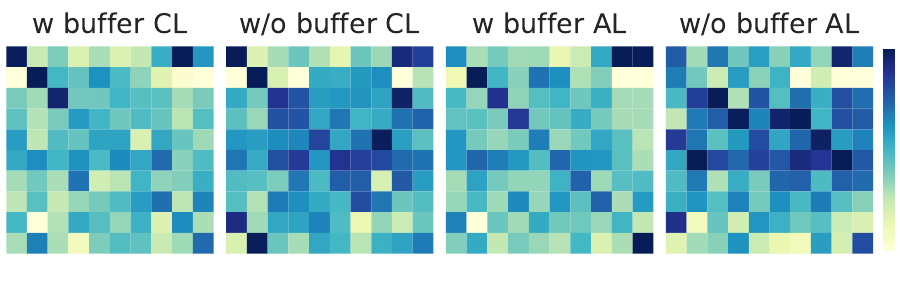}
\vspace{-20pt}
\caption{$M_\mathrm{ks}(\Theta_{150})$ on AE-NTKs}
\label{fig:bn-ks-ae3} 
\end{subfigure}
\begin{subfigure}[b]{0.45\linewidth}
\centering
\includegraphics[width=\linewidth]{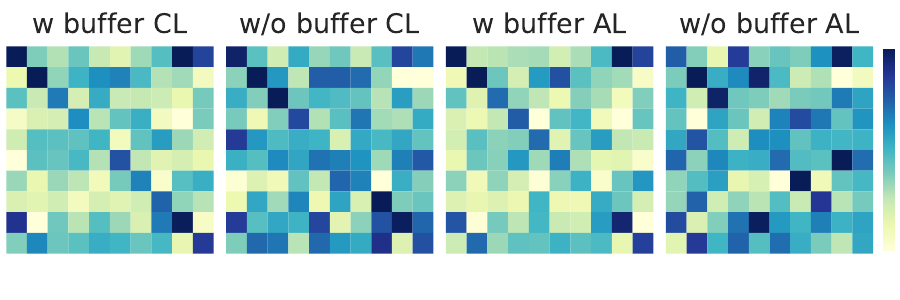}
\vspace{-20pt}
\caption{$M_\mathrm{ks}(\Theta_{200})$ on AE-NTKs}
\label{fig:bn-ks-ae4} 
\end{subfigure}
\vspace{-10pt}
\caption{\textbf{KS} strengths during PGD-AT under different BN strategies on AE-NTKs.}
\label{fig:bn-ks-ae} 
\vspace{-15pt}
\end{figure*}

\subsection{Threefold Evolution of NTK in AT}
\label{sec:tena}

In Figure~\ref{fig:kd-ker}, we show the training dynamics on CIFAR-10 using three metrics: \textbf{KD} between training epochs $t$ and $t-1$, \textbf{KER} at training epoch $t$, and \textbf{KS} under clean label (CL), which are the ground-truth labels for the input data, at training epoch $t$. We show how these metrics help us better understand AT and normal training. For \textbf{KD}, it is clear that before the learning decay at the 100-th epoch, the kernel distance is close to zero for all training strategies after a very quick ``kernel learning'' process (Figures~\ref{fig:kd-ker1} and~\ref{fig:kd-ker2}). The ``lazy learning'' appears between the 10-th epoch and 100-th epoch, where the kernel stays almost unchanged. After the learning rate decay, the kernel sharply changes until the training completes at a smaller learning rate, which can be seen as an additional kernel learning phase~\cite{shan_theory_2021}. On the other hand, AE-NTKs show significant differences between these training approaches. But AE-NTKs of models with AT have smaller \textbf{KD} than the ones with normal training, which indicates the kernel converges to a specific robust one~\cite{loo_evolution_2022}.

This ``kernel learning''-``lazy training''-``kernel learning'' process can be found from \textbf{KER} in Figures~\ref{fig:kd-ker3} and~\ref{fig:kd-ker4} as well. In detail, \textbf{KER} quickly increases at a very early training stage (before the first 10 epochs), then keeps stable (between the 10-th epoch and 100-the epoch), and finally increases after the learning rate decay. Furthermore, we conclude that \textbf{KER} cannot reflect the robustness or other robust-related properties of the models. This is different from the previous work~\cite{loo_evolution_2022}, which only studied the first 100 training epochs to obtain the conclusion that the robust model's kernel has higher \textbf{KER}. Specifically, before the first 100 epochs, \textbf{KER} of the robust model's kernels is higher. After the learning rate decay, the kernels of models with PGD-AT, FGSM-AT, and normal training have similar \textbf{KER}, which is higher than the value of the model with TE. \textbf{Therefore, we argue that \textbf{KER} cannot really or correctly reflect the robustness of the model.}  

For \textbf{KS}, we find that it can reflect the model's robustness on AE-NTKs. Specifically, if \textbf{KS} on AE-NTKs is close to 1, the corresponding models will have higher robustness. However, \textbf{KS} on C-NTKs does not have this property. We hypothesize this is because robust models consider features from other classes to classify AEs, while non-robust models mainly depend on class-related features. On the other hand, it is surprising that \textbf{KS} on C-NTKs and AE-NTKs disagree, indicating that the models adopt different features~\cite{ilyas_adversarial_2019,tsilivis_what_2022} to predict the clean samples and AEs. This will be an interesting future direction to improve model's clean accuracy under AT. One possible explanation is that the feature selection phenomenon~\cite{baratin_implicit_2021} is individually applied to clean samples and AEs. 

From the above observations, we find that the kernels of models trained with different methods are frozen at the early training stage (``lazy training''). We further provide a case study on TRADES~\cite{zhang_theoretically_2019} in the supplementary materials, and obtain the same conclusion. This provides an opportunity to reduce the training cost of different AT methods. We provide a case study in Section~\ref{sec:cs2} to prove the possibility. 

\begin{figure}[h]
\begin{subfigure}[b]{0.45\linewidth}
\centering
\includegraphics[width=\linewidth]{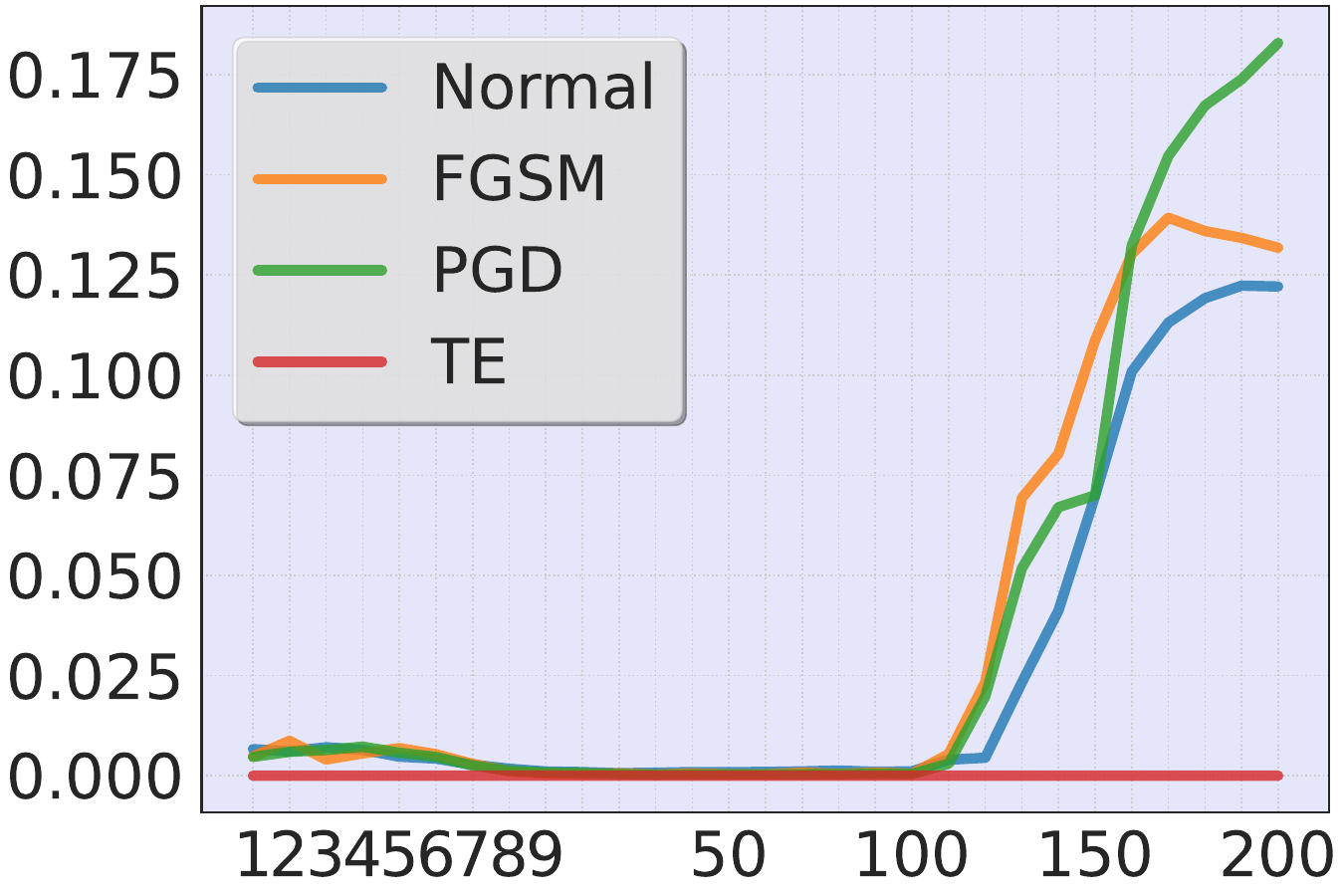}
\caption{\textbf{KD} on C-NTKs}
\label{fig:kd1} 
\end{subfigure}
\begin{subfigure}[b]{0.45\linewidth}
\centering
\includegraphics[width=\linewidth]{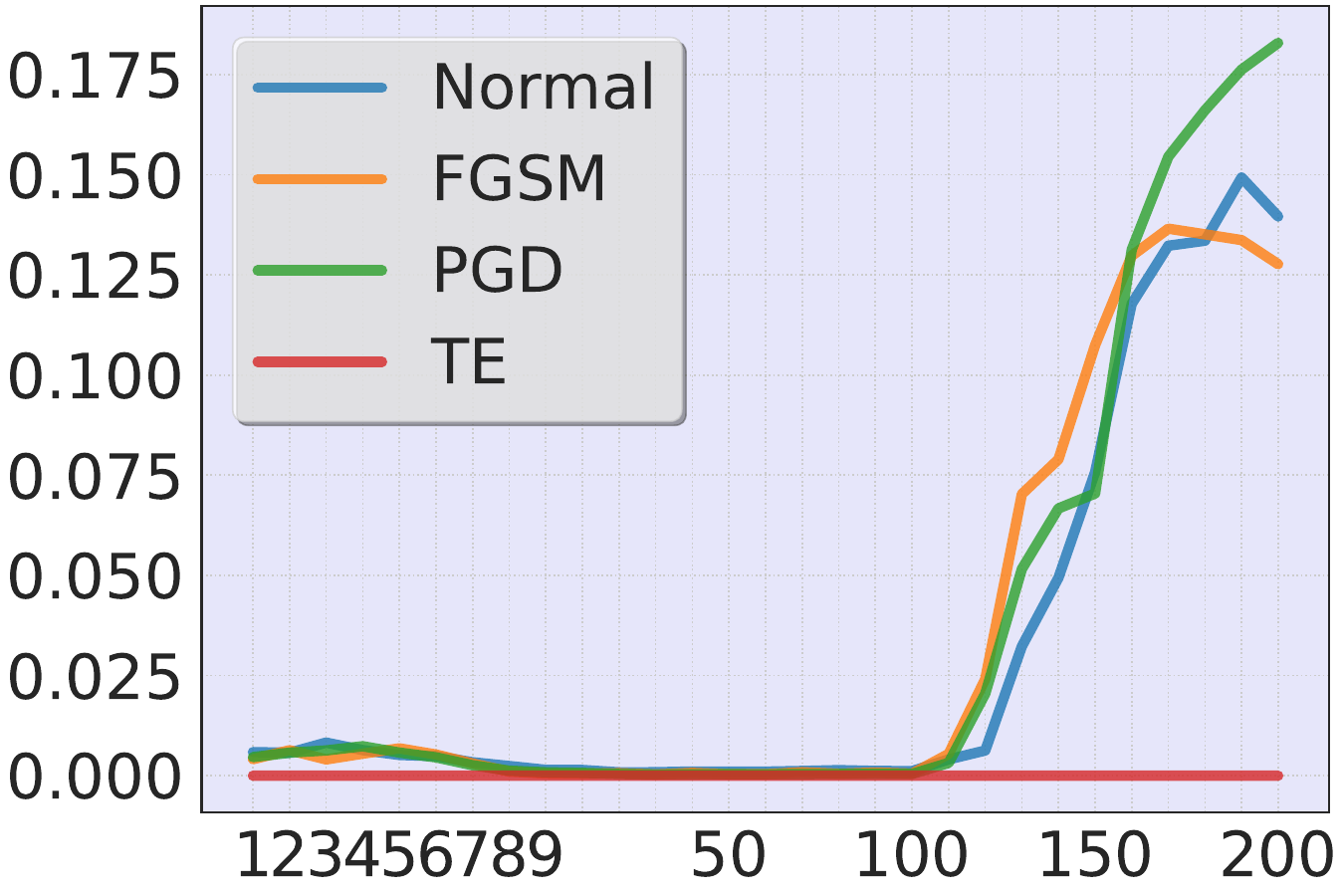}
\caption{\textbf{KD} on AE-NTKs}
\label{fig:kd2} 
\end{subfigure}
\vspace{-10pt}
\caption{\textbf{KD} between different strategies.}
\label{fig:kd}
\vspace{-13pt}
\end{figure}

\begin{figure*}[h]
\centering
\begin{subfigure}[b]{0.24\linewidth}
\centering
\includegraphics[width=\linewidth]{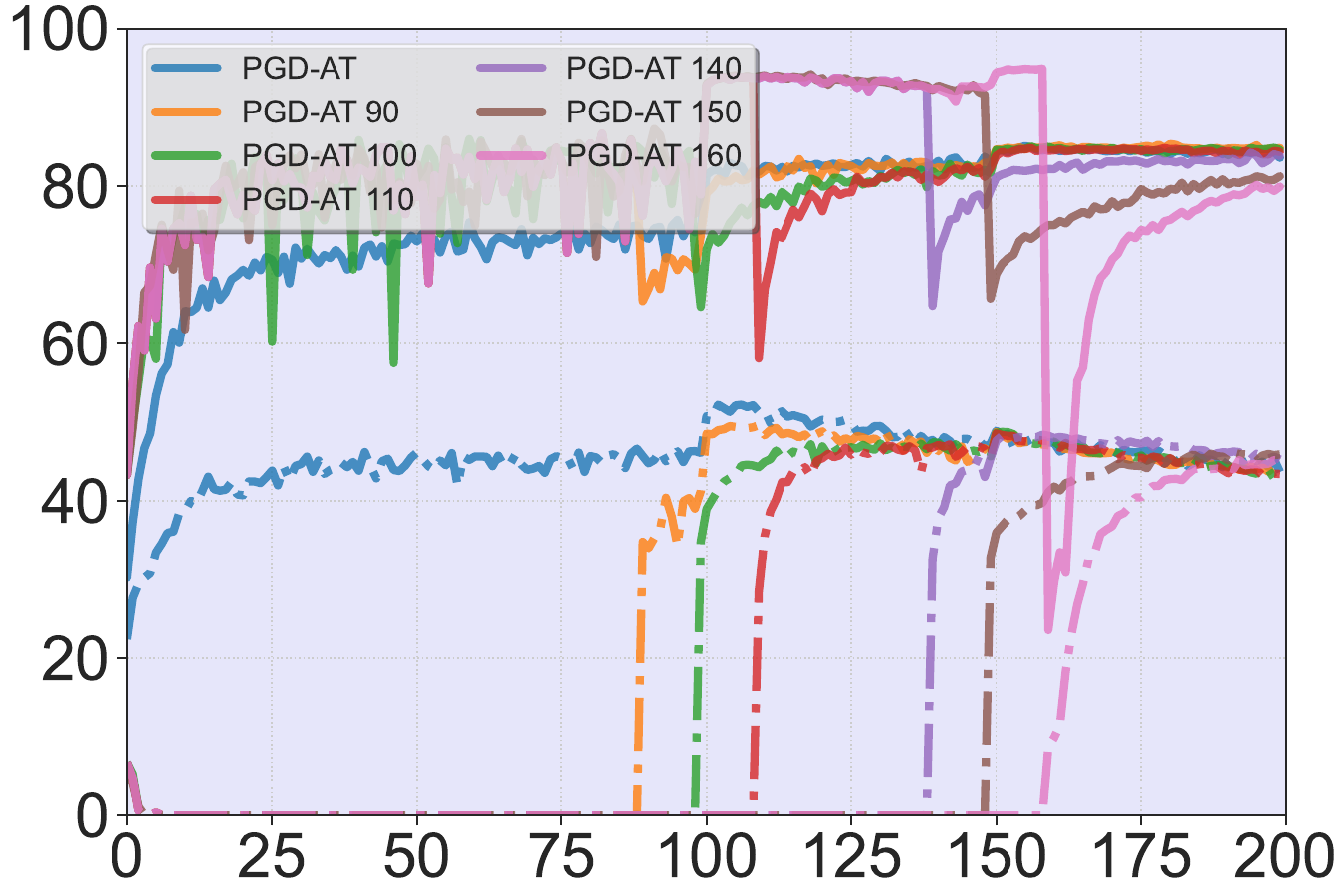}
\vspace{-15pt}
\caption{PGD-AT on CIFAR10}
\label{fig:app-rt1} 
\end{subfigure}
\begin{subfigure}[b]{0.24\linewidth}
\centering
\includegraphics[width=\linewidth]{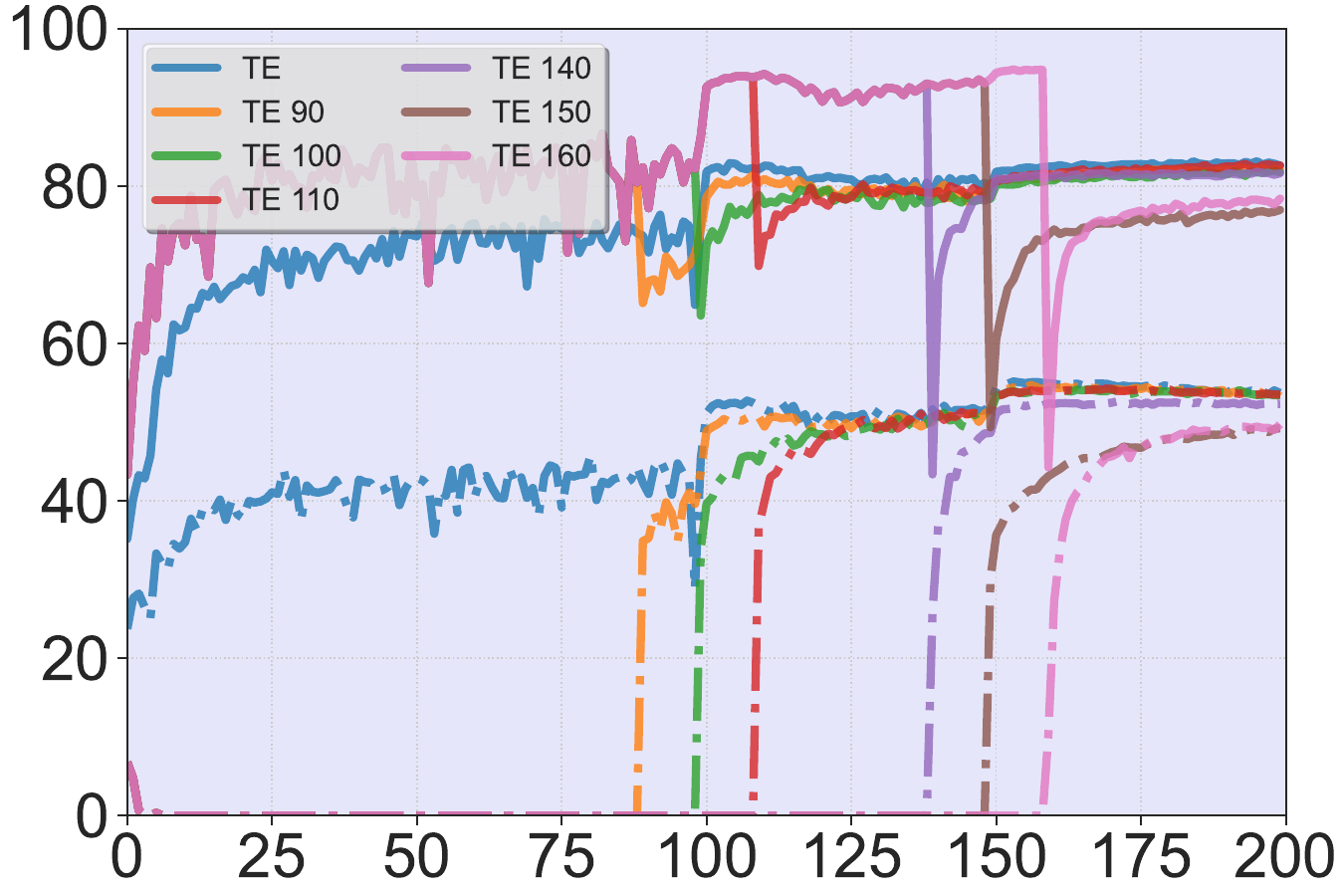}
\vspace{-15pt}
\caption{TE on CIFAR10}
\label{fig:app-rt2} 
\end{subfigure}
\begin{subfigure}[b]{0.24\linewidth}
\centering
\includegraphics[width=\linewidth]{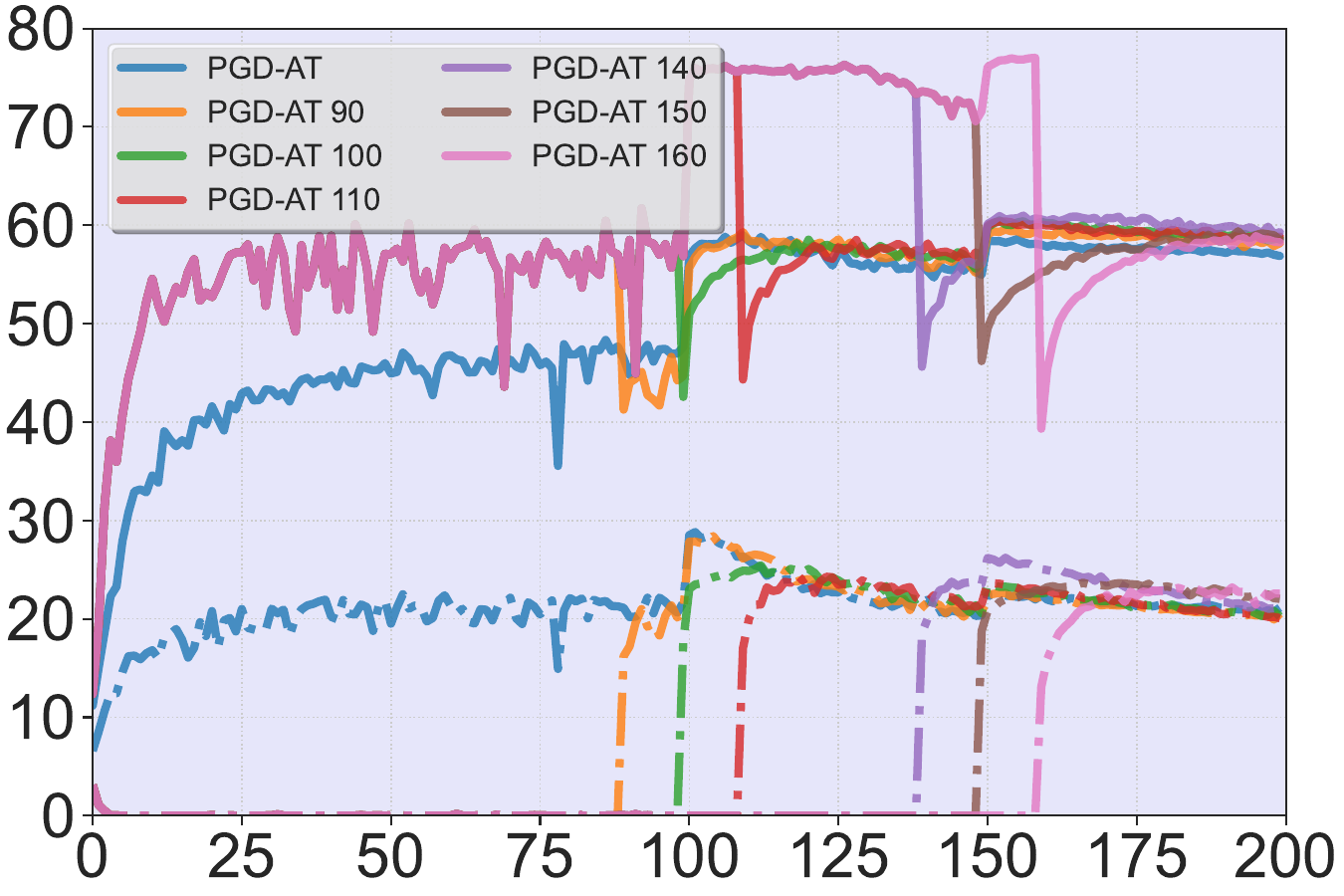}
\vspace{-15pt}
\caption{PGD-AT on CIFAR100}
\label{fig:app-rt3} 
\end{subfigure}
\begin{subfigure}[b]{0.24\linewidth}
\centering
\includegraphics[width=\linewidth]{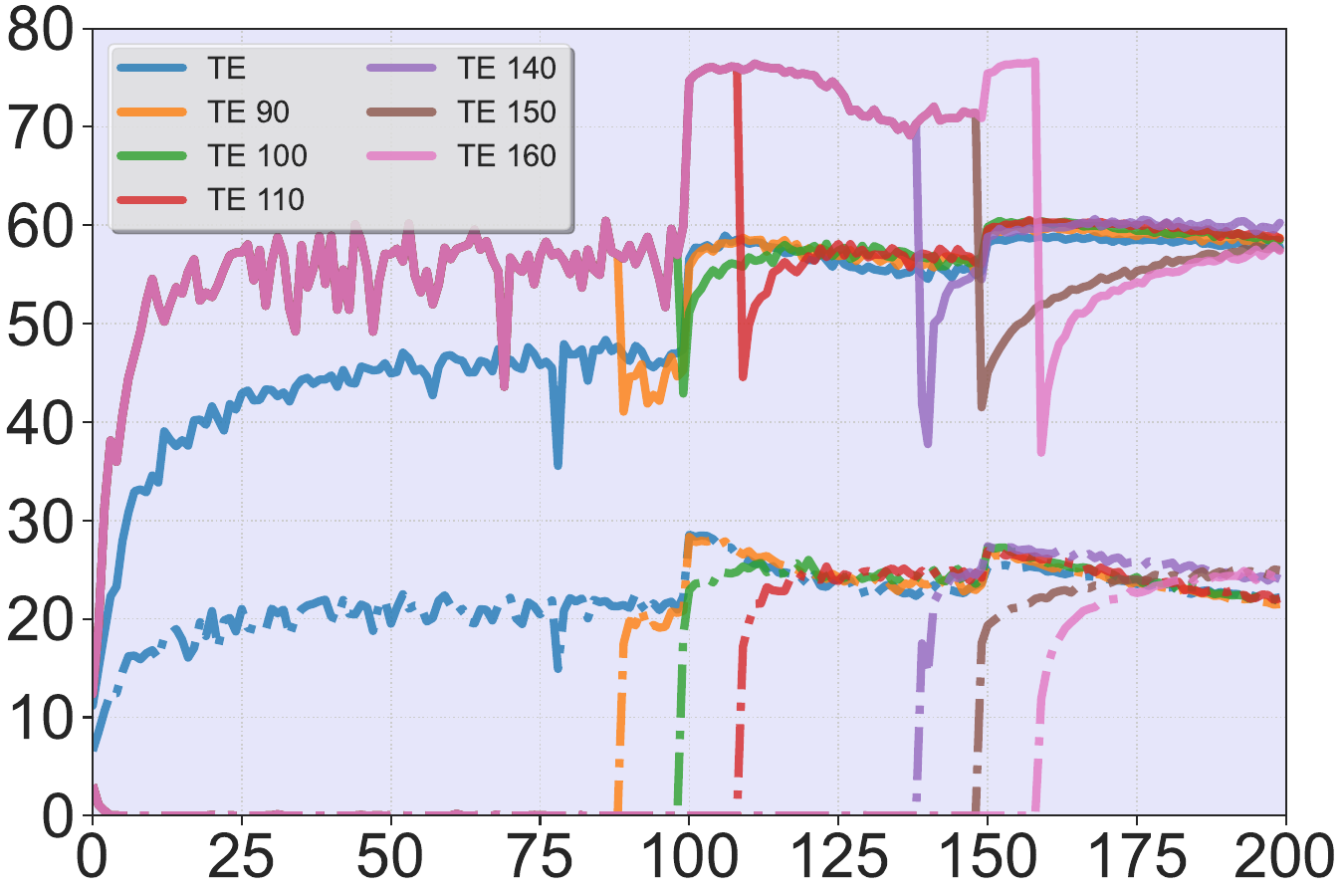}
\vspace{-15pt}
\caption{TE on CIFAR100}
\label{fig:app-rt4} 
\end{subfigure}
\vspace{-5pt}
\caption{New training paradigm to reduce the time cost of AT. x-axis: training epoch; y-axis: accuracy. Solid and dash lines represent the clean and robust accuracy, respectively.}
\label{fig:app-rt} 
\vspace{-10pt}
\end{figure*}

\begin{figure}[h]
\centering
\begin{subfigure}[b]{0.48\linewidth}
\centering
\includegraphics[width=\linewidth]{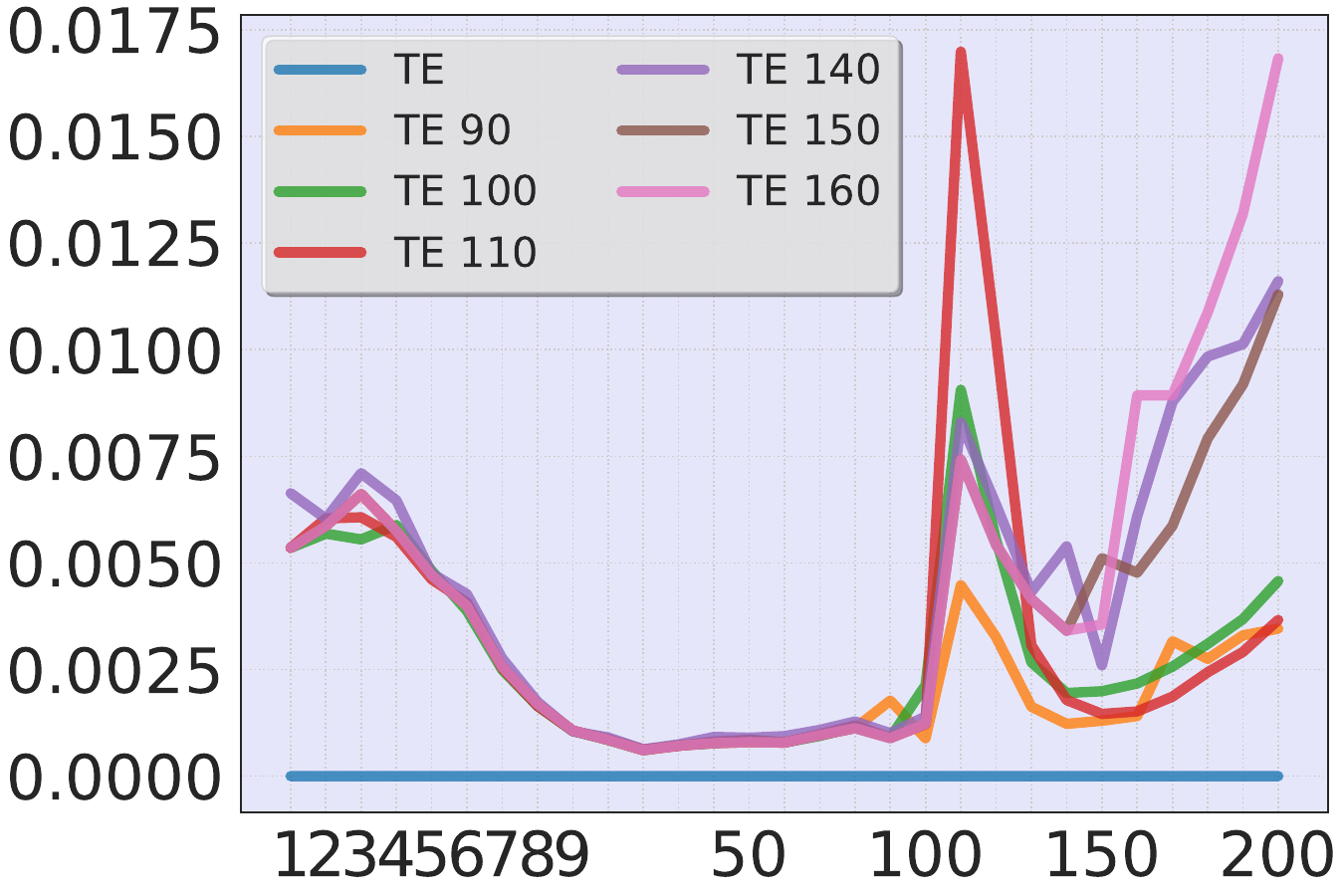}
\vspace{-15pt}
\caption{\textbf{KD} on C-NTKs}
\label{fig:diff1} 
\end{subfigure}
\begin{subfigure}[b]{0.48\linewidth}
\centering
\includegraphics[width=\linewidth]{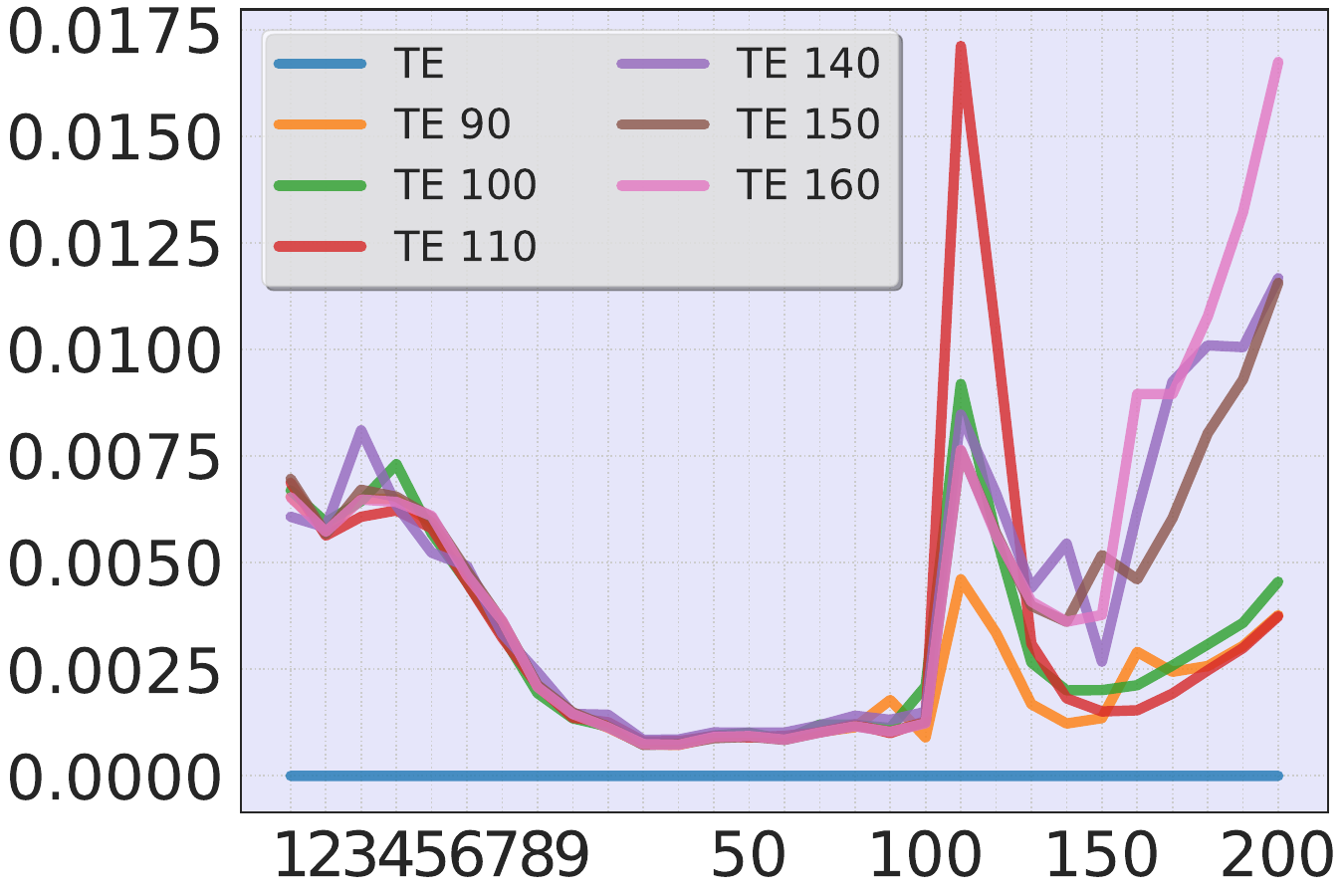}
\vspace{-15pt}
\caption{\textbf{KD} on AE-NTKs}
\label{fig:diff2} 
\end{subfigure}
\vspace{-5pt}
\caption{Kernel differences under different paradigms measured by \textbf{KD}.}
\label{fig:diff} 
\vspace{-13pt}
\end{figure}

\begin{figure*}[h]
\centering
\begin{subfigure}[b]{0.45\linewidth}
\centering
\includegraphics[width=\linewidth]{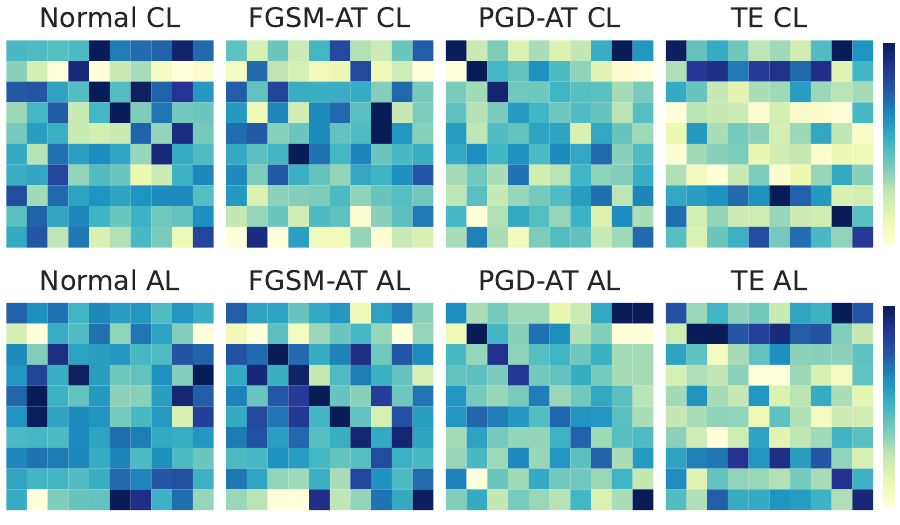}
\vspace{-15pt}
\caption{$M_\mathrm{ks}(\Theta_{150})$ on AE-NTKs}
\label{fig:fgsm-ks3} 
\end{subfigure}
\begin{subfigure}[b]{0.45\linewidth}
\centering
\includegraphics[width=\linewidth]{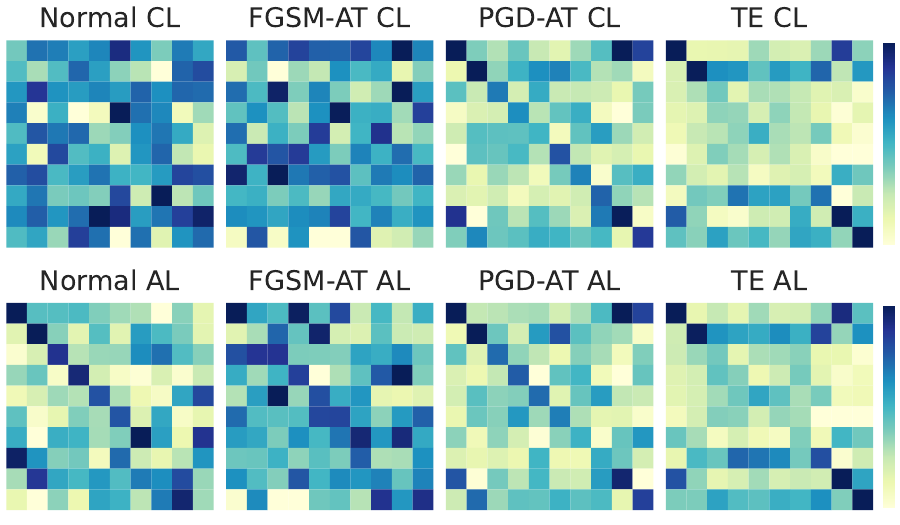}
\vspace{-15pt}
\caption{$M_\mathrm{ks}(\Theta_{200})$ on AE-NTKs}
\label{fig:fgsm-ks4} 
\end{subfigure}
\vspace{-5pt}
\caption{\textbf{KS} strengths under different training methods on AE-NTKs.}
\label{fig:fgsm-ks} 
\vspace{-10pt}
\end{figure*}

\begin{figure*}[hbt]
\centering
\begin{subfigure}[b]{0.25\linewidth}
\centering
\includegraphics[width=\linewidth]{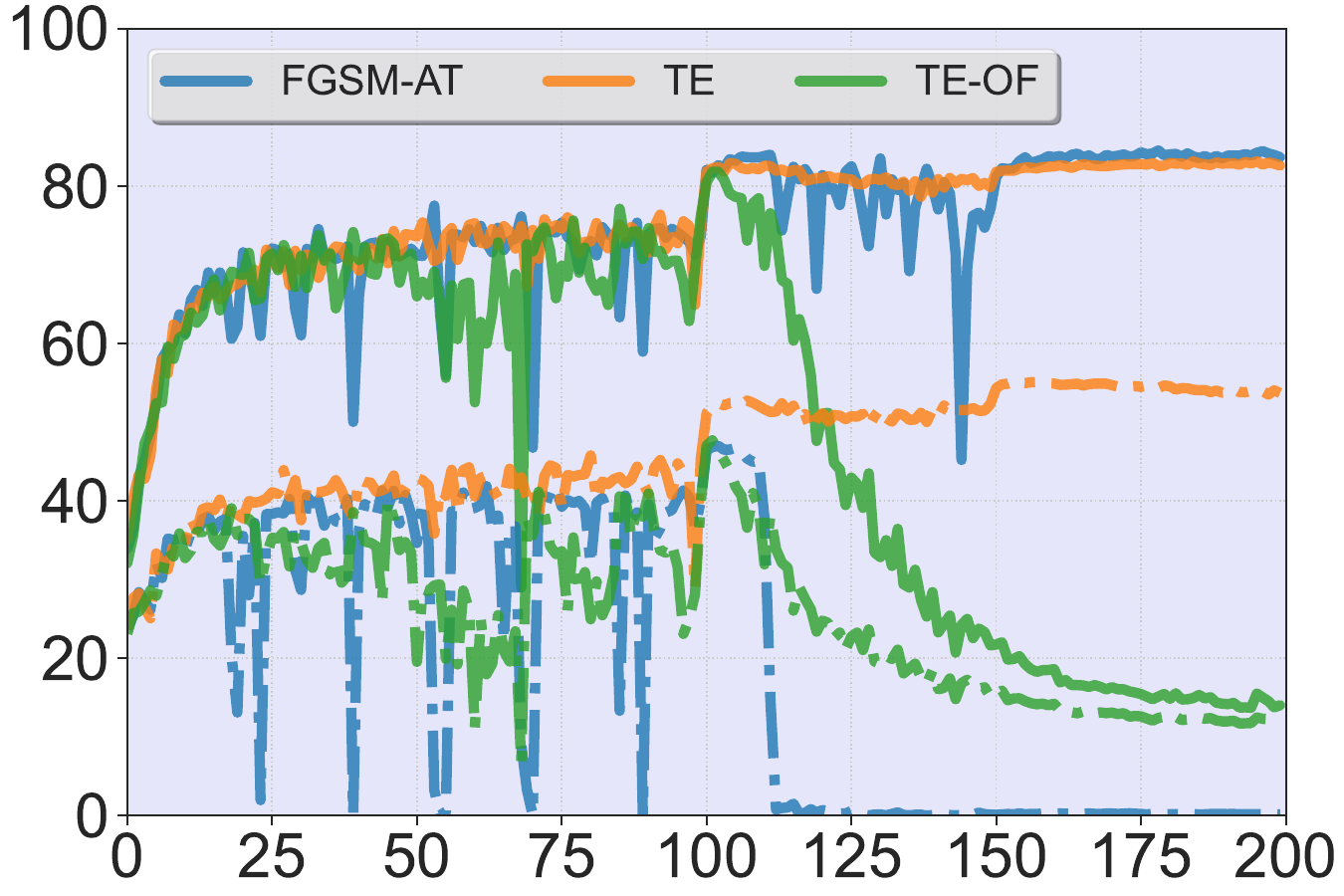}
\caption{Catastrophic Overfitting}
\label{fig:fgsm-prove1} 
\end{subfigure}
\begin{subfigure}[b]{0.25\linewidth}
\centering
\includegraphics[width=\linewidth]{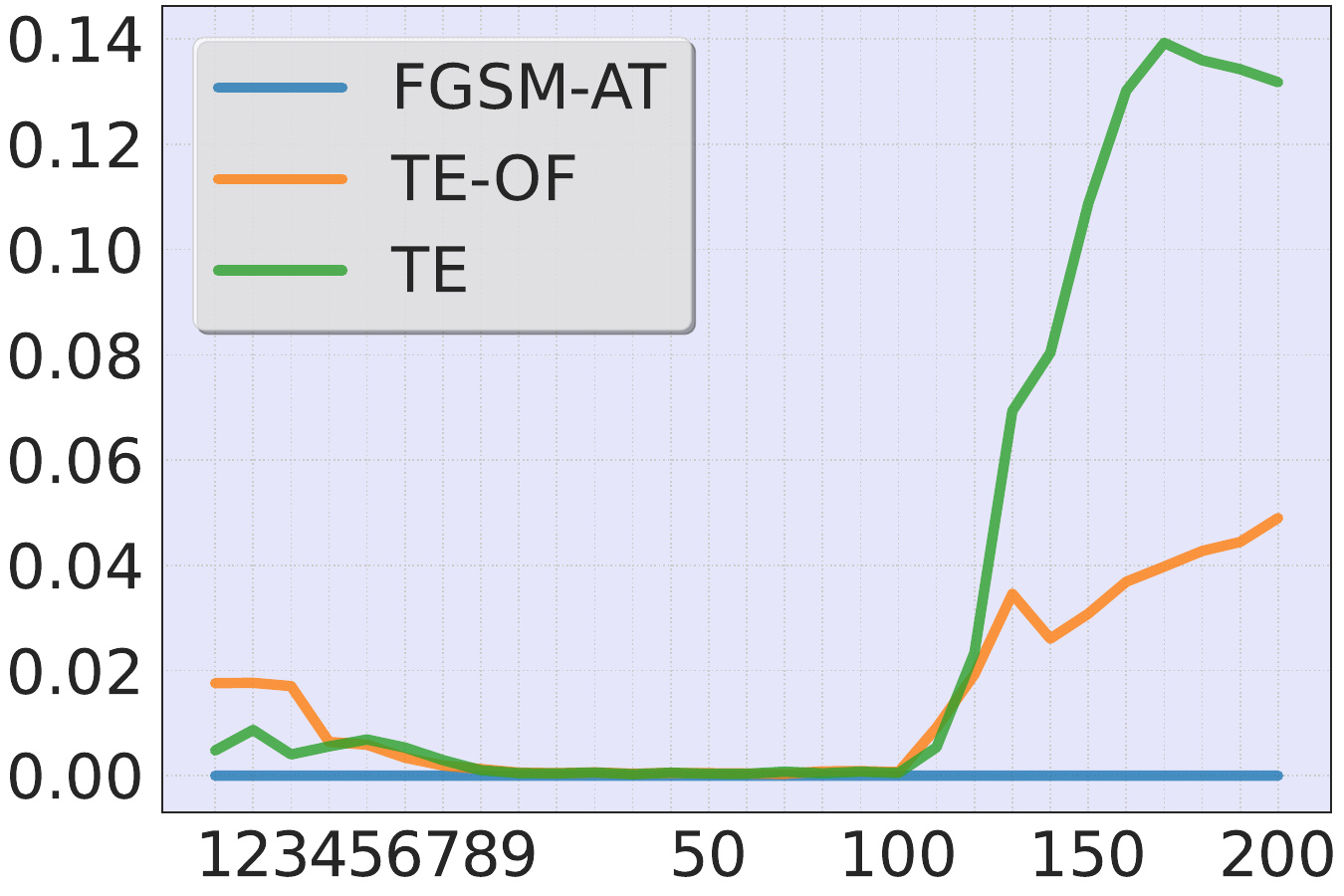}
\caption{\textbf{KD} on C-NTKs}
\label{fig:fgsm-prove2} 
\end{subfigure}
\begin{subfigure}[b]{0.25\linewidth}
\centering
\includegraphics[width=\linewidth]{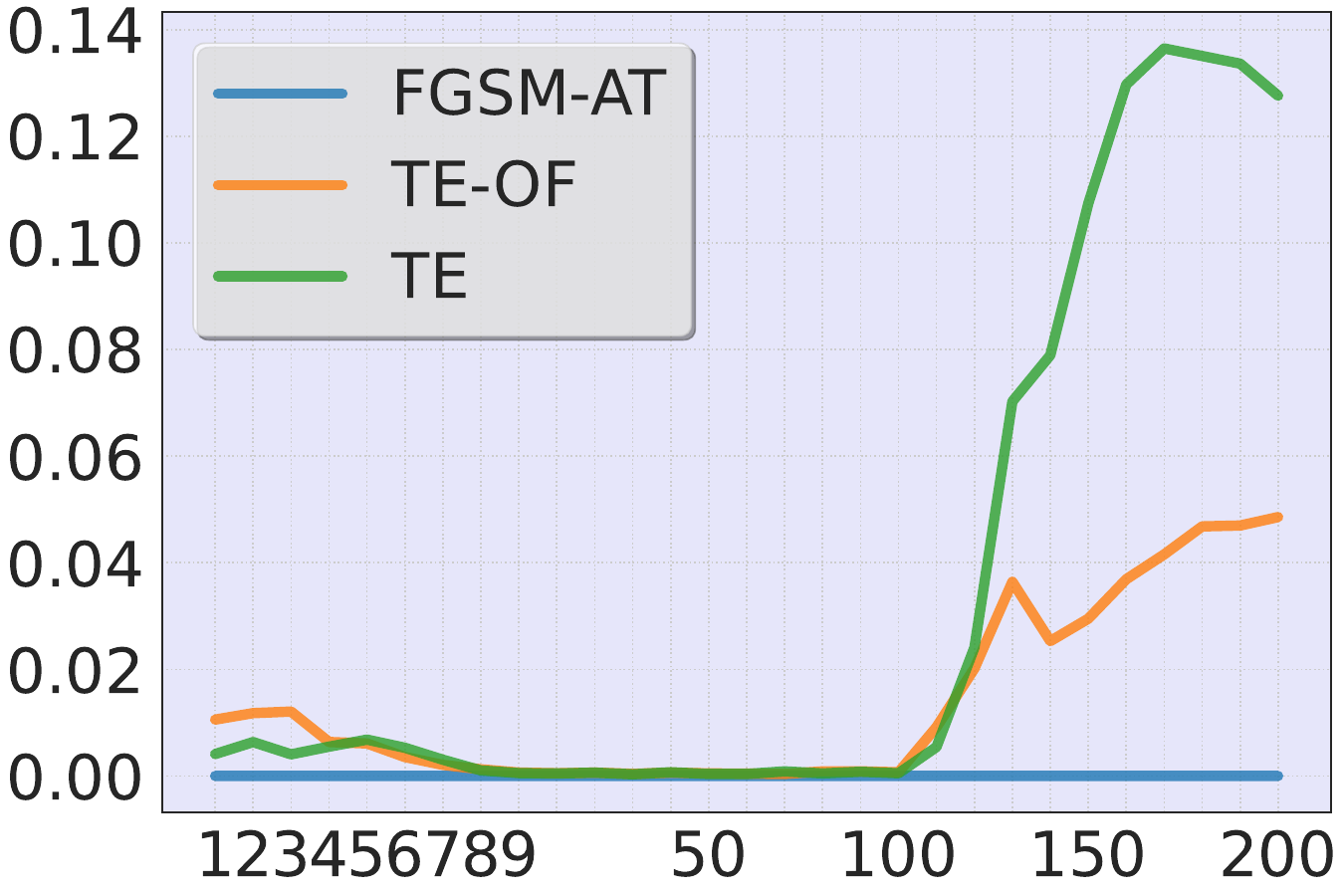}
\caption{\textbf{KD} on AE-NTKs}
\label{fig:fgsm-prove3} 
\end{subfigure}
\begin{subfigure}[b]{0.23\linewidth}
\centering
\includegraphics[width=\linewidth]{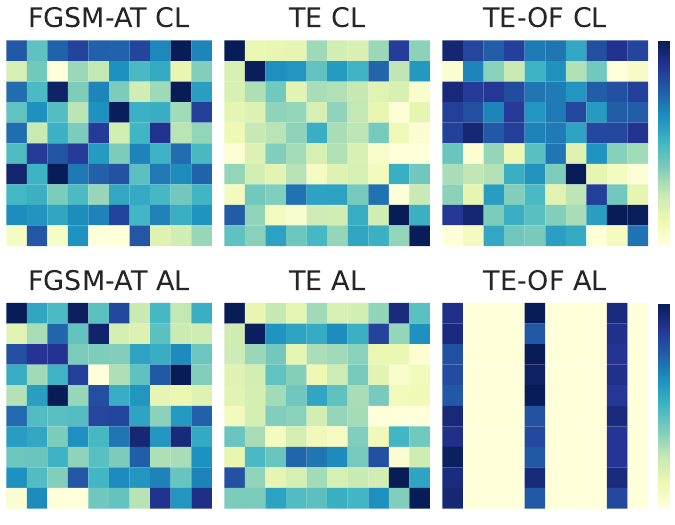}
\caption{\scriptsize $M_\mathrm{ks}(\Theta_{200})$ on AE-NTKs}
\label{fig:fgsm-prove4} 
\end{subfigure}
\vspace{-5pt}
\caption{Studies of shortcuts in FGSM-AT. In Figure~\ref{fig:fgsm-prove1}, solid lines represent clean accuracy, and dash lines represent robust accuracy on the test set.}
\label{fig:fgsm-prove}
\vspace{-10pt}
\end{figure*}
\begin{figure}[h]
\vspace{-10pt}
\centering
\begin{subfigure}[b]{0.45\linewidth}
\centering
\includegraphics[width=\linewidth]{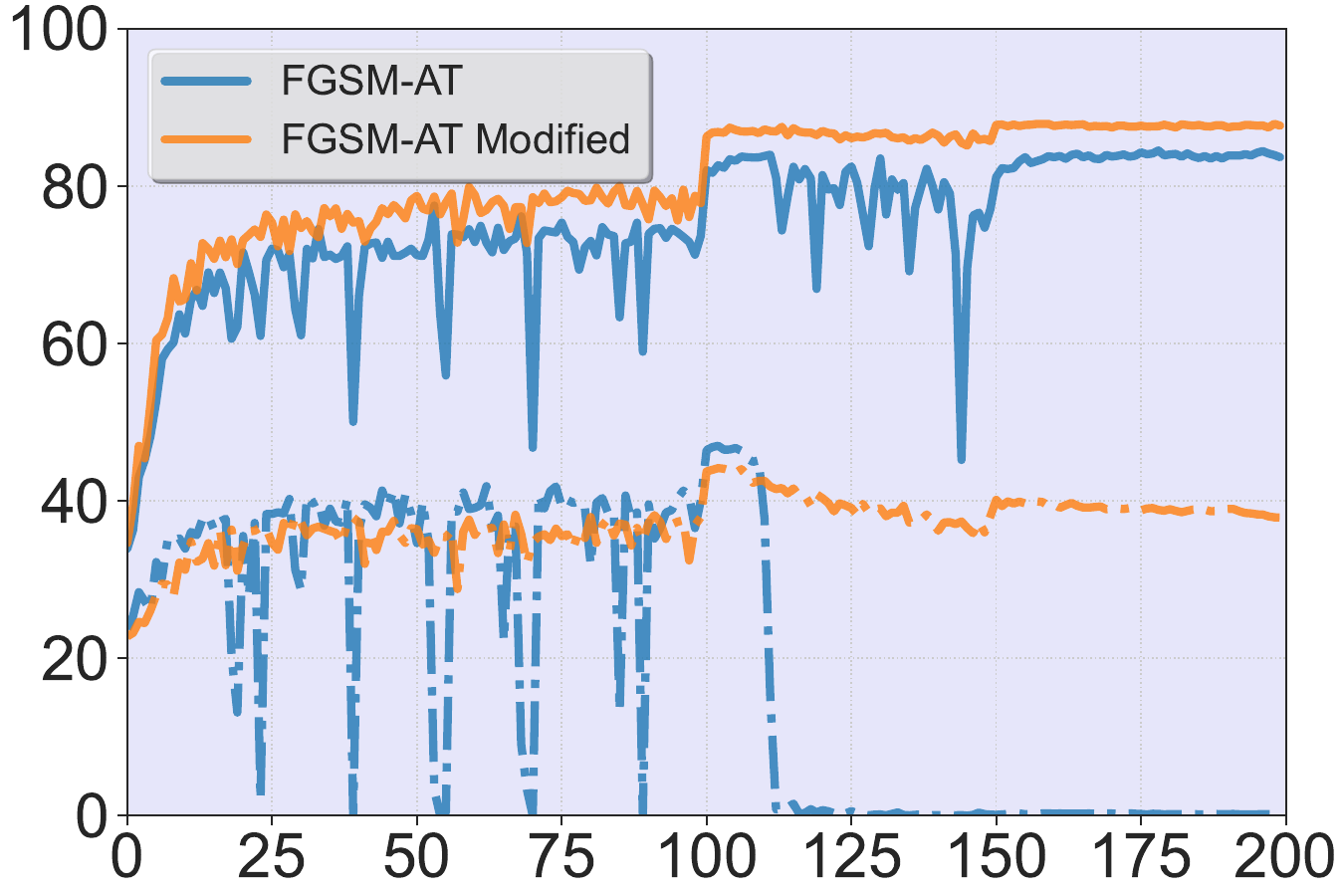}
\caption{FGSM-AT on CIFAR10}
\label{fig:overcome1} 
\end{subfigure}
\begin{subfigure}[b]{0.45\linewidth}
\centering
\includegraphics[width=\linewidth]{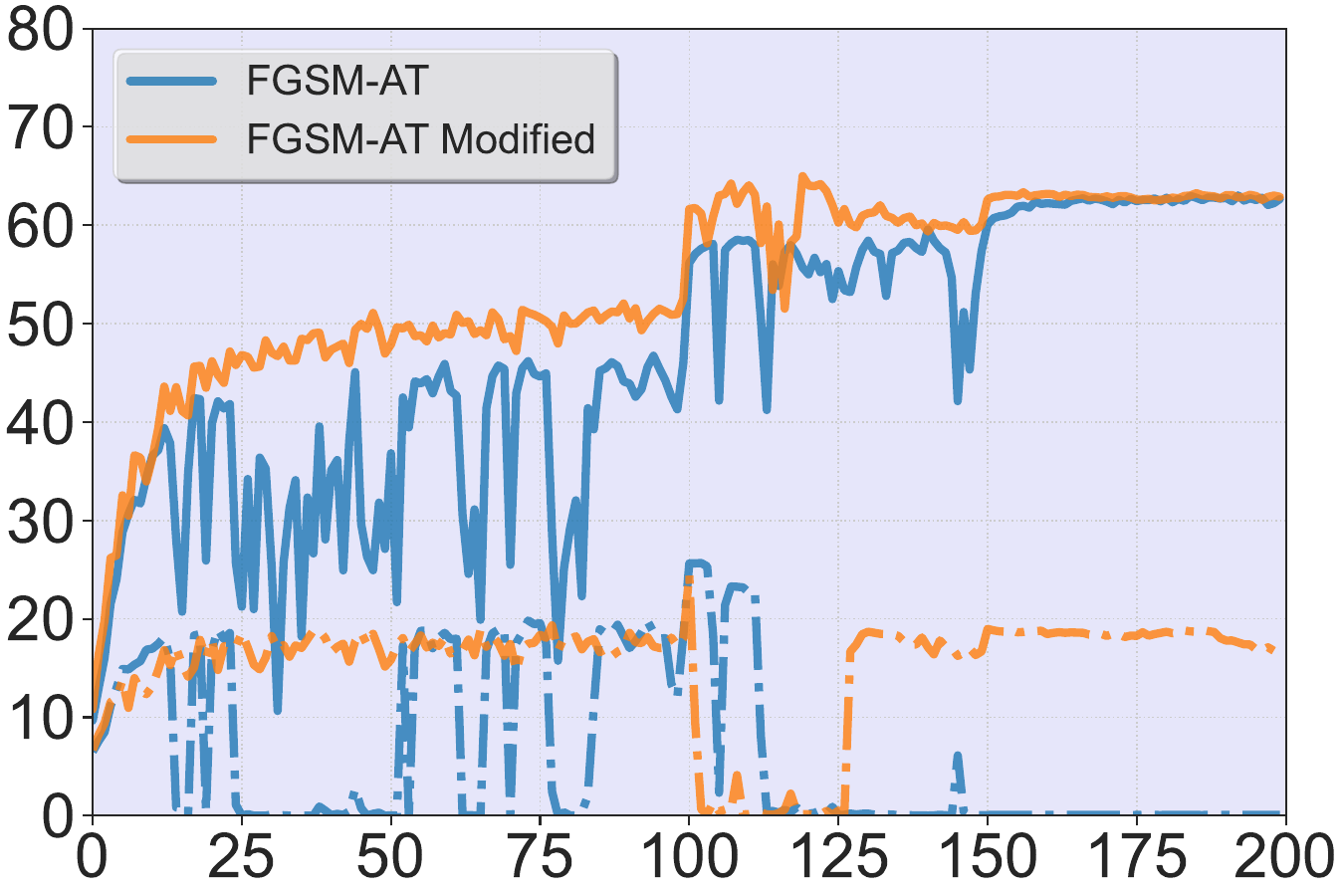}
\caption{FGSM-AT on CIFAR100}
\label{fig:overcome2} 
\end{subfigure}
\vspace{-5pt}
\caption{Our solution overcomes catastrophic overfitting. Solid and dash lines represent clean and robust accuracy.}
\label{fig:overcome}
\vspace{-15pt}
\end{figure}

\vspace{-5pt}
\section{Case Study I: Impact of Batch Normalization}
\label{sec:cs1}
\vspace{-5pt}

In Theorem~\ref{the:2}, we analyze $\Sigma$ and $\Pi$ from all seen training data inside the batch normalization layer. Specifically, in this section, we aim to analyze global statistics for clean accuracy and robust accuracy. Inspired by the findings of different distributions between clean data and adversarial examples in the previous work~\cite{XieTGWYL20}, we can further deduce the following propositions:

\begin{proposition}\label{pro:1}
For a model $f$ with batch normalization layers, if $f$ is evaluated on clean data, $\Sigma$ and $\Pi$ will be close to $\tilde{\Sigma}_i$ and $\tilde{\Pi}_i$, the performance of $f$ will not be influenced if we replace $\Sigma$ and $\Pi$ in the batch normalization layers with $\tilde{\Sigma}_i$ and $\tilde{\Pi}_i$.
\end{proposition}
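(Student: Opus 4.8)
The plan is to reduce Proposition~\ref{pro:1} directly to the Lipschitz bound of Theorem~\ref{the:2}, and then argue that on clean data the two sources of normalization statistics coincide asymptotically. First I would write the quantity of interest as the output gap $\Vert f(\frac{x-\tilde{\Sigma}_i}{\tilde{\Pi}_i}) - f(\frac{x-\Sigma}{\Pi})\Vert_2$ for an arbitrary clean input $x \in \tilde{\mathcal{X}}_i$, which is exactly the left-hand side controlled by Theorem~\ref{the:2}. Rather than keeping the triangle-split form of that bound (whose terms do not cancel when the statistics agree), I would work from the tighter intermediate estimate $\Vert f(a)-f(b)\Vert_2 \le C\Vert a-b\Vert_2$ with $a=\frac{x-\tilde{\Sigma}_i}{\tilde{\Pi}_i}$ and $b=\frac{x-\Sigma}{\Pi}$, and expand $a-b = x(\frac{1}{\tilde{\Pi}_i}-\frac{1}{\Pi}) - (\frac{\tilde{\Sigma}_i}{\tilde{\Pi}_i}-\frac{\Sigma}{\Pi})$, so that the output gap is governed entirely by the two differences $\tilde{\Pi}_i - \Pi$ and $\tilde{\Sigma}_i-\Sigma$.

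The central step is to show these differences are small on clean data. Because the running mean and running standard variance are unbiased estimators of the mean and standard variance of the full clean training set (as noted after Theorem~\ref{the:2}), $\Sigma$ and $\Pi$ estimate the same population moments that the per-batch statistics $\tilde{\Sigma}_i$ and $\tilde{\Pi}_i$ estimate. I would invoke a concentration argument — a law of large numbers over the $\alpha$ samples in $\tilde{\mathcal{X}}_i$ together with the convergence of the running estimator accumulated over training — to obtain $\tilde{\Sigma}_i \to \Sigma$ and $\tilde{\Pi}_i \to \Pi$ as the batch size $\alpha$ and the number of steps grow. Substituting into the expansion of $a-b$, both bracketed terms vanish in this limit, so the output gap tends to zero; being a $C$-Lipschitz image of a vanishing perturbation it stays uniformly small, and in particular the $\argmax$ class — hence the clean-data performance — is preserved when $\Sigma,\Pi$ are replaced by $\tilde{\Sigma}_i,\tilde{\Pi}_i$.

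The main obstacle I anticipate is controlling the denominators: the expansion of $a-b$ divides by $\tilde{\Pi}_i$ and $\Pi$, so the estimate degrades whenever a batch or global standard deviation is near zero. I would handle this by assuming a uniform lower bound $\tilde{\Pi}_i,\Pi \ge \pi_0 > 0$ (equivalently, the $\varepsilon$-regularizer that batch normalization adds inside the square root), which is already implicit in the statement of Theorem~\ref{the:2}. A secondary subtlety is that the argument crucially relies on \emph{both} statistics describing the \emph{clean} distribution; this is precisely why the conclusion is special to clean data and fails for adversarial examples, whose batch statistics follow a different distribution from the clean running statistics~\cite{XieTGWYL20}. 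I would flag this contrast explicitly, since it sets up the complementary adversarial case.
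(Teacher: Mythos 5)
Your route is genuinely different from the paper's, because the paper does not prove Proposition~\ref{pro:1} analytically at all: it justifies the proposition informally by combining Theorem~\ref{the:2} with the unbiasedness of the running statistics~\cite{ioffe_batch_2015} and the observation from~\cite{XieTGWYL20} that clean and adversarial inputs induce different BN statistics, and then \emph{verifies it empirically} via the ``w buffer''\,/\,``w/o buffer'' ablation on ResNet-18/CIFAR-10 (Figure~\ref{fig:bn-acc}), where clean accuracy is essentially unchanged across the two modes while robust accuracy is not. Your proposal instead tries to derive the claim as a corollary of the Lipschitz estimate, replacing the paper's triangle-split bound with the tighter $C\Vert a-b\Vert_2$ form and closing the argument with a law-of-large-numbers step giving $\tilde{\Sigma}_i\to\Sigma$ and $\tilde{\Pi}_i\to\Pi$ on clean data. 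This is a reasonable formalization of what the paper only gestures at, and your observation that the paper's stated bound does not collapse to zero when the two sets of statistics agree (because it applies the triangle inequality before comparing the normalizations) is a genuinely useful refinement; the analytic route also makes explicit \emph{why} the conclusion is special to clean data, which the paper delegates to a citation.

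That said, two steps in your sketch need more than you give them. First, a small output gap does not by itself imply that ``the performance of $f$ will not be influenced'': for inputs near the decision boundary an arbitrarily small logit perturbation can flip the $\argmax$, and the Lipschitz constant $C$ of a trained ResNet is large, so ``$C$ times a small quantity'' is not automatically negligible; you need either a margin assumption on the clean test points or a statement holding for all but a vanishing fraction of inputs. Second, the concentration step requires each batch $\tilde{\mathcal{X}}_i$ to be an i.i.d.\ draw from the distribution whose moments $\Sigma,\Pi$ track, which holds per BN layer only approximately (the running statistics are moving averages accumulated while the parameters, and hence the layer-input distributions, are still changing), and the uniform floor $\pi_0$ on the standard deviations is an added hypothesis rather than something implicit in Theorem~\ref{the:2} as stated. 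None of this contradicts the paper --- it mainly confirms that the proposition is an empirical claim dressed in the language of Theorem~\ref{the:2} --- but if you want your argument to stand as a proof you should state the margin, i.i.d., and variance-floor assumptions explicitly.
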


\begin{proposition}\label{pro:2}
For a model $f$ with batch normalization layers, if $f$ is evaluated on adversarial examples, $\Sigma$ and $\Pi$ will be different from $\tilde{\Sigma}_i$ and $\tilde{\Pi}_i$, because the batch internal statistics information varies with the inputs. 
The performance of $f$ will be significantly influenced if we replace $\Sigma$ and $\Pi$ in the batch normalization layers with $\tilde{\Sigma}_i$ and $\tilde{\Pi}_i$.
\end{proposition}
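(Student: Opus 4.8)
The plan is to establish Proposition~\ref{pro:2} as the adversarial counterpart of Proposition~\ref{pro:1}: both rest on estimating how far the batch statistics $(\tilde{\Sigma}_i,\tilde{\Pi}_i)$ drift from the running statistics $(\Sigma,\Pi)$, and then feeding that drift into the bound of Theorem~\ref{the:2}. For clean inputs the drift vanishes (concentration of the i.i.d.\ batch around the population statistics), whereas for adversarial inputs I would argue the drift stays of constant order in the batch size and therefore cannot be ignored. Concretely, I would split the argument into a statistics-shift step and a performance-impact step.

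First I would model the perturbation at the layer where batch normalization acts. Starting from $x'=x+\omega$ with $\omega\in\{-\epsilon,\epsilon\}$ and propagating through the feature extractor preceding the BN layer (Lipschitz by assumption), the perturbed batch mean becomes $\tilde{\Sigma}_i + \tfrac{1}{\alpha}\sum_{j}\delta_j$, where $\delta_j$ is the activation-level shift induced by $\omega_j$. The decisive observation is that, unlike the \emph{random} perturbation assumed in Theorem~\ref{the:1}, adversarial $\omega_j$ are gradient-aligned and hence mutually \emph{coherent}: the shifts $\delta_j$ share a common loss-increasing direction, so $\tfrac{1}{\alpha}\sum_j\delta_j$ does not cancel to $O(\epsilon/\sqrt{\alpha})$ but survives at order $\epsilon$. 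The same reasoning applied to $S_i^2$ shows $\tilde{\Pi}_i$ shifts by a non-vanishing amount, so $\tilde{\Sigma}_i\neq\Sigma$ and $\tilde{\Pi}_i\neq\Pi$; this distributional gap between clean and adversarial activations is exactly the phenomenon reported in~\cite{XieTGWYL20}.

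Second I would translate this non-vanishing gap into a change in $f$. Here the two propositions diverge. Proposition~\ref{pro:1} follows directly, because a vanishing gap makes every term on the right-hand side of Theorem~\ref{the:2} small, so swapping $(\Sigma,\Pi)$ for $(\tilde{\Sigma}_i,\tilde{\Pi}_i)$ leaves $f$ essentially unchanged. Proposition~\ref{pro:2}, however, needs the reverse implication, which the upper bound of Theorem~\ref{the:2} alone cannot supply. I would therefore impose a mild non-degeneracy condition on $f$ --- that the normalization offset $\tfrac{x-\Sigma}{\Pi}-\tfrac{x-\tilde{\Sigma}_i}{\tilde{\Pi}_i}$ is not annihilated by the subsequent layers --- and use it to lower-bound $\Vert f(\tfrac{x-\tilde{\Sigma}_i}{\tilde{\Pi}_i}) - f(\tfrac{x-\Sigma}{\Pi})\Vert_2$ by a constant multiple of the statistic gap, which then propagates to a change in the predicted logits and hence in accuracy.

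The hard part will be this last step: Theorem~\ref{the:2} is an \emph{upper} bound, so it governs Proposition~\ref{pro:1} cleanly but is powerless to force a \emph{large} change for Proposition~\ref{pro:2}. A fully rigorous argument would require a converse (anti-Lipschitz / non-collapse) assumption on $f$, which is delicate for deep networks; absent that, Proposition~\ref{pro:2} is best read as a mechanistic claim whose quantitative force is supplied by the experiments in Figure~\ref{fig:bn-acc}. A secondary difficulty is making the coherence argument of the first step precise, since it leans on structure of the adversarial gradient that Theorem~\ref{the:1} deliberately sidestepped by assuming $\omega$ independent of $x$, $\theta$, and $\ell$.
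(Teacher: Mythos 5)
Your reading of the logical structure is accurate, and in fact more candid than the paper itself. The paper supplies no formal proof of Proposition~\ref{pro:2}: it is stated as a deduction ``inspired by'' the distributional gap between clean and adversarial activations reported in~\cite{XieTGWYL20}, combined with the bound of Theorem~\ref{the:2}, and its quantitative content is established only empirically --- Figure~\ref{fig:bn-acc} shows robust accuracy diverging between the ``w buffer'' and ``w/o buffer'' modes while clean accuracy coincides, and Figures~\ref{fig:bn-ks-cl} and~\ref{fig:bn-ks-ae} attribute this to weaker kernel alignment on AE-NTKs. Your two-step decomposition (statistics shift, then performance impact) mirrors that intended justification, and your coherence argument --- that gradient-aligned perturbations produce activation shifts that do not average out over the batch, unlike the independent $\omega$ assumed in Theorem~\ref{the:1} --- is a genuine addition the paper does not attempt, and is the right way to make the first step precise. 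The gap you flag in the second step is real and is also present, unacknowledged, in the paper: Theorem~\ref{the:2} is a Lipschitz \emph{upper} bound, so it can show that a small gap between $(\tilde{\Sigma}_i,\tilde{\Pi}_i)$ and $(\Sigma,\Pi)$ leaves $f$ nearly unchanged (which is all Proposition~\ref{pro:1} needs), but it cannot force a large change in $f$ from a large gap, which is what the ``significantly influenced'' clause of Proposition~\ref{pro:2} asserts. Absent the anti-Lipschitz or non-collapse condition you propose, that clause is an empirical claim, and the paper treats it as one. So your proposal misses nothing that the paper actually proves; it correctly diagnoses that there is no rigorous argument to reconstruct and sketches what one would require.
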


These two propositions indicate that the unbiased estimators~\cite{ioffe_batch_2015} of mean and standard variance in batch normalization layers can help the model achieve higher robustness instead of clean accuracy. We empirically prove this argument. Specifically, we analyze the practical effects of batch normalization layers with ResNet-18 on CIFAR-10. To simulate the conditions in Propositions~\ref{pro:1} and~\ref{pro:2}, we remove the unbiased estimators in the batch normalization layers and force them to use the estimations from the current batch, which is called the ``w/o buffer''. \textbf{Because in the implementation of PyTorch, the running mean and standard variance are stored in the buffers. When we remove the unbiased estimators, we actually remove these buffers.} On the other hand, the original behaviors of batch normalization layers is called the ``w buffer''. We consider two training strategies, i.e., normal training and PGD-AT. Figure~\ref{fig:bn-acc} illustrates the accuracy of clean samples and AEs for the training set and the test set. We observe that the clean accuracy in the ``w buffer'' and ``w/o buffer'' is close to each other on both training and test sets. However, the robust accuracy on the test set is quite different in the two modes. These results support the aforementioned propositions.

We provide more analysis in the following. Theorem~\ref{the:2} implies that there are two distinct NTKs in the ``w buffer'' and ``w/o buffer'', due to the different ways of calculating the statistics. Therefore, we study the NTKs in the two models to explain the underlying reasons for the impacts of batch normalization on the robustness from the feature spectrum in the kernel. Figures~\ref{fig:bn-ks-cl} and~\ref{fig:bn-ks-ae} display the kernel specialization process of the model with PGD-AT on clean data and AEs, respectively. The kernel specialization on C-NTKs indicates that the kernel alignment happens in both modes and produce the subkernels with respect to the corresponding class, i.e., having darker color at diagonals in the kernel. So, when we evaluate the model on clean data, both modes can achieve similar results on the training set or the test set. However, when we compare the kernel specialization on AE-NTKs, the alignment effects are significantly different. Figure~\ref{fig:bn-ks-ae} visualizes the kernel alignment under clean labels (CL), which are the ground-truth labels for the input data, and adversarial labels (AL), which are the model's predicted labels for the input data. First, Figures~\ref{fig:bn-ks-ae3} and~\ref{fig:bn-ks-ae4} indicate that the kernel alignment appears earlier for AL than CL in the ``w buffer'', which means AEs depend on some specific class-related features and promote the subkernel evolution to decrease the loss under AL. On the other hand, in the ``w/o buffer'', the kernel alignment is fainter for either CL or AL, which means the model does not heavily depend on the corresponding class's features to predict the input data. It could be because the model gives lower robust accuracy on the test set, as it cannot effectively adopt the most relative features with respect to the correct label to make predictions.

\vspace{-5pt}
\section{Case Study II: Reducing Training Cost}
\label{sec:cs2}
\vspace{-5pt}

In Section~\ref{sec:atevolution}, we show there is a threefold evolution of NTK during AT. Therefore, it is possible to reduce the training cost by utilizing the ``lazy training'' phase of the evolution. We provide a detailed case study about this point.

As the training phase is threefold, what we want to know is the kernel differences among these training strategies. To calculate \textbf{KD} between two different models, we choose TE as the baseline, i.e., $\mathrm{KD}(\hat{\Theta}_t^\mathrm{TE}, \cdot)$, and compare it with other methods. The results in Figure~\ref{fig:kd} indicate that in the first ``kernel learning'' stage, different training methods make the kernel converge into the same one, which will be frozen in the ``lazy training'' stage. However, in the second ``kernel learning'' stage, the kernels become very different, which means the training strategies mainly influence the last training phase. Therefore, it is natural to investigate whether we can reduce the time cost of AT by using normal training first. The answer is affirmative.

Figure~\ref{fig:app-rt} shows the performance of our improved training strategy on the test set. We consider two AT methods, namely PGD-AT and TE. In these methods, we replace AEs with clean data before the $i$-th epoch. For instance, the notation ``PGD-AT 90'' indicates that we use only clean data to train the model for the first 90 epochs, and then adopt PGD-AT till epoch 200. The results indicate that this strategy can reduce the training cost significantly without sacrificing model performance. Specifically, the training time cost for one epoch with AEs is 90 seconds on one RTX 3090, and the time for clean data is 25 seconds. So, the total training cost for PGD-AT and TE is 18,000 seconds. For ``PGD-AT 100'' and ``TE 100'', the total time is \textbf{11,500} seconds, which is \textbf{63.88\%} of the original cost. For ``PGD-AT 140'' and ``TE 140'', the total time is \textbf{8,900} seconds, which is only \textbf{49.44\%} of the original one. We discuss how to decide a proper time to switch the training mode in the supplementary materials and provide robust accuracy under various attacks.

Furthermore, we explain why the model can achieve similar performance when we replace AEs with clean data at the early training stage by comparing the kernels of corresponding models. Figure~\ref{fig:diff} shows \textbf{KD} between TE and its variants on CIFAR-10. The values of \textbf{KD} are aligned with the model's clean accuracy and robust accuracy. In detail, if \textbf{KD} is smaller (e.g., ``TE 90'', ``TE 100'', ``TE 110''), both clean accuracy and robust accuracy of the model are close to the model with TE. If \textbf{KD} is larger (e.g., ``TE 160''), the model's performance will have a slight decrease.

\vspace{-5pt}
\section{Case Study III: Analyzing and Overcoming Catastrophic Overfitting}
\label{sec:cs3}
\vspace{-5pt}

One special feature of models trained with the one-step adversarial attack is catastrophic overfitting: the models will suddenly lose robustness on the test set. In this section, we explain this phenomenon by comparing the training dynamics of FGSM-AT and TE using NTK, and further propose a simple solution to address it. \textbf{The reason we choose TE is that it can avoid robust overfitting.}

First, Figure~\ref{fig:fgsm-ks} plots the \textbf{KS} strength on AE-NTKs for models trained with different methods on CIFAR-10. It is clearly observed that the kernels of robust models, which are trained with PGD-AT and TE, are totally different from the kernels of non-robust models, which are trained with normal training and FGSM-AT. Specifically, the strength matrices of non-robust models show more chaos under CL. On the contrary, the strength matrices of robust models are aligned along the diagonal and with a specific pattern. These results indicate that catastrophic overfitting occurs because the model only depends on some non-robust features to predict the labels of AEs. However, the root cause that the model trained with FGSM-AT learns such non-robust features is unclear. Here, we make an assumption that the one-step adversarial attack will make some shortcut robust features in the generated AEs, which cannot generalize to the test set.

Then, to empirically verify our assumption, we generate such a shortcut by modifying the data augmentation process. By default, all data inside the mini-batch are augmented individually, which means that if we use rotation for augmentation, data inside the mini-batch will be rotated by different angles. In our modified process, \textbf{all data in the mini-batch will follow the same augmentation setting}, e.g., rotation by the same angle. Therefore, the model can find a shortcut from the augmented data that are forced to be aligned in a similar distribution, i.e., isotropic features. \textbf{We adopt the modified augmentation strategy to train a model with TE (dubbed TE-OverFitting (TE-OF)).} Figure~\ref{fig:fgsm-prove1} shows the clean accuracy and robust accuracy on the test set of CIFAR-10. Clearly, the model trained with TE-OF shows a similar overfitting behavior as the model trained with FGSM-AT. We further compare the kernels of different models in Figures~\ref{fig:fgsm-prove2} and~\ref{fig:fgsm-prove3}, where we choose the kernels of the model trained with FGSM-AT as the baselines. The values of \textbf{KD} indicate that the model trained with TE-OF has a similar kernel as the FGSM-AT model, which is aligned with the observation from Figure~\ref{fig:fgsm-prove1}. The \textbf{KS} strength matrices on AE-NTKs can be found in Figure~\ref{fig:fgsm-prove4}, from which we find the matrix of TE-OF model is chaotic and looks like the one of the FGSM-AT model.

A straightforward and effective strategy to eliminate such shortcuts is to add some anisotropy noise into every sample inside each mini-batch. In Figure~\ref{fig:overcome}, we prove the effectiveness of this strategy with experiments on CIFAR-10 and CIFAR-100. Compared to the models trained with FGSM-AT, the models trained with our proposed variant overcome the catastrophic overfitting problem and achieve higher robustness. 

\section{Discussion and Conclusion}

This paper presents a comprehensive analysis of AT with NTK. We disclose the existence of error gap between the ground-truth and empirical NTK in AT, as well as the NTK evolution during training. Based on the analysis, we provide three case studies, which help us better understand the impact of batch normalization on model robustness, reduce the AT cost, and overcome catastrophic overfitting.

On the other hand, we do not deeply study the reason why the kernel have three stages in the training process. Another limitation is that this paper does not investigate the general overfitting problem. We think both aspects will be important topics in our future works.

{
    \small
    \bibliographystyle{ieeenat_fullname}
    \bibliography{bib}
}

\appendix

\section{Proof}
\label{ap:proof}

\setcounter{theorem}{0}
\begin{theorem}
Given a $\mathcal{C}^2$ function $f: \mathbb{R}^{n_\mathrm{in}} \to \mathbb{R}^{n_\mathrm{out}}$, parameterized by a vector of parameters $\theta$, and trained by $x' = x + \omega$, where $\omega \in \{-\epsilon, \epsilon\}^3$, on a dataset $\mathcal{D}=\{(x,y)\vert x\in\mathbb{R}^{n_\mathrm{in}}, y\in\mathbb{R}^{n_\mathrm{out}}\}$, NTK of $\mathcal{X}$ satisfies $\Theta(\mathcal{X}+ \Omega_{t-1}, \mathcal{X})_{t-1} \approx \Theta(\mathcal{X}, \mathcal{X})_{t-1} + o(\epsilon)$, where $\mathcal{X} = \{x\vert (x,y)\in \mathcal{D}\}$ and $\Omega_{t-1} = [\omega_1, \omega_2, \dots, \omega_{\vert \mathcal{X}\vert}]$, which is a random perturbation set independent of $x$, $\theta$, and $\ell$.
\end{theorem}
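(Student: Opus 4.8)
The plan is to treat the parameter-gradient $J(x) := \nabla_{\theta_{t-1}} f_{t-1}(x)$ as a smooth feature map of the input, Taylor-expand it around the clean point, and then exploit the mean-zero randomness of $\omega$ to annihilate the surviving first-order term in expectation. Writing both kernels blockwise over sample pairs, the $(a,b)$ block of $\Theta(\mathcal{X}+\Omega_{t-1},\mathcal{X})_{t-1}$ is $J(x_a+\omega_a)\,J(x_b)^T$ while that of $\Theta(\mathcal{X},\mathcal{X})_{t-1}$ is $J(x_a)\,J(x_b)^T$, so it suffices to control $J(x_a+\omega_a)\,J(x_b)^T - J(x_a)\,J(x_b)^T$ uniformly over $a,b$ and then reassemble the blocks.

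First I would invoke the $\mathcal{C}^2$ hypothesis: since $f$ is twice continuously differentiable, the mixed derivative $\nabla_x \nabla_\theta f$ exists and is continuous, so $J$ is $\mathcal{C}^1$ in $x$ and admits the expansion
\begin{align*}
J(x_a+\omega_a) = J(x_a) + \big(\nabla_x J(x_a)\big)\,\omega_a + r(x_a,\omega_a),
\end{align*}
with remainder $r(x_a,\omega_a)=o(\|\omega_a\|)$. Because each coordinate of $\omega_a$ lies in $\{-\epsilon,\epsilon\}$ we have $\|\omega_a\|=\Theta(\epsilon)$, and on the compact input domain uniform continuity of $\nabla_x J$ makes the remainder a uniform $o(\epsilon)$. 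Right-multiplying by $J(x_b)^T$ gives
\begin{align*}
J(x_a+\omega_a)\,J(x_b)^T = J(x_a)\,J(x_b)^T + \big(\nabla_x J(x_a)\,\omega_a\big)J(x_b)^T + o(\epsilon),
\end{align*}
whose middle term is linear in $\omega_a$ and hence only $O(\epsilon)$; a deterministic perturbation therefore does not yet yield the claim.

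The cancellation comes from the hypothesis that $\Omega_{t-1}$ is random and independent of $x$, $\theta$, and $\ell$. I would take the expectation over $\omega_a$: since $J(x_a)$, $\nabla_x J(x_a)$, and $J(x_b)$ depend only on $x$ and $\theta$ and are thus independent of $\omega_a$, linearity gives
\begin{align*}
\mathbb{E}_{\omega_a}\big[(\nabla_x J(x_a)\,\omega_a)\,J(x_b)^T\big] = \nabla_x J(x_a)\,\mathbb{E}[\omega_a]\,J(x_b)^T,
\end{align*}
and the symmetric $\pm\epsilon$ law makes $\mathbb{E}[\omega_a]=0$, killing the first-order term. Hence $\mathbb{E}\,\Theta(\mathcal{X}+\Omega_{t-1},\mathcal{X})_{t-1} = \Theta(\mathcal{X},\mathcal{X})_{t-1} + o(\epsilon)$ blockwise, and assembling over all pairs yields the asserted approximation.

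I expect the main obstacle to be precisely that the raw first-order term is $O(\epsilon)$, not $o(\epsilon)$: the statement is genuinely an \emph{in-expectation} result under the idealized mean-zero, input-independent perturbation model, and it fails for a fixed adversarial $\omega$. A real FGSM/PGD perturbation is aligned with $\nabla_x\ell$ and is neither mean-zero nor independent of $x,\theta,\ell$, so $\mathbb{E}[\omega]\neq 0$, the term $\nabla_x J(x_a)\,\omega_a\,J(x_b)^T$ survives at order $\epsilon$, and the clean-data kernel is shifted by an amount growing with $\epsilon$ — exactly the mechanism the paper uses to explain why AT harms clean accuracy. The remaining care points are ensuring the Taylor remainder is uniform (handled by compactness together with $\mathcal{C}^2$) and that expectation and the little-$o$ commute, for which the uniform remainder bound suffices.
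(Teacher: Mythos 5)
Your proposal follows the same skeleton as the paper's proof --- a first-order Taylor expansion of the parameter-Jacobian $\nabla_{\theta_{t-1}}f_{t-1}$ in the input, so that the perturbed kernel equals the clean kernel plus a term linear in $\Omega_{t-1}$ contracted against the mixed derivative $\nabla_{\mathcal{X}}\nabla_{\theta_{t-1}}f_{t-1}(\mathcal{X})$ --- but the two arguments kill that first-order term by different mechanisms. You assume a symmetric $\pm\epsilon$ law so that $\mathbb{E}[\omega_a]=0$ and the linear term vanishes in expectation. The paper deliberately does not assume this: it keeps $\mathrm{P}(\omega^{i,j,k}=\epsilon)=p$ general (consistent with the main text, where $p$ is tied to the sign of $\nabla_x\ell$ and need not be $1/2$), and instead assumes the entries $g$ of the mixed derivative have mean $\mu_{t-1}\approx 0$ and are independent of $\omega$, so that $\mathbb{E}[\omega^{i,j,k}g]=\epsilon(2p-1)\mu_{t-1}=o(\epsilon)$; it then invokes the Central Limit Theorem over the many coordinates to argue that the \emph{realized} entries of $\Omega_{t-1}^T\nabla_{\mathcal{X}}\nabla_{\theta_{t-1}}f_{t-1}(\mathcal{X})$ concentrate at this small mean. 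Two consequences follow. First, your cancellation is not licensed by the theorem's hypotheses as written (the statement only says $\omega$ is random, valued in $\{-\epsilon,\epsilon\}$, and independent of $x$, $\theta$, $\ell$ --- not mean-zero), so you prove the claim for a strictly smaller class of perturbation laws; the paper trades your symmetry assumption for an assumption on the network ($\mu_{t-1}\approx 0$). Second, you only establish the result for $\mathbb{E}\,\Theta(\mathcal{X}+\Omega_{t-1},\mathcal{X})_{t-1}$, whereas the paper's (admittedly informal) CLT step is what lets it assert the approximation for the realized kernel. Your closing observations --- that the raw first-order term is $O(\epsilon)$, that the claim fails for a genuinely adversarial $\omega$, and that this is precisely the mechanism behind the clean-accuracy drop --- are accurate and match the paper's own discussion. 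To align with the theorem's stated generality, either add the symmetry of $\omega$ as an explicit hypothesis or replace it with the paper's pair of assumptions on the mixed derivative.
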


\begin{proof} 
For a given $f$, we first calculate the changes in parameters $\theta$ and the outputs. We have
\begin{align*}
          &\Delta(\theta)_t = \theta_t - \theta_{t-1} =  \\
          &\quad\quad\quad-\eta \nabla_{\theta_{t-1}} f_{t-1}(\mathcal{X}+ \Omega_{t-1})^T \nabla_{f_{t-1}(\mathcal{X}+ \Omega_{t-1})}\mathcal{L}_{\mathrm{AT}}\\
   &\Delta(f(\mathcal{X}))_t = f_t (\mathcal{X}) - f_{t-1} (\mathcal{X}) =  \nabla_{\theta_{t-1}}f_{t-1}(\mathcal{X})\Delta(\theta)_t,
\end{align*}
For $\Delta(f(\mathcal{X}))_t$, it can be rewritten as 
\begin{align*}
-\eta \nabla_{\theta_{t-1}}f_{t-1}(\mathcal{X}) \nabla_{\theta_{t-1}} f_{t-1}(\mathcal{X}+ \Omega_{t-1})^T \nabla_{f_{t-1}(\mathcal{X}+ \Omega_{t-1})}\mathcal{L}_{\mathrm{AT}}.    
\end{align*}

We have
\begin{align*}
    &\nabla_{\theta_{t-1}}f_{t-1}(\mathcal{X}+ \Omega_{t-1}) \\
    &\approx \nabla_{\theta_{t-1}}f_{t-1}(\mathcal{X}) + \Omega_{t-1}^T \nabla_\mathcal{X}\nabla_{\theta_{t-1}}f_{t-1}(\mathcal{X}).
\end{align*}
Without loss of generality, we suppose the over the $\Omega_{t-1}$, $\mathrm{P}(\omega^{i,j,k} = \epsilon) = p$. Then, we have
\begin{align*}
    \mathbb{E}(\omega^{i,j,k}) = \epsilon(2p-1)\\
    \mathbb{V}ar(\omega^{i,j,k}) = 4\epsilon^2p(1-p)
\end{align*}

Similarly, we suppose $g \in \nabla_\mathcal{X}\nabla_{\theta_{t-1}}f_{t-1}(\mathcal{X})$ obeys a distribution, whose expectation is $\mu_{t-1}$ and variance is $\sigma_{t-1}$. Usually, both $\mu_{t-1}$ and $\sigma_{t-1}$ are close to zero in a deep neural network. 

Suppose $\omega^{i,j,k}$ and $g$ are independent, so $\mathbb{E}(\omega^{i,j,k}g) =\epsilon(2p-1)\mu_{t-1}$, $\mathbb{V}ar(\omega^{i,j,k}g)=4\epsilon^2 p \sigma_{t-1}(1-\epsilon^2)+4\epsilon^2p(1-p)\mu_{t-1}^2+6\epsilon^2$. Based on Central Limit Theorem, the elements in $\Omega_{t-1}^T \nabla_\mathcal{X}\nabla_{\theta_{t-1}}f_{t-1}(\mathcal{X})$ approximately equal to the $\mathbb{E}(\omega^{i,j,k}g)$. Therefore, we have the following conclusion that, any element $\alpha$ in $\Omega_{t-1}^T \nabla_\mathcal{X}\nabla_{\theta_{t-1}}f_{t-1}(\mathcal{X})$ satisfies $\alpha=o(\epsilon)$.

We have the conclusion that $\Theta(\mathcal{X}+ \Omega_{t-1}, \mathcal{X})_{t-1} \approx  \nabla_{\theta_{t-1}}f_{t-1}(\mathcal{X})\nabla_{\theta_{t-1}} f_{t-1}(\mathcal{X})^T + o(\epsilon) \approx \Theta(\mathcal{X}, \mathcal{X})_{t-1} + o(\epsilon)$.

 \end{proof}

\begin{theorem}
Given a $C$-Lipschitz function $f: \mathbb{R} \to \mathbb{R}$ defined on a $l_2$-normed metric space, suppose $\mathcal{X}=\{x\vert x\in\mathbb{R}\}$ is the input set, whose cardinality is $\vert \mathcal{X}\vert$. Suppose there are $k$ disjoint subsets of $\mathcal{X}$ having the same size, i.e., $\tilde{\mathcal{X}}_i\in\mathcal{X}, i\in [k]$, and $\vert\tilde{\mathcal{X}}_i\vert = \alpha \le \vert \mathcal{X}\vert$. Let $S_i^1 = \sum_{j=1}^\alpha x_j$ and $S_i^2 = \sum_{j=1}^\alpha x_j^2$, for $x_j \in \tilde{\mathcal{X}}_i$, and $A = S_i^2 - (S_i^1)^2$.
For $x\in \tilde{\mathcal{X}}_i$, we have 
\begin{align*}
    &\small \Vert f(\frac{x-\tilde{\Sigma}_i}{\tilde{\Pi}_i}) - f(\frac{x-\Sigma}{\Pi}) \Vert_2 \le \\
    &C \Vert x\Vert_2(\Vert \frac{1}{\tilde{\Pi}_i}\Vert_2 + \Vert\frac{\alpha}{\sqrt{\mathbb{E}(A)}}\Vert_2) + C (\Vert \frac{\tilde{\Sigma}_i}{\tilde{\Pi}_i}\Vert_2 + \Vert\frac{\mathbb{E}(S_i^1)}{\sqrt{\mathbb{E}(A)}}\Vert_2),
\end{align*} 
where $\tilde{\Sigma}_i$ and $\tilde{\Pi}_i$ are mean and standard variance for elements in $\tilde{\mathcal{X}}_i$, and ${\Sigma}$ and ${\Pi}$ are unbiased estimators of mean and standard variance for elements in ${\mathcal{X}}$. 
\end{theorem}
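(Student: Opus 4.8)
The plan is to reduce the entire statement to the defining inequality of a $C$-Lipschitz map and then control the displacement of the two normalized inputs by elementary scalar norm manipulations. First I would invoke the Lipschitz property of $f$ directly on the two arguments, which gives
\begin{align*}
\Vert f(\tfrac{x-\tilde{\Sigma}_i}{\tilde{\Pi}_i}) - f(\tfrac{x-\Sigma}{\Pi}) \Vert_2 \le C \Vert \tfrac{x-\tilde{\Sigma}_i}{\tilde{\Pi}_i} - \tfrac{x-\Sigma}{\Pi} \Vert_2 .
\end{align*}
This removes $f$ from the problem and leaves a purely arithmetic quantity to bound, so the remaining work is independent of the network and depends only on the batch-normalization statistics.

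Next I would bound the displacement \emph{without} exploiting the cancellation of the shared $x$: apply the triangle inequality to split the difference into the two normalized terms, then split each normalized term once more into its ``scaled input'' piece and its ``centering'' piece. Because all the quantities are scalars, the scale factors pull out of the norm, yielding $\Vert x\Vert_2\,\Vert 1/\tilde{\Pi}_i\Vert_2$ and $\Vert \tilde{\Sigma}_i/\tilde{\Pi}_i\Vert_2$ for the batch term, and $\Vert x\Vert_2\,\Vert 1/\Pi\Vert_2$ and $\Vert \Sigma/\Pi\Vert_2$ for the population term. This is the (deliberately loose) step that produces the additive structure of the claimed bound rather than a tighter estimate based on the gap between the two statistics.

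The substantive step is to rewrite the population quantities $\Pi$ and $\Sigma/\Pi$ in terms of the batch moment expectations $\mathbb{E}(A)$ and $\mathbb{E}(S_i^1)$. Here I would use the hypothesis that $\Sigma$ and $\Pi$ are the unbiased running estimators: the batch mean $S_i^1/\alpha$ is unbiased for the population mean, so $\Sigma = \mathbb{E}(S_i^1)/\alpha$, and the corresponding bias-corrected second-moment statistic gives $\Pi = \sqrt{\mathbb{E}(A)}/\alpha$, whence $\Vert 1/\Pi\Vert_2 = \Vert \alpha/\sqrt{\mathbb{E}(A)}\Vert_2$ and $\Vert \Sigma/\Pi\Vert_2 = \Vert \mathbb{E}(S_i^1)/\sqrt{\mathbb{E}(A)}\Vert_2$. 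Substituting these identifications into the previous display and multiplying through by $C$ reproduces exactly the right-hand side of the statement.

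I expect the main obstacle to be precisely this last identification: making the correspondences $\Pi \leftrightarrow \sqrt{\mathbb{E}(A)}/\alpha$ and $\Sigma \leftrightarrow \mathbb{E}(S_i^1)/\alpha$ rigorous, since they depend on how the running statistics aggregate the per-batch moments and on the exact normalization used in defining $A = S_i^2 - (S_i^1)^2$ (including any factor of $\alpha$ or the $\alpha/(\alpha-1)$ correction hidden in the word ``unbiased''). The Lipschitz and triangle-inequality steps are routine; all of the modeling content sits in treating the running mean and variance as expectations over batches, which is exactly the unbiased-estimator property cited immediately after the theorem.
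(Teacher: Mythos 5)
Your proposal is correct and follows essentially the same route as the paper's proof: apply the Lipschitz bound, discard the cancellation between the two normalized arguments via the triangle inequality to get the four-term additive bound, and then express $\Sigma$ and $\Pi$ through the expectations of the per-batch statistics $S_i^1$ and $A$. The one point you flag as the main obstacle --- the bias correction hidden in ``unbiased'' --- is handled in the paper exactly as you anticipate, by writing $\Pi = \sqrt{\tfrac{\alpha}{\alpha-1}\mathbb{E}(\tilde{\Pi}_i^2)}$ and dropping the factor $\tfrac{\alpha}{\alpha-1}\ge 1$ to upper-bound $1/\Pi$ before converting $\mathbb{E}(\tilde{\Pi}_i^2)$ into the $\mathbb{E}(A)$ form.
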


\begin{proof} 
As $f$ is a $C$-Lipschitz function, we have
\begin{align*}
    &\left\Vert f(\frac{x-\tilde{\Sigma}_i}{\tilde{\Pi}_i}) - f(\frac{x-\Sigma}{\Pi}) \right\Vert_2 \le \\
    &C \left\Vert \frac{x-\tilde{\Sigma}_i}{\tilde{\Pi}_i}- \frac{x-\Sigma}{\Pi} \right\Vert_2 = \\
    &C \Vert x\Vert_2\Vert \frac{1}{\tilde{\Pi}_i} - \frac{1}{\Pi}\Vert_2 + C \Vert \frac{\tilde{\Sigma}_i}{\tilde{\Pi}_i} - \frac{\Sigma}{\Pi}\Vert_2.
\end{align*}

We have the following relationships between $\tilde{\Sigma}_i$ and $\Sigma$ and between $\tilde{\Pi}_i$ and $\Pi$,
\begin{align*}
    \Sigma &= \mathbb{E}(\tilde{\Sigma}_i)\\
    \Pi &= \sqrt{\frac{\alpha}{\alpha-1}\mathbb{E}(\tilde{\Pi}_i^2)}.
\end{align*}
Thus, we have
\begin{align*}
    &\left\Vert f(\frac{x-\tilde{\Sigma}_i}{\tilde{\Pi}_i}) - f(\frac{x-\Sigma}{\Pi}) \right\Vert_2 \\
    &\le C \Vert x\Vert_2(\Vert \frac{1}{\tilde{\Pi}_i}\Vert_2 + \Vert\frac{1}{\sqrt{\frac{\alpha}{\alpha-1}\mathbb{E}(\tilde{\Pi}_i^2)}}\Vert_2) +\\
    &\quad\quad\quad\quad\quad\quad C (\Vert \frac{\tilde{\Sigma}_i}{\tilde{\Pi}_i}\Vert_2 + \Vert\frac{\mathbb{E}(\tilde{\Sigma}_i)}{\sqrt{\frac{\alpha}{\alpha-1}\mathbb{E}(\tilde{\Pi}_i^2)}}\Vert_2) \\
    &\le C \Vert x\Vert_2(\Vert \frac{1}{\tilde{\Pi}_i}\Vert_2 + \Vert\frac{1}{\sqrt{\mathbb{E}(\tilde{\Pi}_i^2)}}\Vert_2) + \\
    &\quad\quad\quad\quad\quad\quad C (\Vert \frac{\tilde{\Sigma}_i}{\tilde{\Pi}_i}\Vert_2 + \Vert\frac{\mathbb{E}(\tilde{\Sigma}_i)}{\sqrt{\mathbb{E}(\tilde{\Pi}_i^2)}}\Vert_2)\\
    &=  C \Vert x\Vert_2(\Vert \frac{1}{\tilde{\Pi}_i}\Vert_2 + \Vert\frac{1}{\sqrt{\mathbb{E}(\mathbb{E}(\tilde{\mathcal{X}}_i^2) - \tilde{\Sigma}_i^2)}}\Vert_2) +\\
    &\quad\quad\quad\quad\quad\quad C (\Vert \frac{\tilde{\Sigma}_i}{\tilde{\Pi}_i}\Vert_2 + \Vert\frac{\mathbb{E}(\tilde{\Sigma}_i)}{\sqrt{\mathbb{E}(\mathbb{E}(\tilde{\mathcal{X}}_i^2) - \tilde{\Sigma}_i^2)}}\Vert_2)\\
    &=  C \Vert x\Vert_2(\Vert \frac{1}{\tilde{\Pi}_i}\Vert_2 + \Vert\frac{\alpha}{\sqrt{\mathbb{E}(S_i^2 - (S_i^1)^2)}}\Vert_2) +\\
    &\quad\quad\quad\quad\quad\quad C (\Vert \frac{\tilde{\Sigma}_i}{\tilde{\Pi}_i}\Vert_2 + \Vert\frac{\mathbb{E}(S_i^1)}{\sqrt{\mathbb{E}(S_i^2 - (S_i^1)^2)}}\Vert_2)\\
    &=  C \Vert x\Vert_2(\Vert \frac{1}{\tilde{\Pi}_i}\Vert_2 + \Vert\frac{\alpha}{\sqrt{\mathbb{E}(A)}}\Vert_2) +\\
    &\quad\quad\quad\quad\quad\quad C (\Vert \frac{\tilde{\Sigma}_i}{\tilde{\Pi}_i}\Vert_2 + \Vert\frac{\mathbb{E}(S_i^1)}{\sqrt{\mathbb{E}(A)}}\Vert_2)
\end{align*}

\end{proof}

\begin{figure*}[hbt]
\centering
\begin{subfigure}[b]{0.47\linewidth}
\centering
\includegraphics[width=\linewidth]{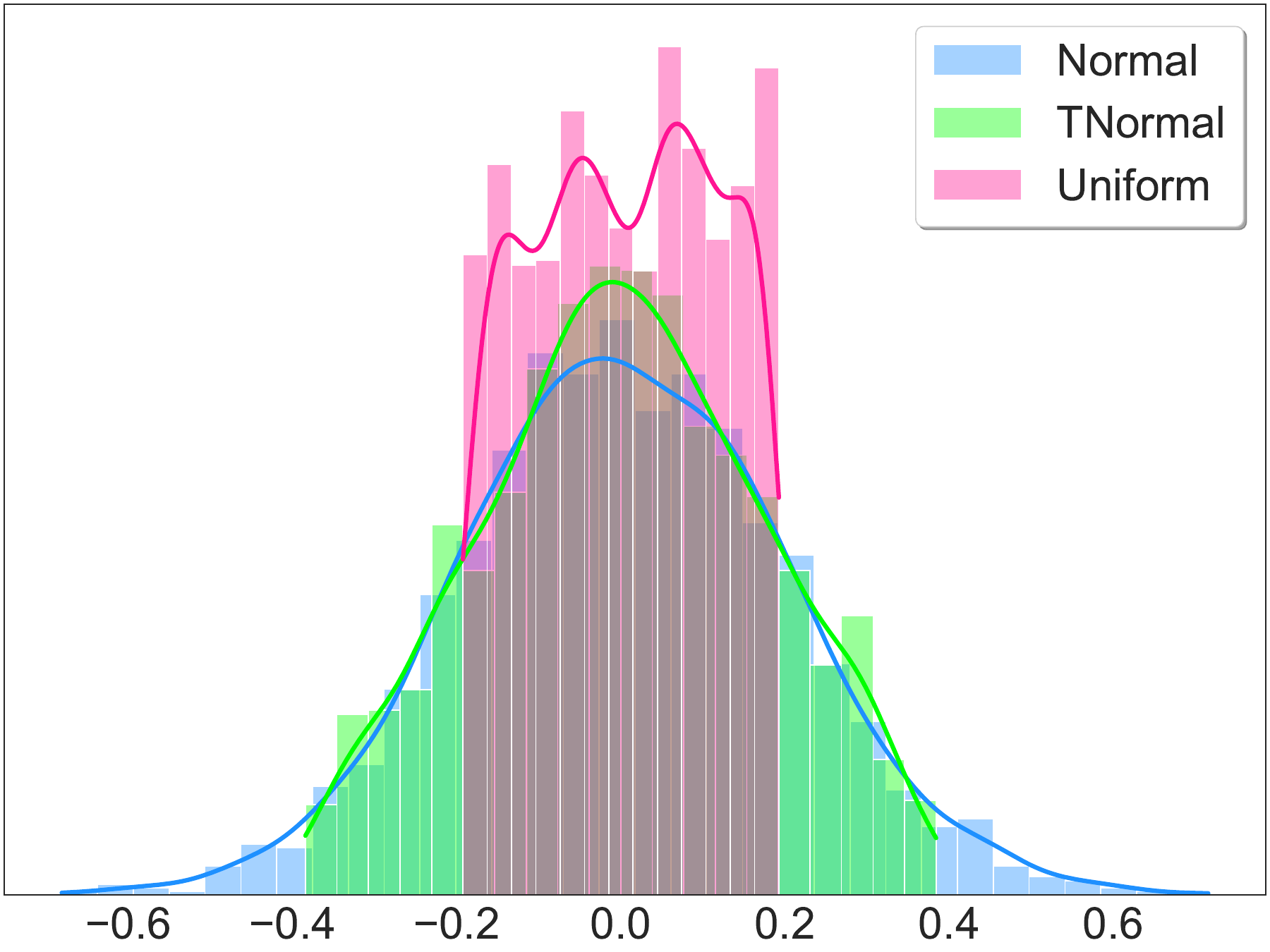}
\caption{Shallow Convolutional Layer Weight Distribution}
\label{fig:init1} 
\end{subfigure}
\begin{subfigure}[b]{0.47\linewidth}
\centering
\includegraphics[width=\linewidth]{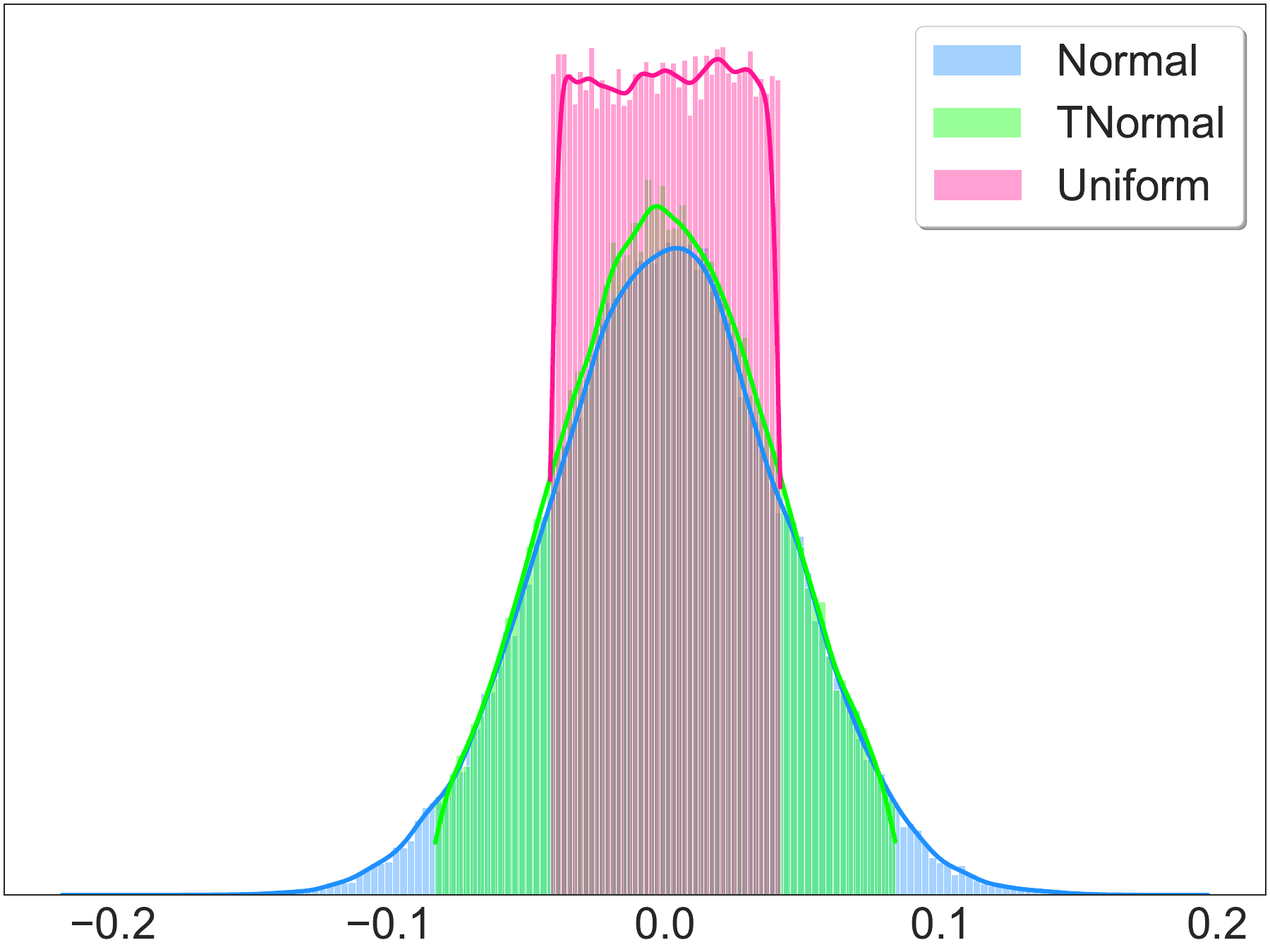}
\caption{Deep Convolutional Layer Weight Distribution}
\label{fig:init2} 
\end{subfigure}
\caption{Weight distributions under different initialization methods.}
\label{fig:init}
\vspace{-10pt}
\end{figure*}

\begin{figure*}[hbt]
\centering
\begin{subfigure}[b]{0.32\linewidth}
\centering
\includegraphics[width=\linewidth]{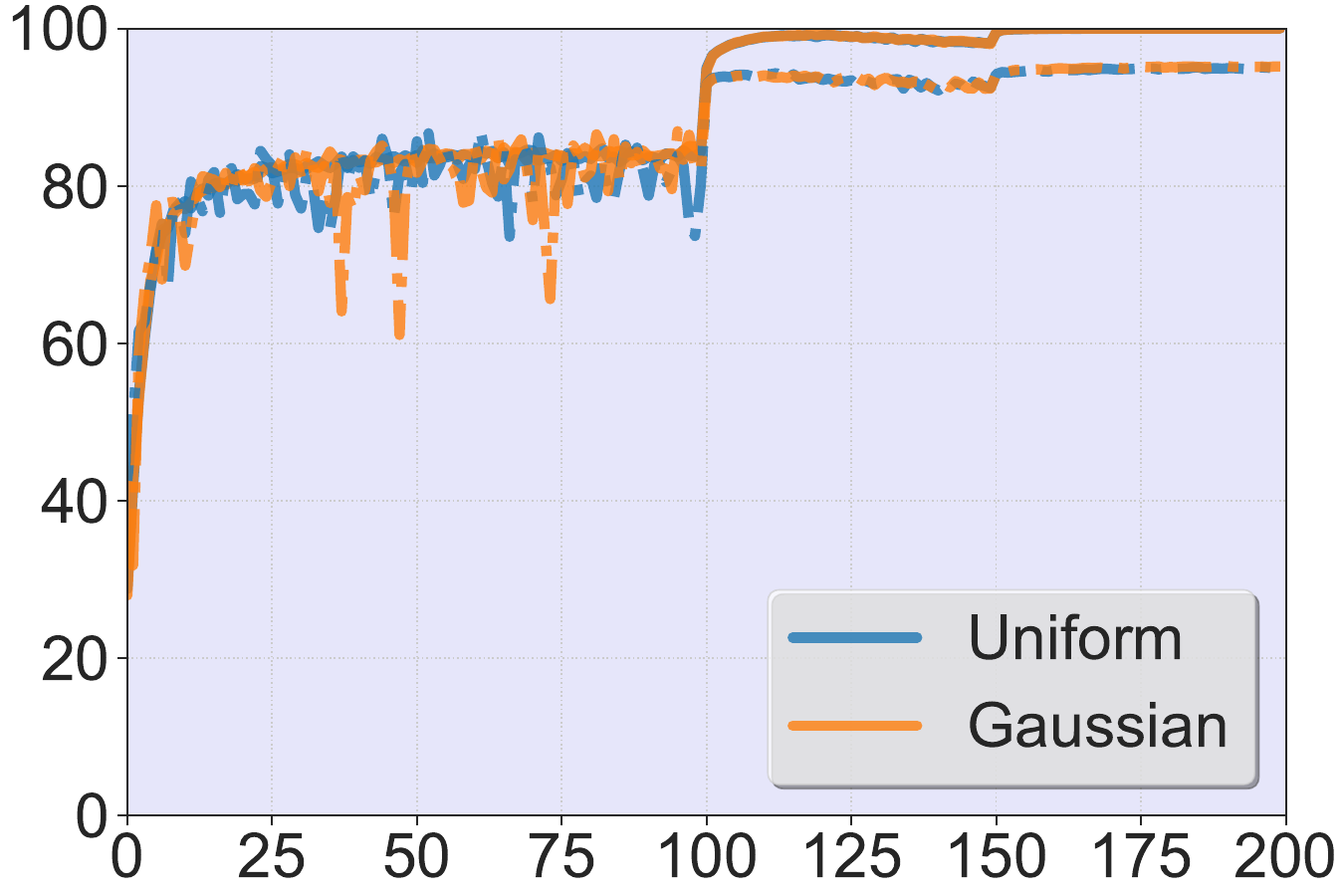}
\caption{Normal training on clean data}
\label{fig:acc1} 
\end{subfigure}
\begin{subfigure}[b]{0.32\linewidth}
\centering
\includegraphics[width=\linewidth]{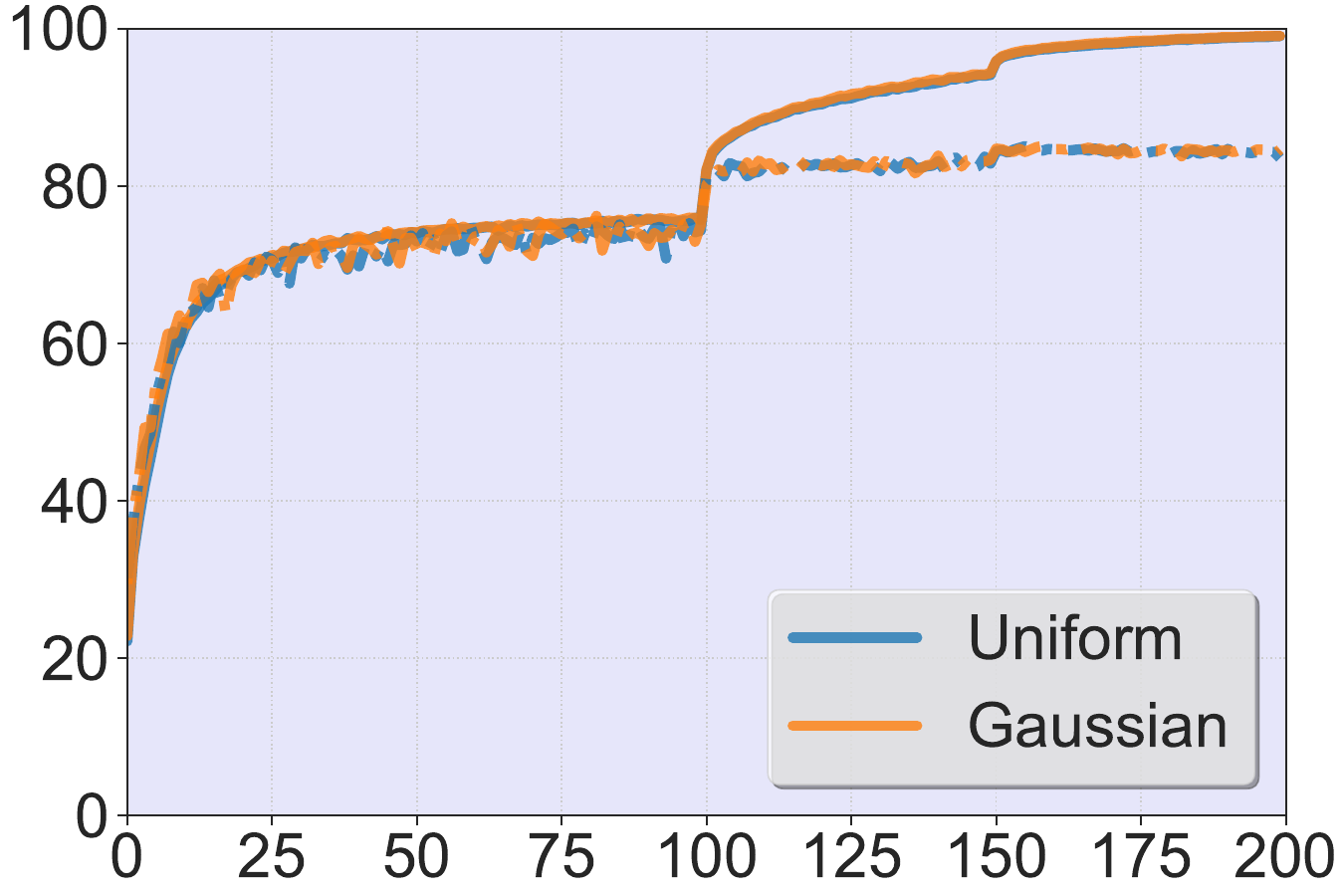}
\caption{PGD-AT on clean data}
\label{fig:acc2} 
\end{subfigure}
\begin{subfigure}[b]{0.32\linewidth}
\centering
\includegraphics[width=\linewidth]{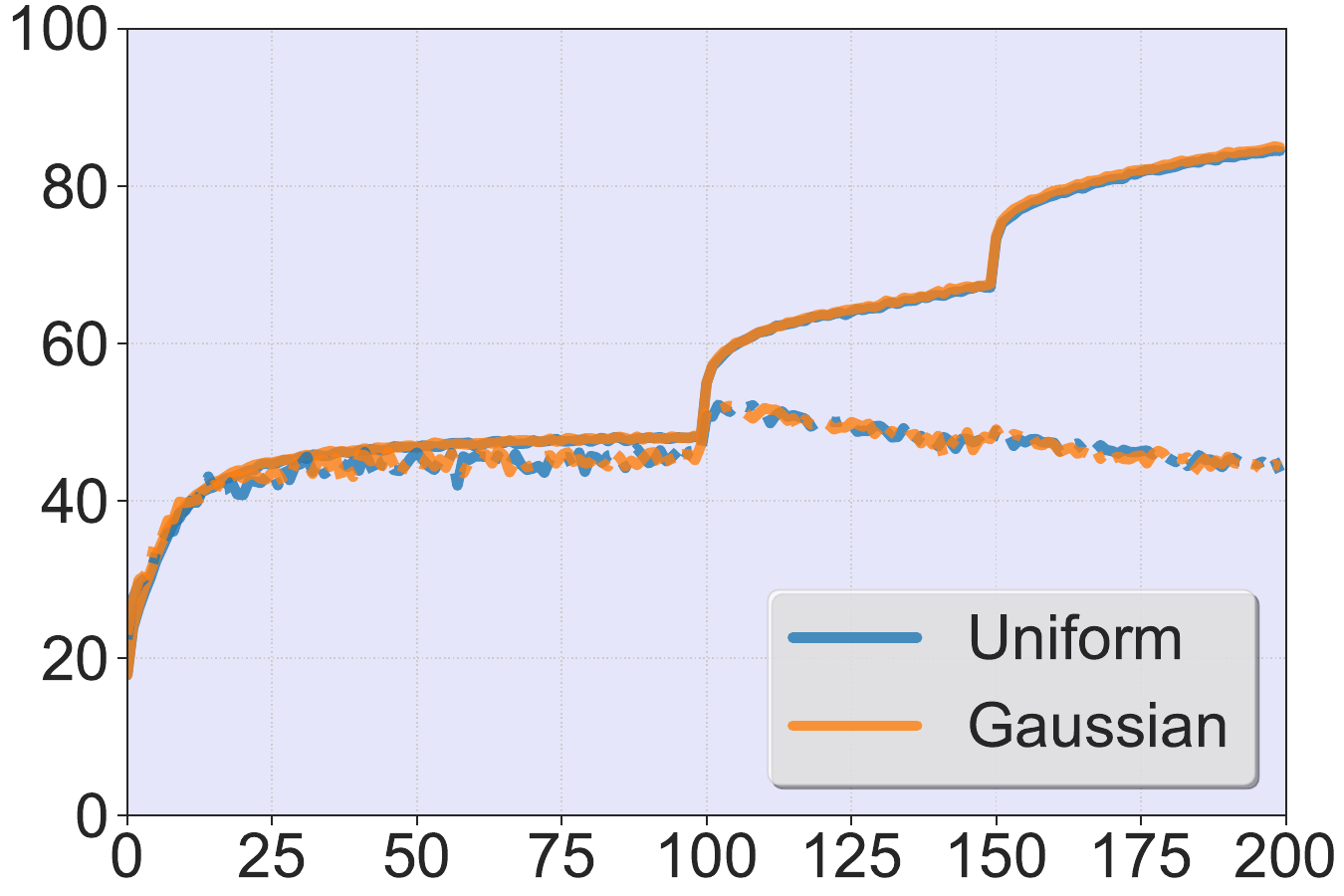}
\caption{PGD-AT on AEs}
\label{fig:acc3} 
\end{subfigure}
\caption{Accuracy under different initialization methods. The x-axis represents the training epoch. The y-axis represents the accuracy. Solid lines are for training set metrics. Dash lines are for test set metrics.}
\label{fig:acc} 
\vspace{-10pt}
\end{figure*}

\begin{figure*}[hbt]
\centering
\begin{subfigure}[b]{0.32\linewidth}
\centering
\includegraphics[width=\linewidth]{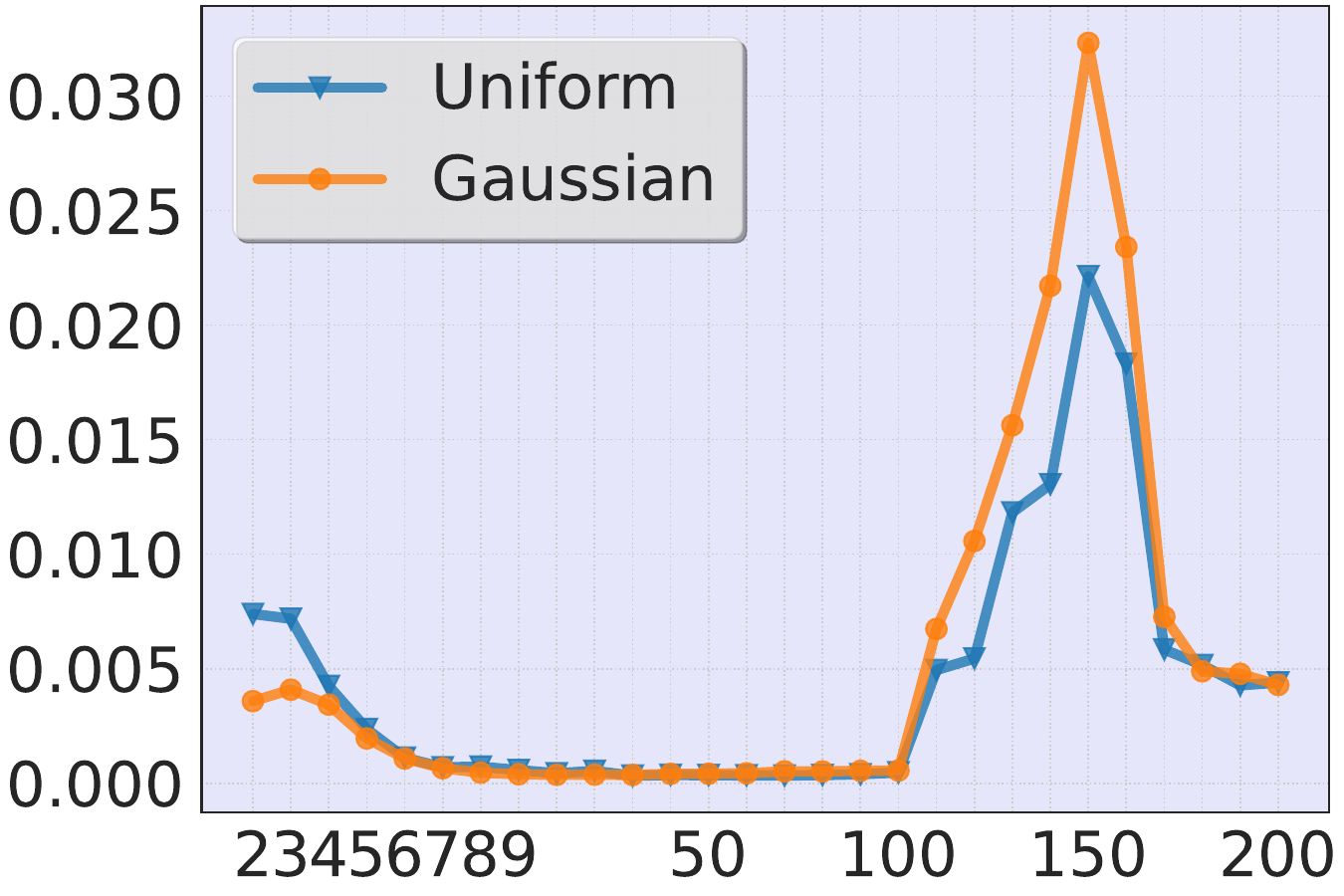}
\caption{\textbf{KD} on C-NTKs}
\label{fig:nt1} 
\end{subfigure}
\begin{subfigure}[b]{0.32\linewidth}
\centering
\includegraphics[width=\linewidth]{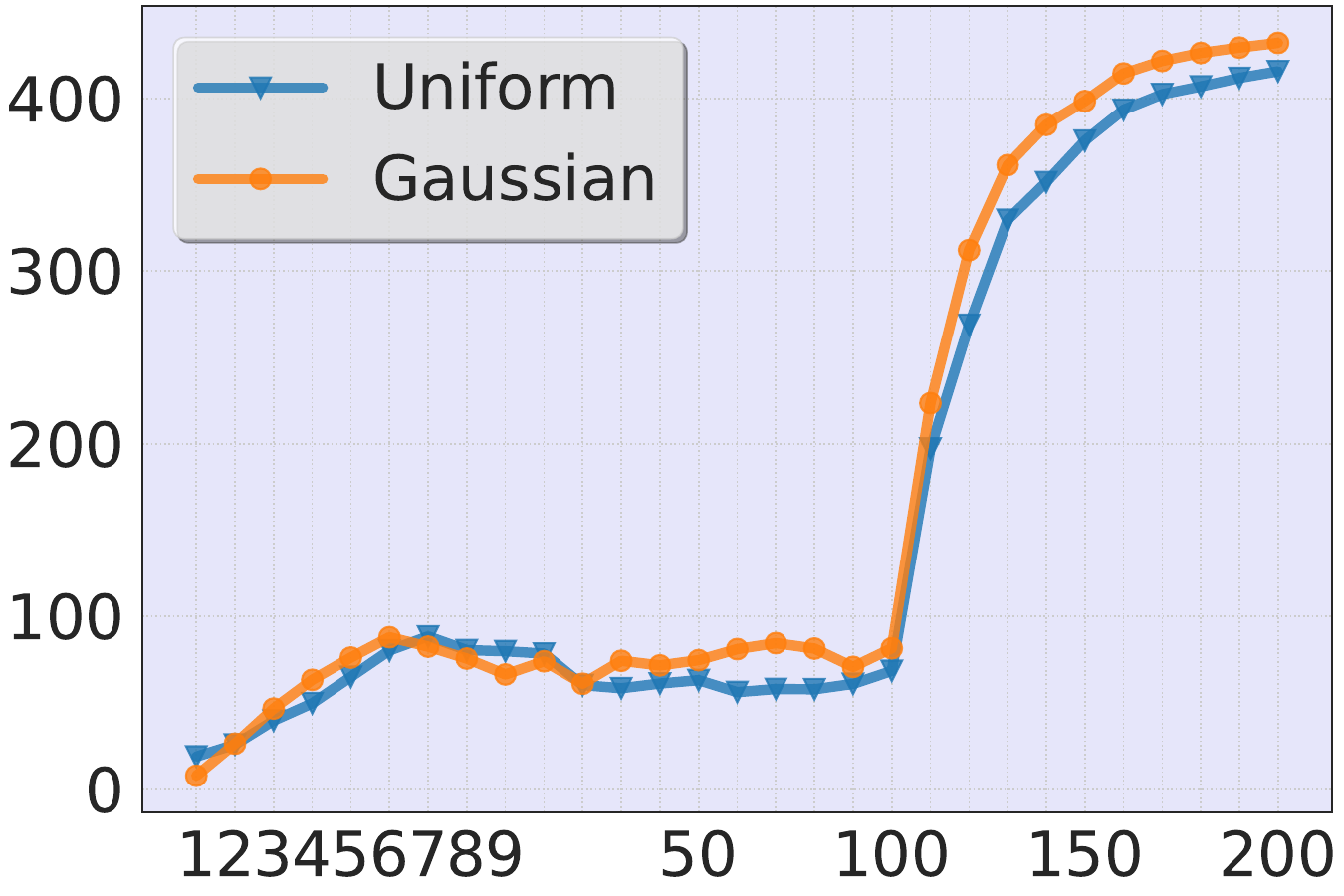}
\caption{\textbf{KER} on C-NTKs}
\label{fig:nt2} 
\end{subfigure}
\begin{subfigure}[b]{0.32\linewidth}
\centering
\includegraphics[width=\linewidth]{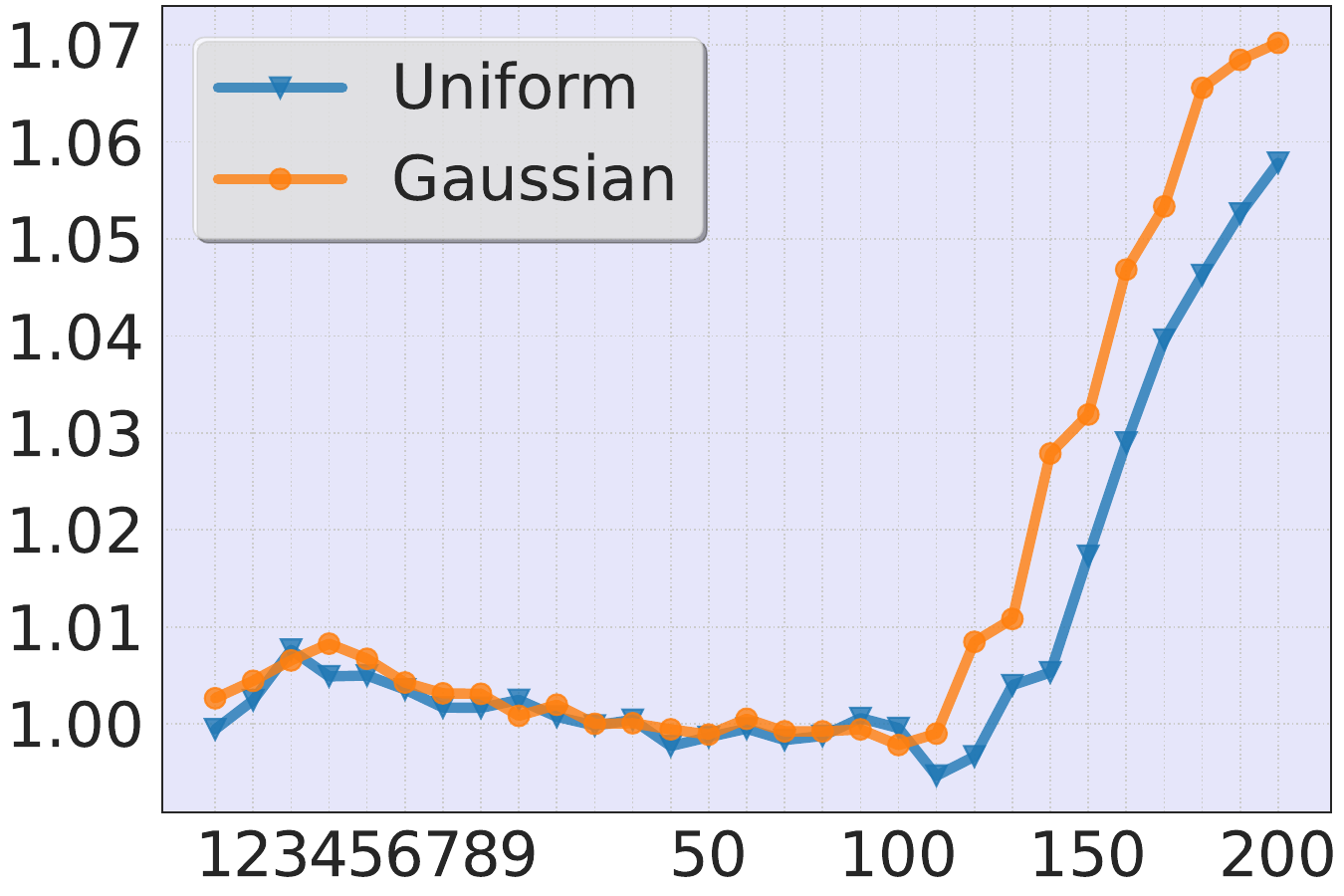}
\caption{\textbf{KS} on C-NTKs}
\label{fig:nt3} 
\end{subfigure}
\caption{Kernel dynamics of models trained under normal training.}
\label{fig:nt}
\vspace{-15pt}
\end{figure*}

\begin{figure*}[hbt]
\centering
\begin{minipage}[hbt]{0.48\textwidth}
\begin{subfigure}[b]{0.45\linewidth}
\centering
\includegraphics[width=\linewidth]{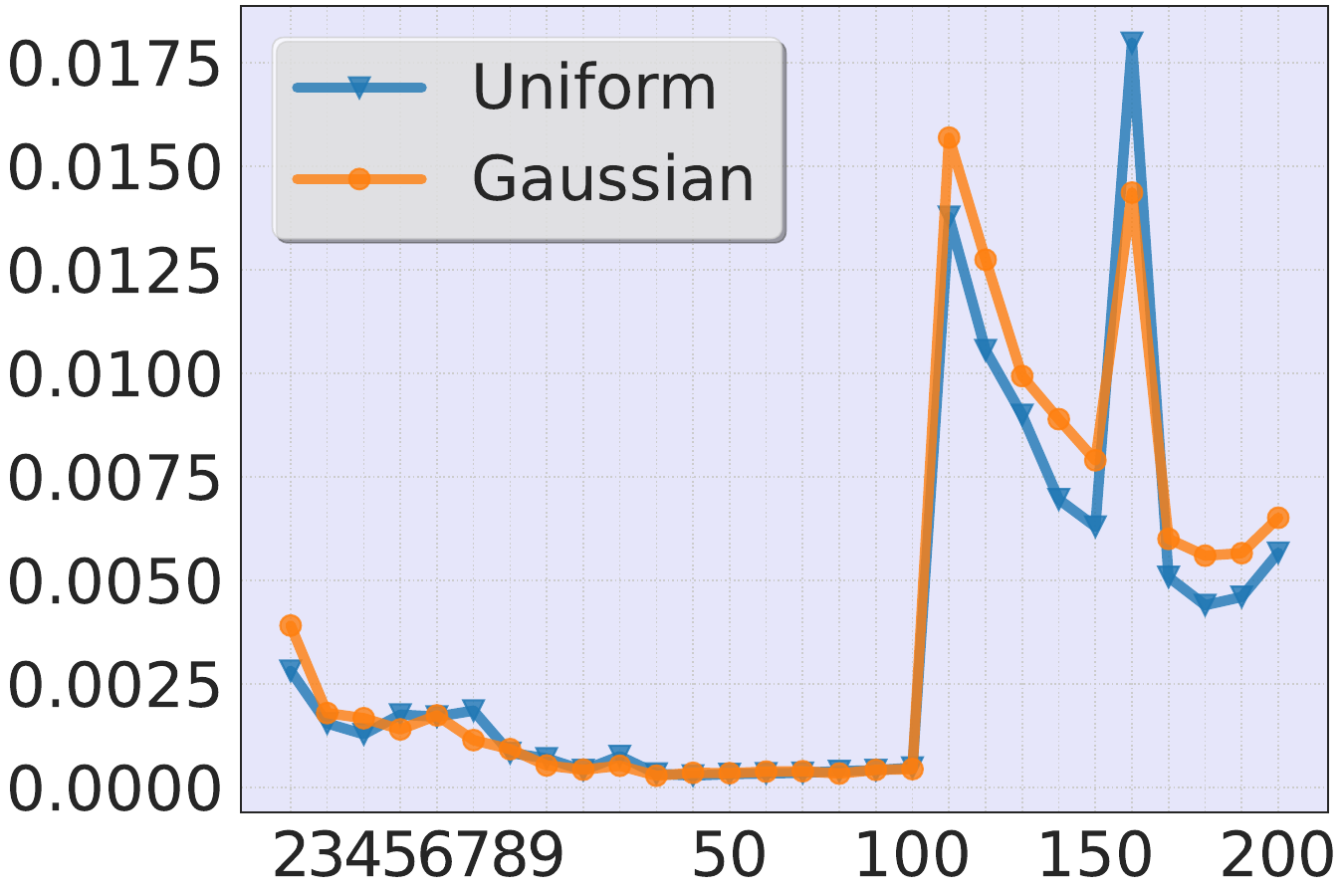}
\caption{\textbf{KD} on C-NTKs}
\label{fig:at-kd1} 
\end{subfigure}
\begin{subfigure}[b]{0.45\linewidth}
\centering
\includegraphics[width=\linewidth]{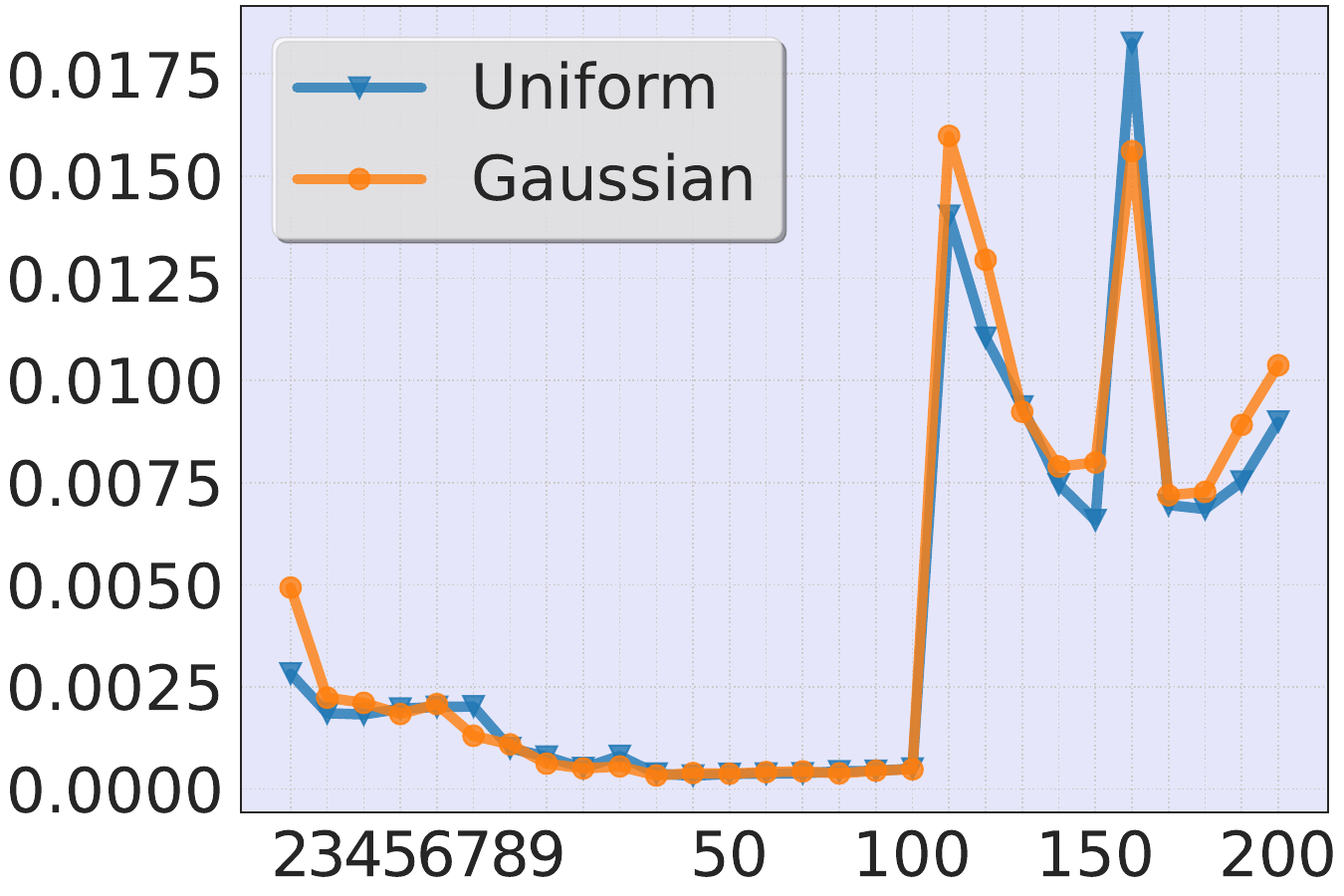}
\caption{\textbf{KD} on AE-NTKs}
\label{fig:at-kd2} 
\end{subfigure}
\caption{\textbf{KD} of models under PGD-AT.}
\label{fig:at-kd}
\end{minipage}
\begin{minipage}[hbt]{0.48\textwidth}
\begin{subfigure}[b]{0.45\linewidth}
\centering
\includegraphics[width=\linewidth]{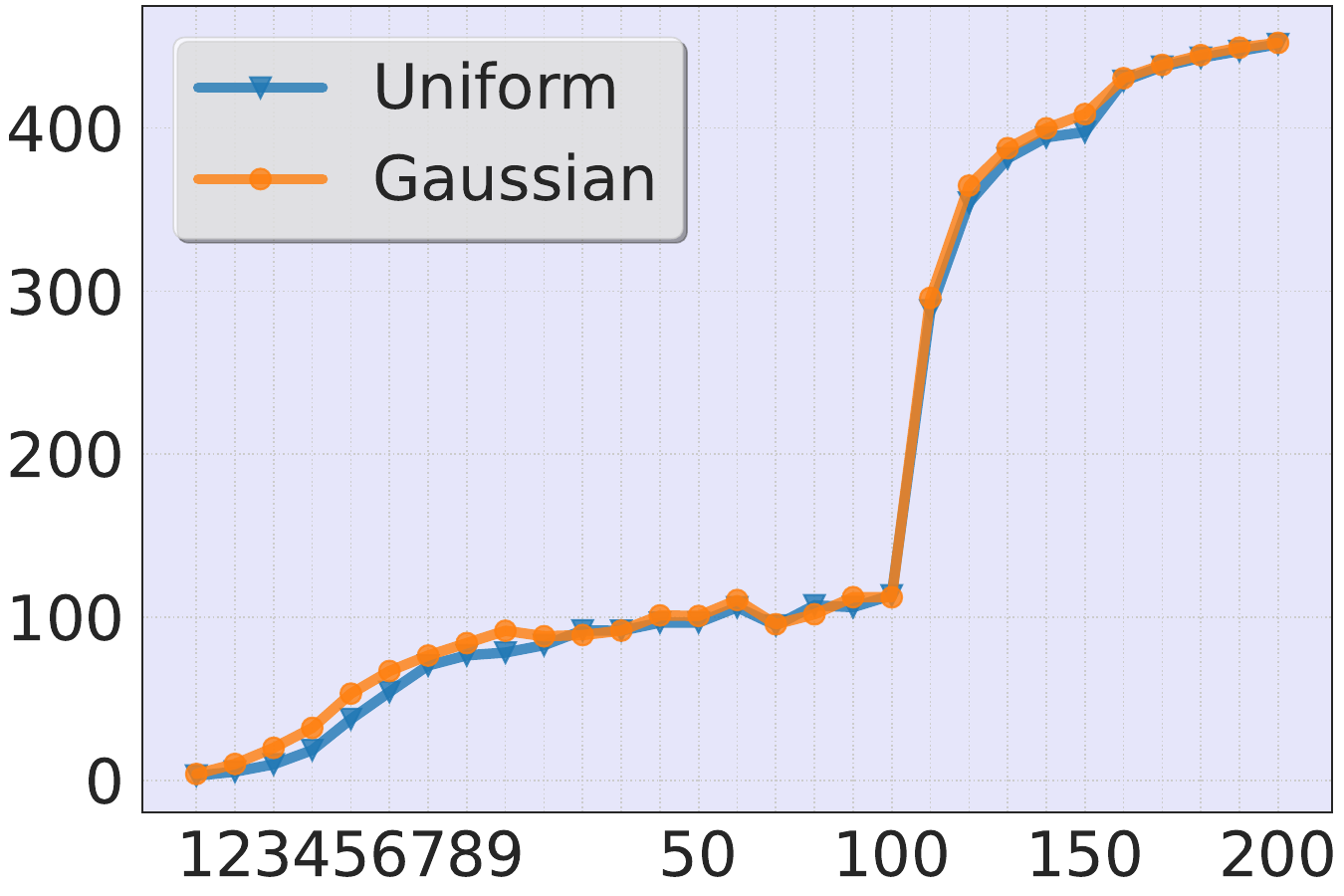}
\caption{\textbf{KER} on C-NTKs}
\label{fig:at-ker1} 
\end{subfigure}
\begin{subfigure}[b]{0.45\linewidth}
\centering
\includegraphics[width=\linewidth]{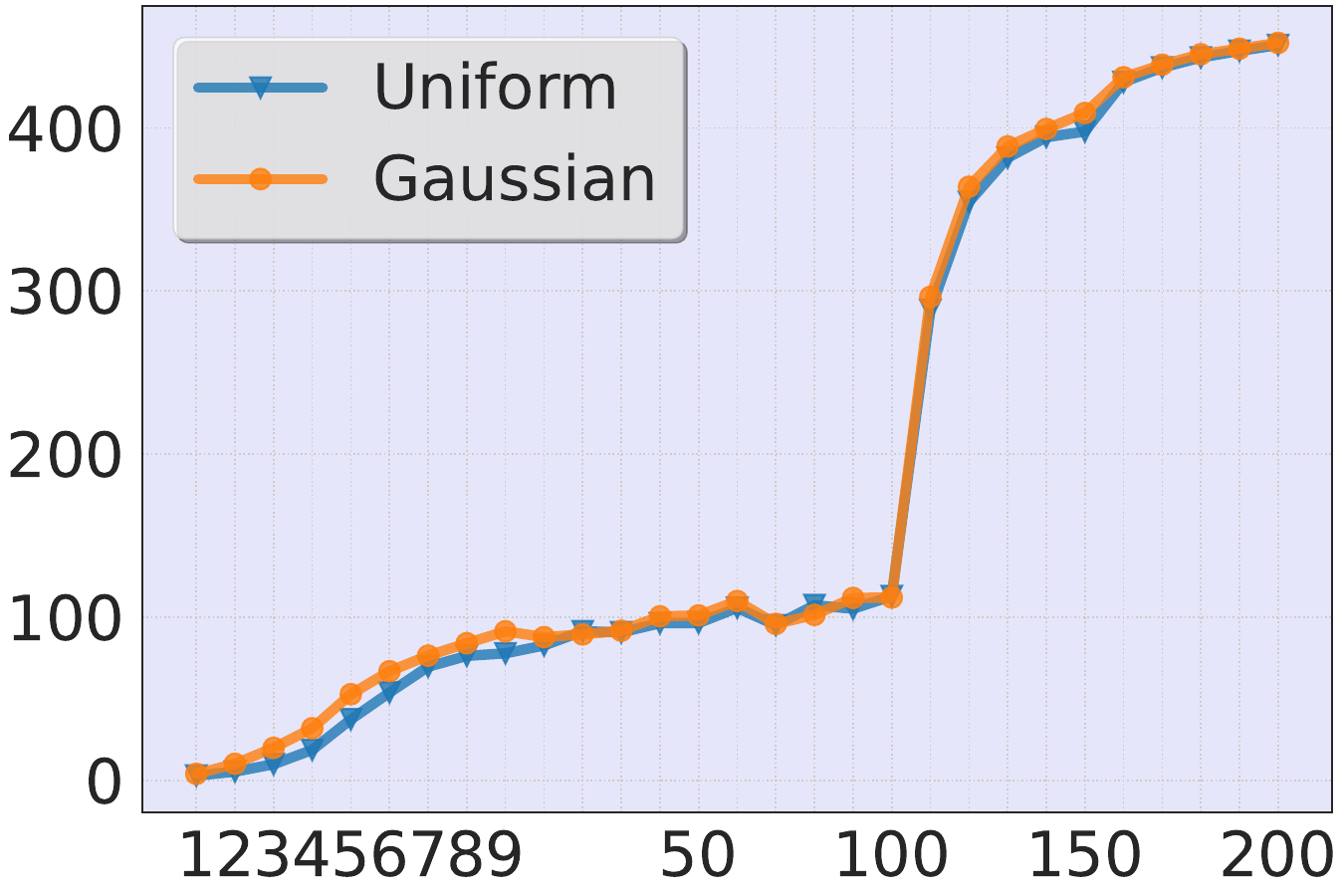}
\caption{\textbf{KER} on AE-NTKs}
\label{fig:at-ker2} 
\end{subfigure}
\caption{\textbf{KER} of models under PGD-AT.}
\label{fig:at-ker}
\end{minipage}
\vspace{-10pt}
\end{figure*}

\begin{figure*}[hbt]
\centering
\begin{subfigure}[b]{0.45\linewidth}
\centering
\includegraphics[width=\linewidth]{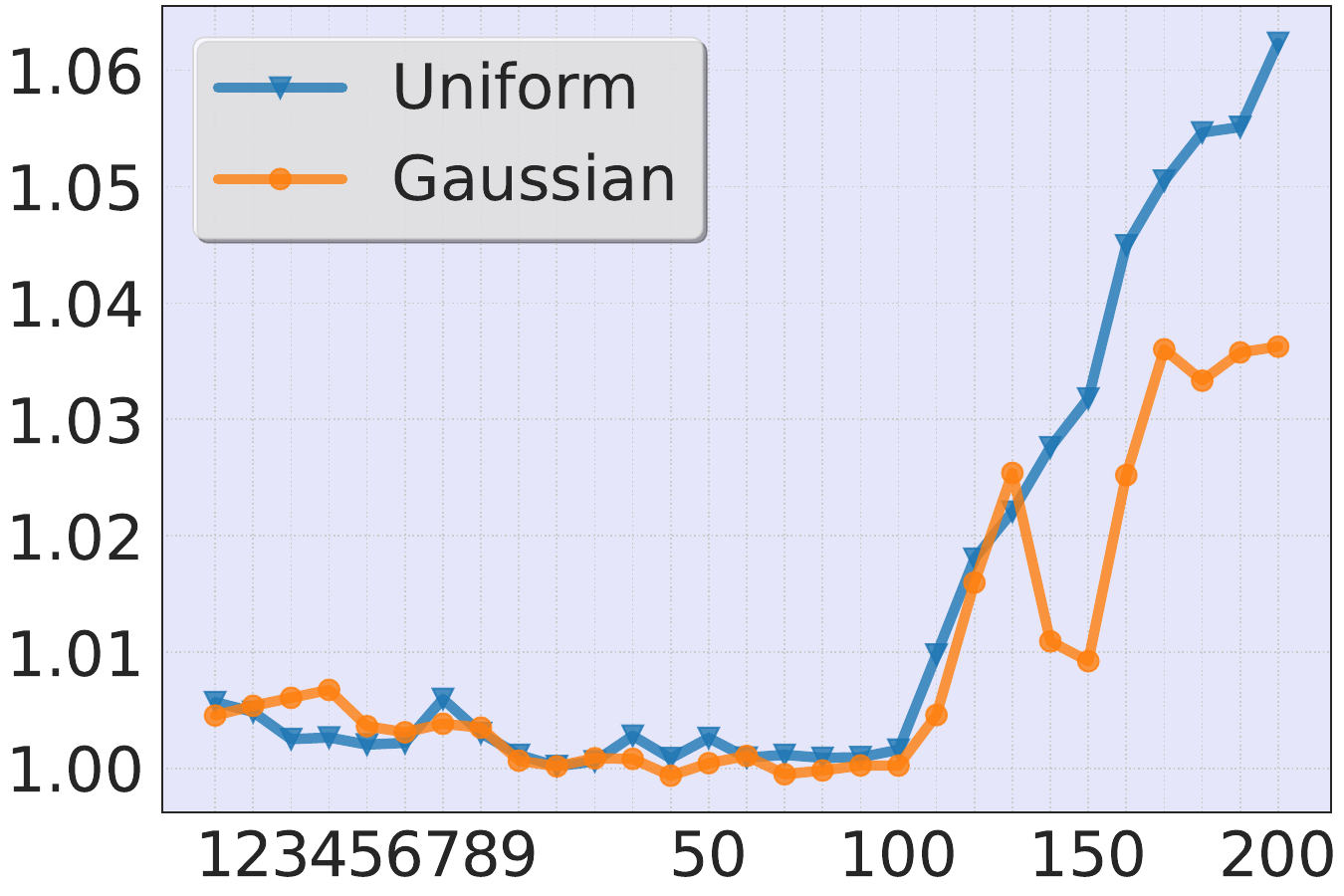}
\caption{\textbf{KS} on C-NTKs}
\label{fig:at-ks1} 
\end{subfigure}
\begin{subfigure}[b]{0.45\linewidth}
\centering
\includegraphics[width=\linewidth]{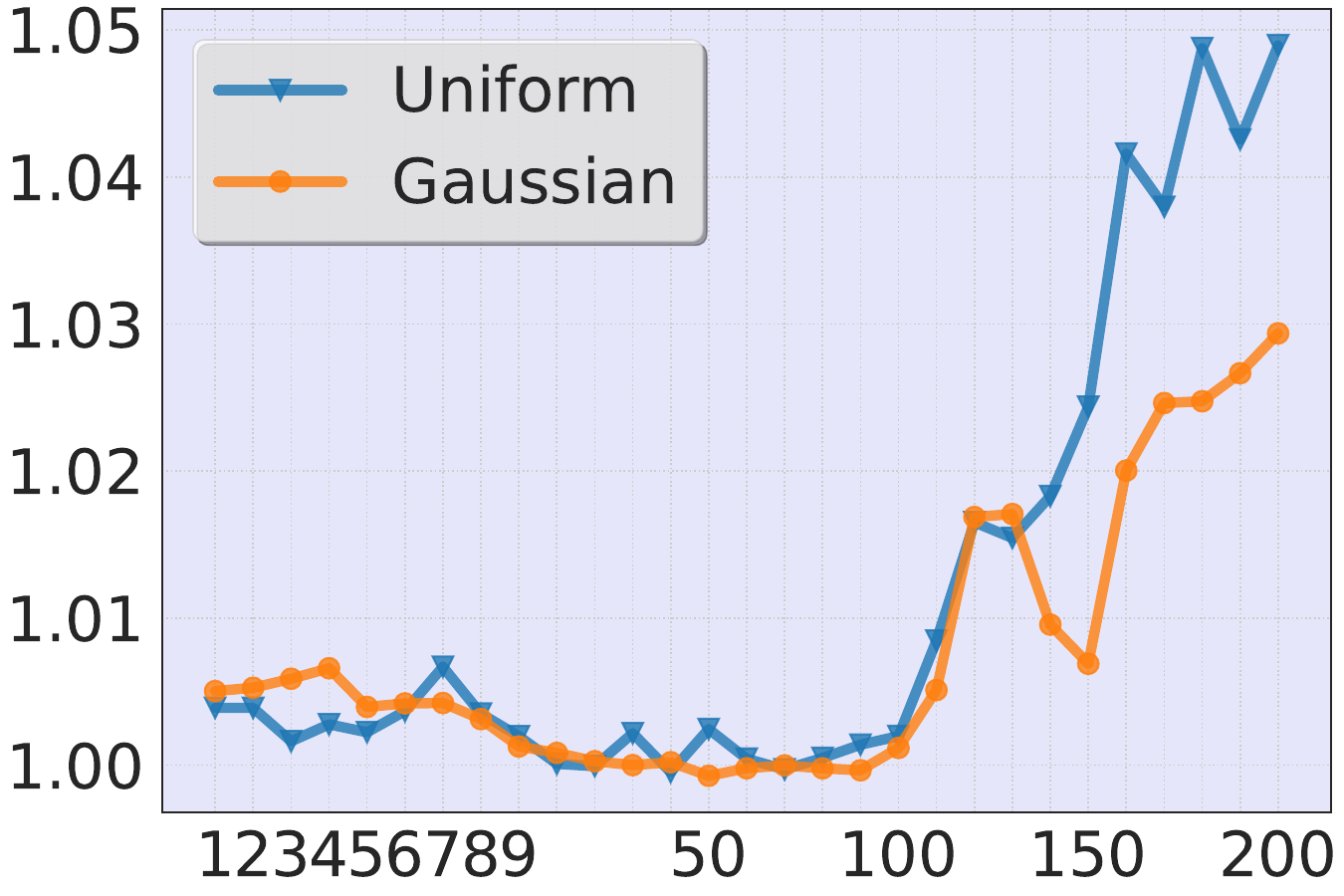}
\caption{\textbf{KS} on AE-NTKs}
\label{fig:at-ks2} 
\end{subfigure}
\caption{\textbf{KS} of models trained under PGD-AT.}
\label{fig:at-ks}
\vspace{-10pt}
\end{figure*}

\section{Empirical Study of Model Initialization}

Through our study, the biggest difference between our work and previous works~\cite{loo_evolution_2022,tsilivis_what_2022} is the assumption of the model initialization method. Specifically, in previous works, the common assumption of the neural network is that the parameters in the network obey a Gaussian distribution. It is aligned with some deep learning frameworks, such as JAX~\cite{bradbury_jax_2018}, in which parameters are initialized with a Gaussian distribution as a default setting. However, some deep learning frameworks, such as Pytorch~\cite{paszke_pytorch_2019}, adopt a Uniform distribution~\cite{he_delving_2015} to initialize all parameters as a default setting. As we do not assume any assumptions about the initialization method, it is important to explore the differences in kernel dynamics under different initialization methods.

In Figure~\ref{fig:init}, we plot the weight distributions of two convolutional layers under different initialization methods. There are three methods we consider, i.e., the Gaussian (Normal) Distribution, the Truncated Normal (TNormal) Distribution, and the Uniform Distribution. In Figure~\ref{fig:init1}, we show the weight distribution of a shallow 2D convolutional layer in ResNet-18, whose kernel size is 3, the input channel is 3, and the output channel is 64. In Figure~\ref{fig:init2}, we show the weight distribution of a deep 2D convolutional layer in ResNet-18, whose kernel size is 3, the input channel is 64, and the output channel is 128. The results show a lot of differences between different initialization methods. If these neural networks follow the ``lazy training'' pattern, they should be a linear approximate around their initial position. However, the following results empirically prove that finite wide networks do not really follow such a pattern.

We follow the same experiment setting as in our main paper. The results in Figure~\ref{fig:acc} show that the performance of models under different initialization strategies is close to each other during the training process. On the other hand, in Figure~\ref{fig:nt}, we compare the kernel dynamics for two initialization methods. The results indicate a similar kernel evolution process during the normal training in these two models. We further compare the kernel dynamics of models under PGD-AT in Figures~\ref{fig:at-kd},~\ref{fig:at-ker}, and~\ref{fig:at-ks}, and we have the same conclusion that the kernels have similar evolution process.

Since the kernel dynamics are similar and the model's performance is close to each other, a natural question is whether their kernels converge to the same one or the similar one. Surprisingly, the answer is negative. In Figure~\ref{fig:kd-ap}, we compare \textbf{KD} between kernels of different models, and we choose the Uniform initialization method as a baseline. As a result, the specialization strengths are different, which can be found in Figure~\ref{fig:mks}. Therefore, it seems like that the kernel dynamics is the main reason that influences the model's performance, and the two models having different kernels can still have similar performance on their task.

Overall, our empirical studies of different model initialization methods prove that the parameter distribution only influences the learned kernel, instead of the kernel dynamics. And the kernel dynamics is the factor, which influence the model's performance on its task.

\begin{figure*}[ht]
\begin{subfigure}[b]{0.3\linewidth}
\centering
\includegraphics[width=\linewidth]{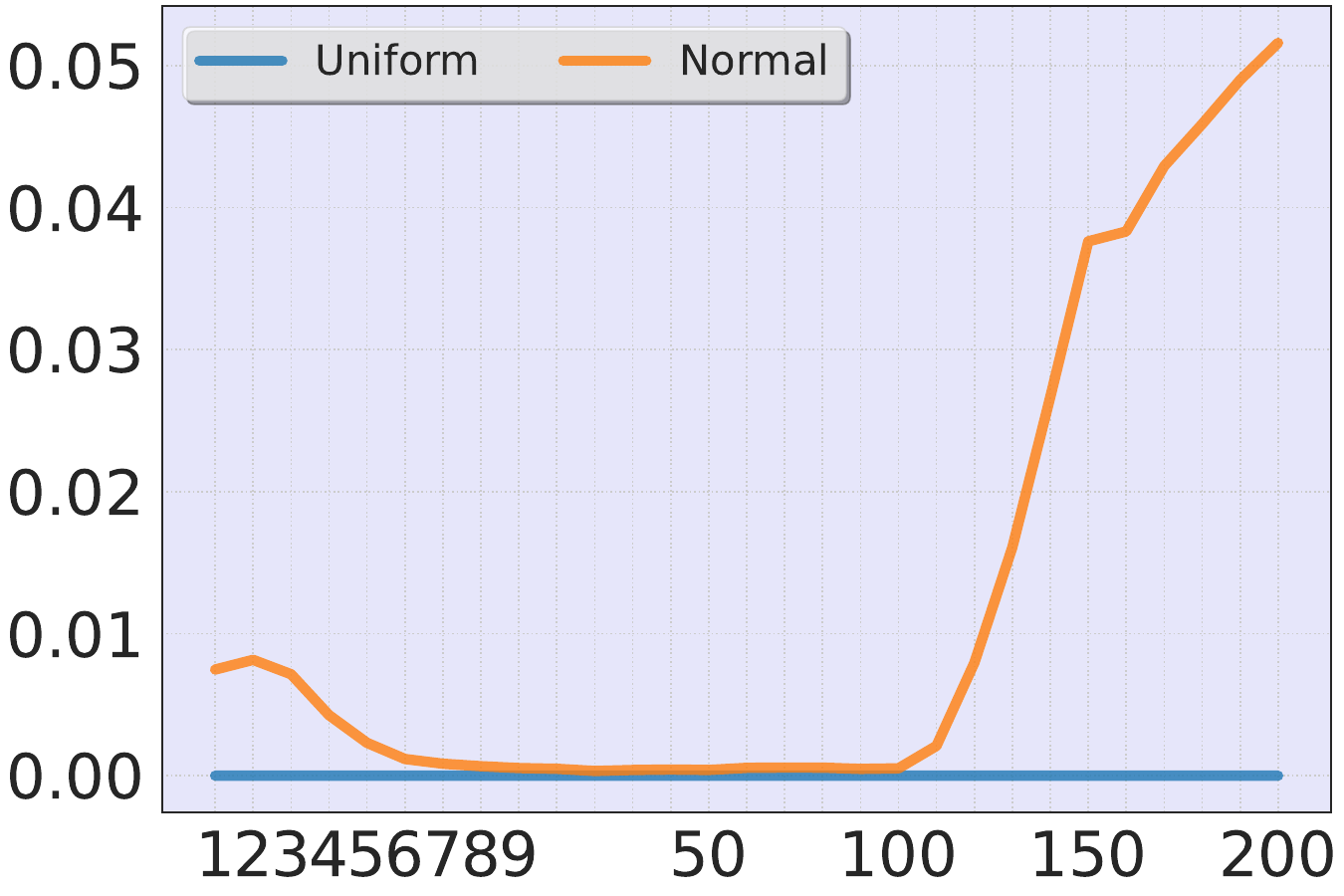}
\caption{\textbf{KD} on C-NTKs}
\label{fig:kd1-ap} 
\end{subfigure}
\begin{subfigure}[b]{0.3\linewidth}
\centering
\includegraphics[width=\linewidth]{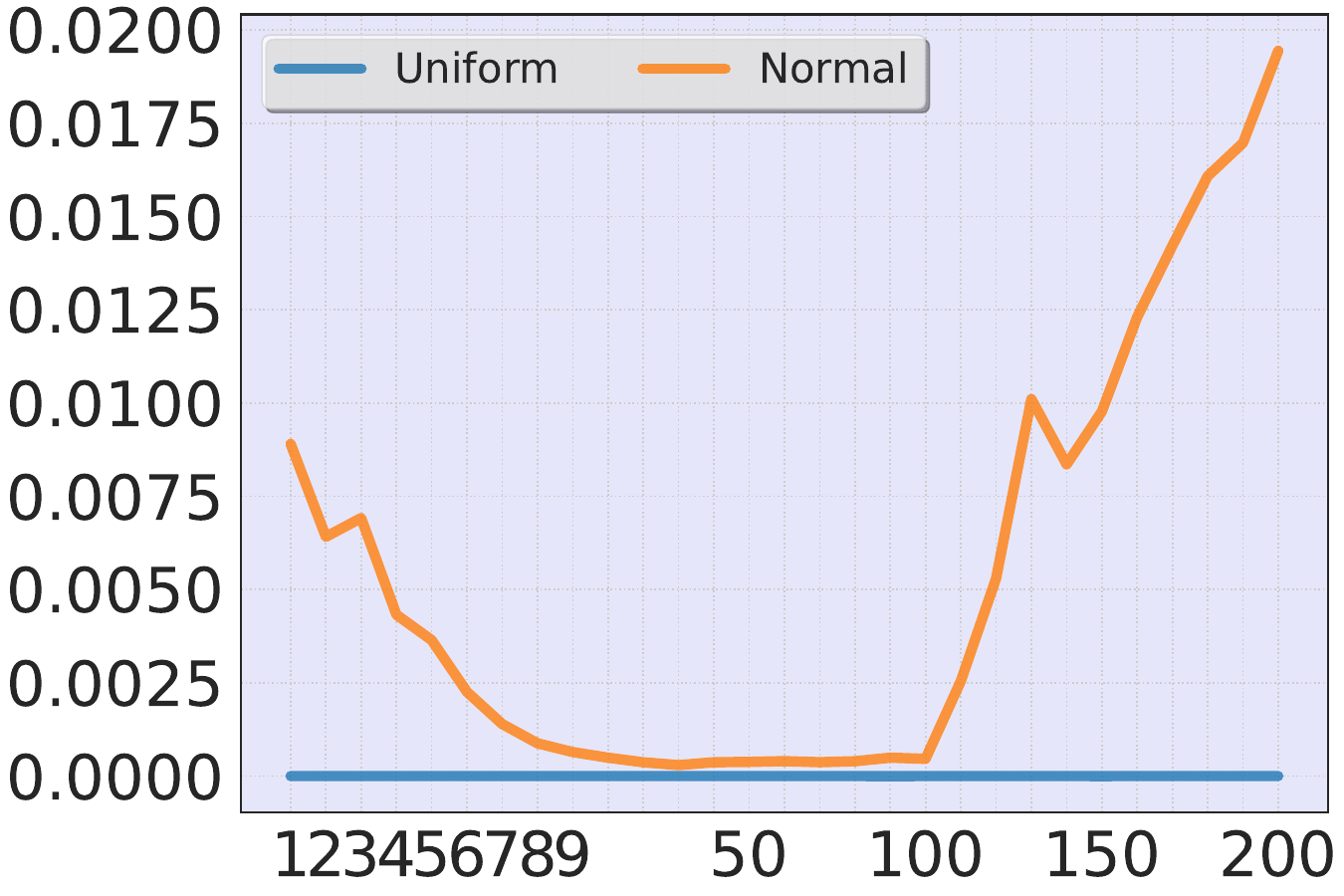}
\caption{\textbf{KD} on C-NTKs}
\label{fig:kd2-ap} 
\end{subfigure}
\begin{subfigure}[b]{0.3\linewidth}
\centering
\includegraphics[width=\linewidth]{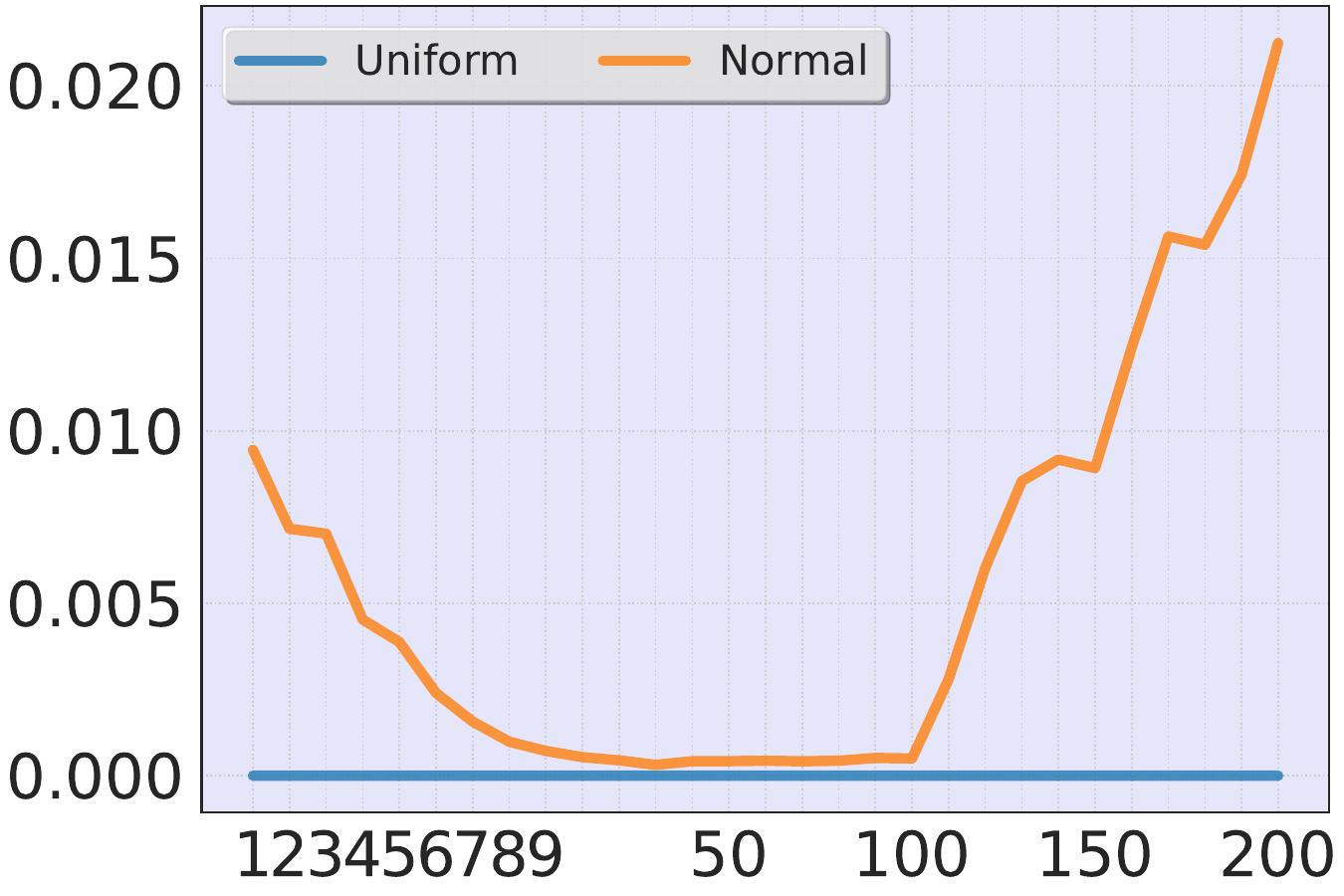}
\caption{\textbf{KD} on AE-NTKs}
\label{fig:kd3-ap} 
\end{subfigure}
\caption{\textbf{KD} between different strategies.}
\label{fig:kd-ap}
\vspace{-10pt}
\end{figure*}

\begin{figure*}[htb]
\centering
\begin{subfigure}[b]{0.45\linewidth}
\centering
\includegraphics[width=\linewidth]{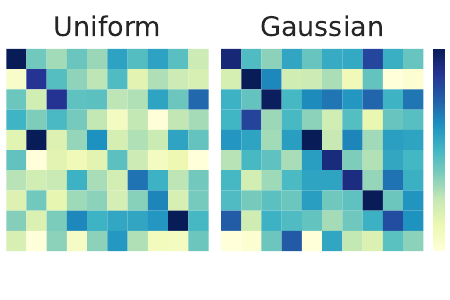}
\vspace{-25pt}
\caption{Normal Training}
\label{fig:mks1} 
\end{subfigure}
\begin{subfigure}[b]{0.45\linewidth}
\centering
\includegraphics[width=\linewidth]{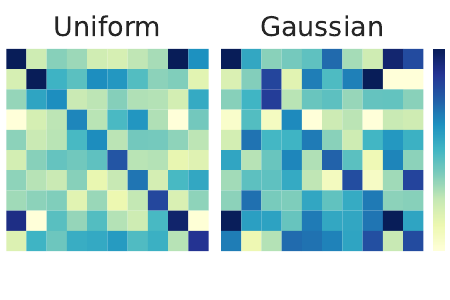}
\vspace{-25pt}
\caption{PGD-AT}
\label{fig:mks2} 
\end{subfigure}
\caption{$M_\mathrm{ks}(\Theta_{200})$ on C-NTKs.}
\label{fig:mks}
\end{figure*}

\section{Discussion of Reducing Training Cost}
\label{ap:tc}

In Section 5 (Case Study II), we show that the training time can be reduced by training the model with only clean data at an early stage. However, from the experiment results, we find that how to decide such a time point to introduce AEs into the training process is important. Clearly, there is a trade-off between cost and robustness. If we adopt AEs to train the model at the end of the training process, usually we will not obtain a model with high robustness, although the training cost reduces a lot. On the other hand, if AEs are introduced too early, the training cost will stay high. Luckily, we find that under the training scheme proposed by \cite{rice_overfitting_2020}, the robustness can be obtained after the last learning rate decay if the number of training iterations is enough long. We evaluate models trained with different time reduce strategies under the PGD attack, C\&W attack, and AutoAttack in Table~\ref{tab:t2-ap}.

Therefore, we recommend starting to use AT between the 110-th epoch and the 140-th epoch till the end of the training process. For a standard training learning rate strategy~\cite{rice_overfitting_2020}, the epochs are between the first and second learning rate decays. Therefore, it is easy to determine the time point to start adversarial training. In this case, we can have a better trade-off between the training cost and robustness.

\begin{table*}[ht]
\centering
\begin{adjustbox}{max width=1.0\linewidth}
\begin{tabular}{c|c|c|c|c|c|c}
 \Xhline{2pt}
\textbf{Method} & \textbf{Time Cost (seconds)} & \textbf{Clean Accuracy} & \textbf{PGD-20} & \textbf{PGD-100} & \textbf{CW-100} & \textbf{AA} \\ \hline
TE & 18000 & 81.93 & 55.17 & 54.82 & 52.50 & 50.50 \\ \hline
TE 90 & 12150 & 81.20 (0.33) & 54.66 (0.07) & 54.36 (0.01) & 51.51 (0.25) & 49.44 (0.08) \\ \hline
TE 100 & 11500 & 81.23 (0.20) & 54.16 (0.03) & 53.85 (0.06) & 51.04 (0.01) & 49.02 (0.11) \\ \hline
TE 110 & 10850 & 81.67 (0.06) & 54.24 (0.11) & 53.99 (0.12) & 51.14 (0.18) & 49.11 (0.11) \\ \hline
TE 140 & 8900 & 81.43 (0.25) & 52.82 (0.12) & 52.61 (0.11) & 49.43 (0.81) & 47.20 (0.57) \\ \hline
TE 150 & 8250 & 77.27 (0.45) & 49.37 (0.16) & 49.20 (0.22) & 45.81 (0.49) & 43.60 (0.54) \\ \hline
TE 160 & 7600 & 78.27 (0.22) & 49.60 (0.06) & 49.52 (0.06) & 45.20 (0.07) & 43.22 (0.12) \\
 \Xhline{2pt}
\end{tabular}
\end{adjustbox}
\caption{Evaluation results for different training strategies. We show the standard deviation in the brackets. The results indicate that our method to reduce training time costs is stable. Furthermore, even if we only spend about 10,000 seconds, the robustness and clean accuracy will only decrease a little, which is acceptable.}
\vspace{-10pt}
\label{tab:t2-ap}
\end{table*}

\section{Additional Experiments}
\label{ap:add}

We consider a new training method, TRADES, in Figure~\ref{fig:kd-ker-ap}. Compared to the other adversarial training strategies, TRADES shows very similar results on both C-NTKs and AE-NTKs under different metrics. The additional evaluation results prove our analysis provided in Section 3 is correct and general. The conclusions will not change for other adversarial training strategies.

\begin{figure*}[hb]
\centering
\begin{subfigure}[b]{0.3\linewidth}
\centering
\includegraphics[width=\linewidth]{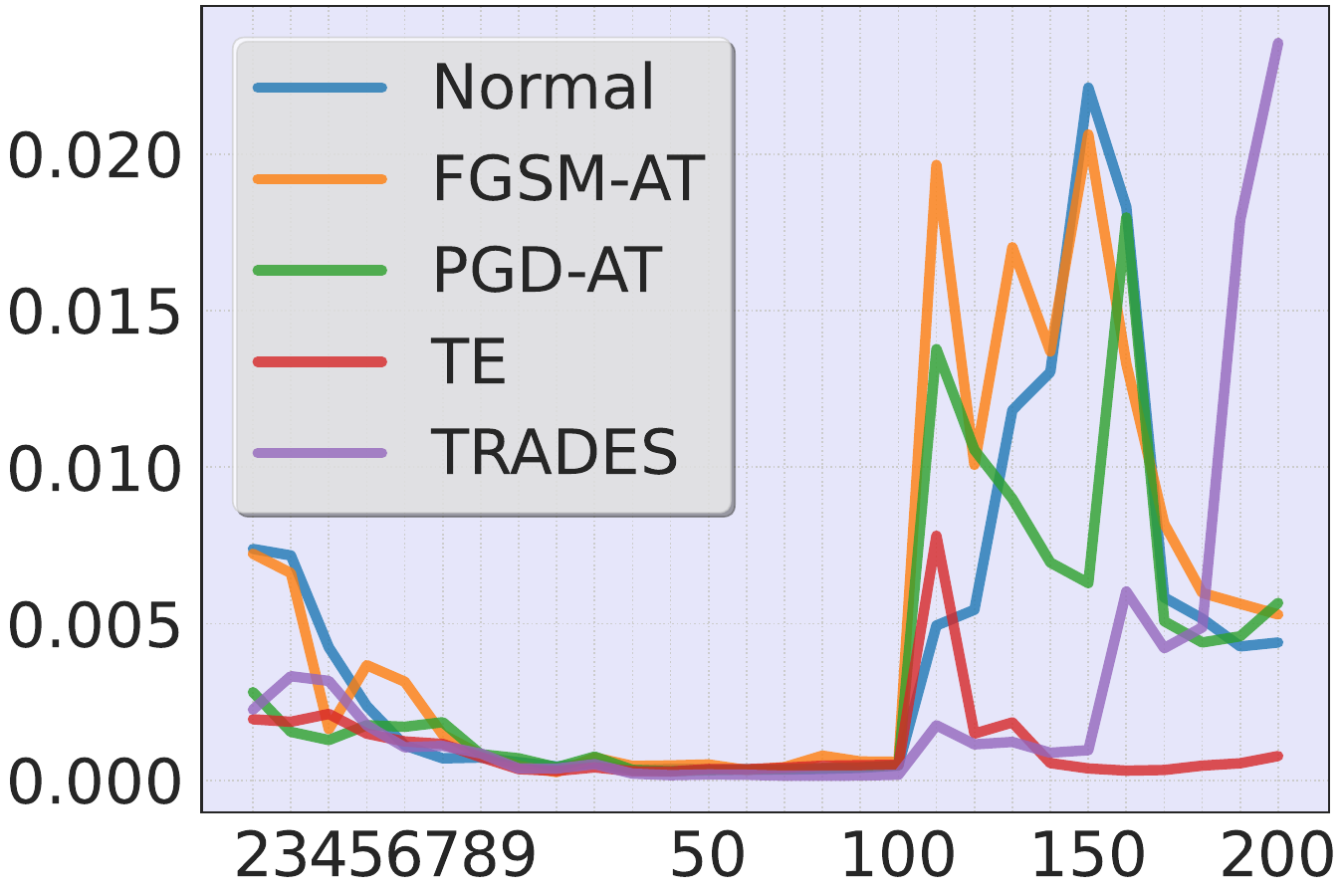}
\caption{\textbf{KD} on C-NTKs}
\label{fig:kd-ker1-ap} 
\end{subfigure}
\begin{subfigure}[b]{0.3\linewidth}
\centering
\includegraphics[width=\linewidth]{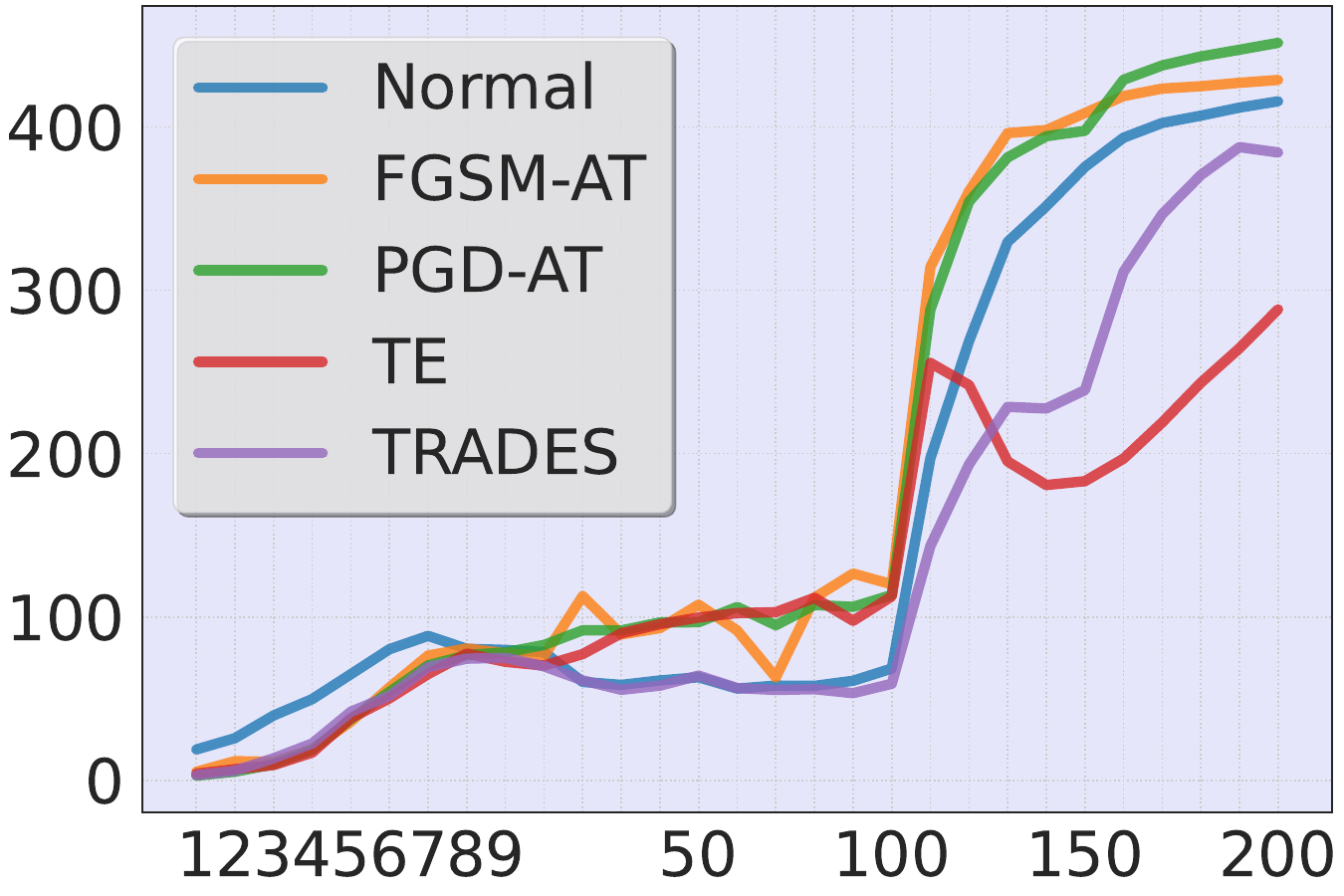}
\caption{\textbf{KER} on C-NTKs}
\label{fig:kd-ker3-ap} 
\end{subfigure}
\begin{subfigure}[b]{0.3\linewidth}
\centering
\includegraphics[width=\linewidth]{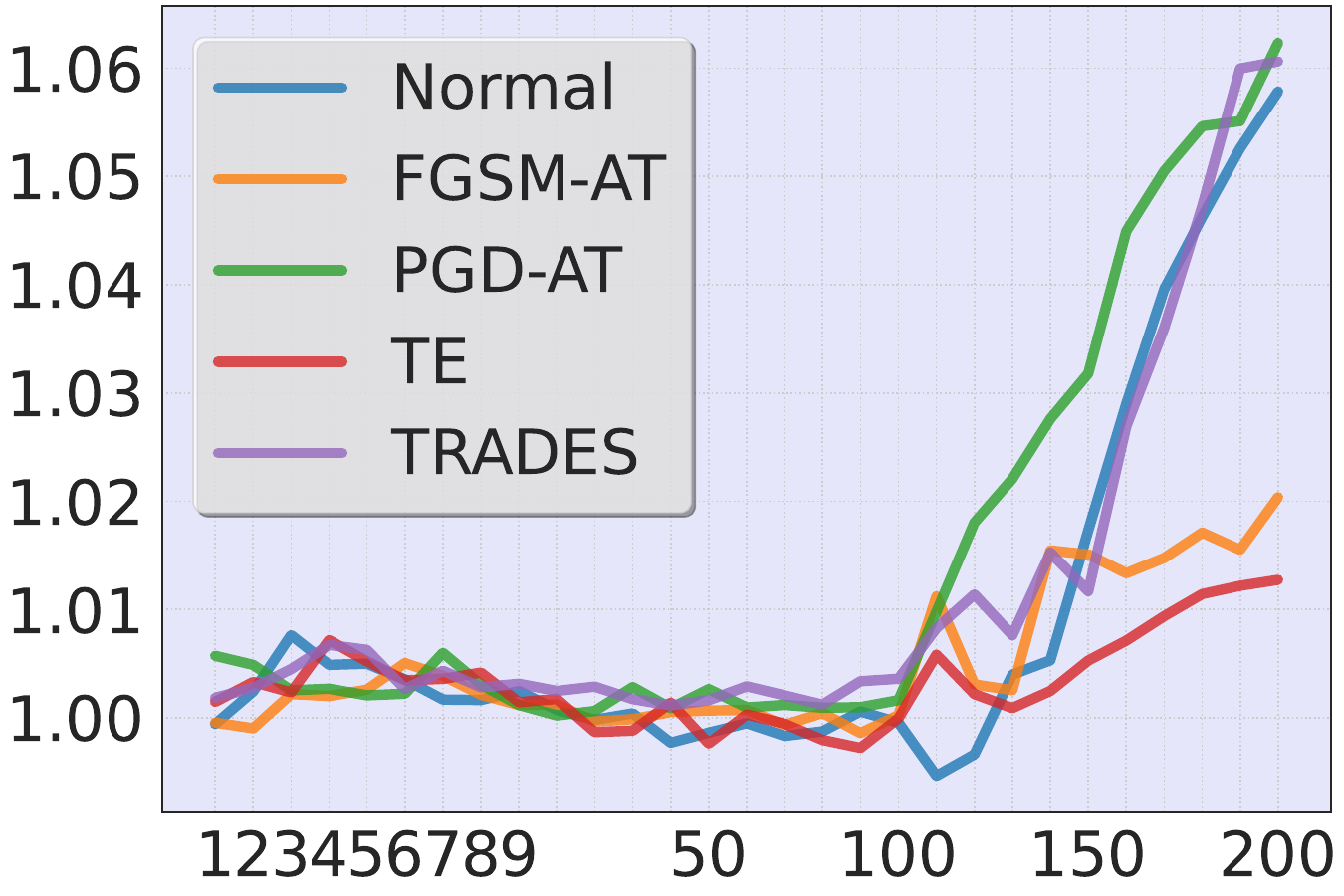}
\caption{\textbf{KS} on C-NTKs}
\label{fig:ks1-ap} 
\end{subfigure}\\
\begin{subfigure}[b]{0.3\linewidth}
\centering
\includegraphics[width=\linewidth]{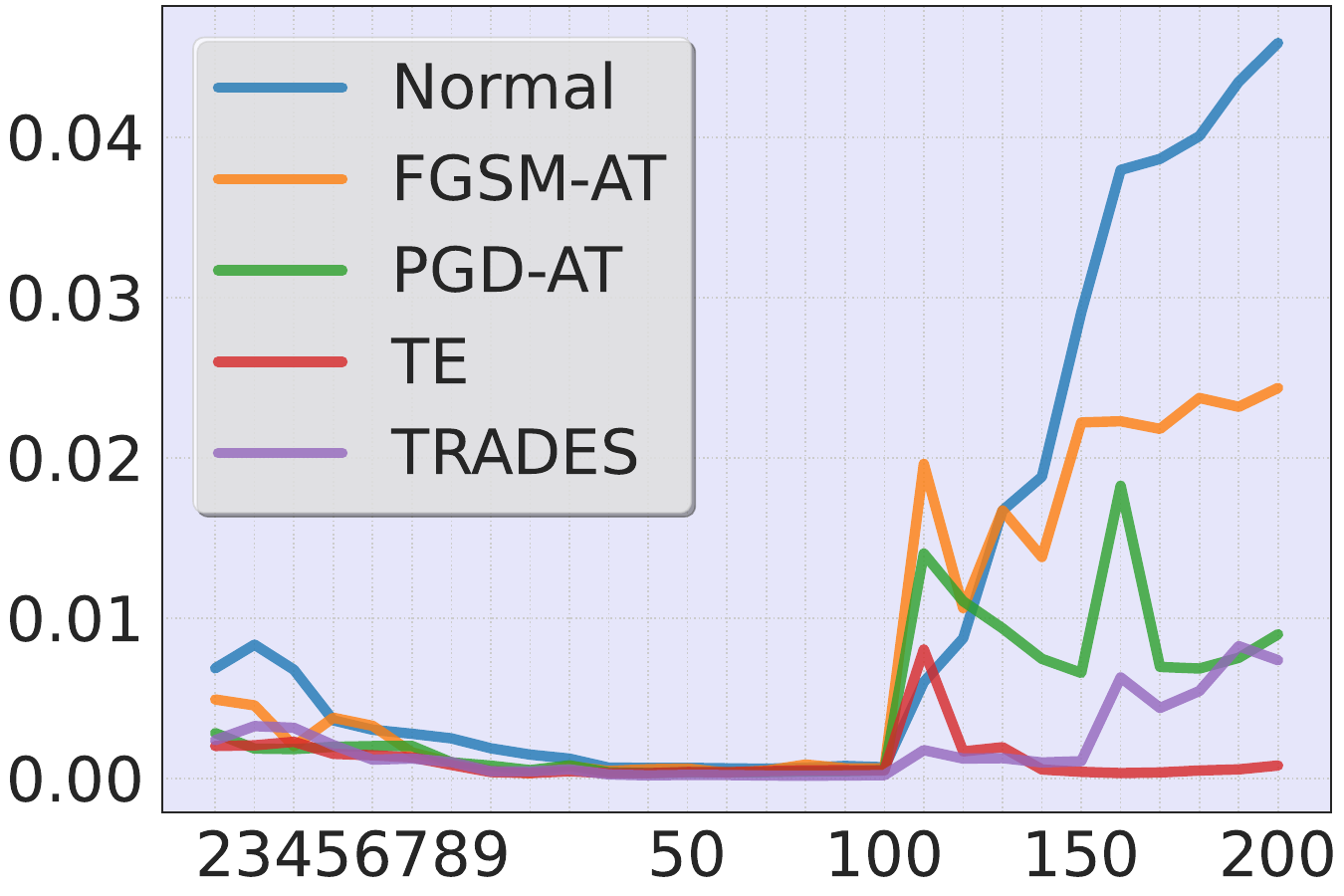}
\caption{\textbf{KD} on AE-NTKs}
\label{fig:kd-ker2-ap} 
\end{subfigure}
\begin{subfigure}[b]{0.3\linewidth}
\centering
\includegraphics[width=\linewidth]{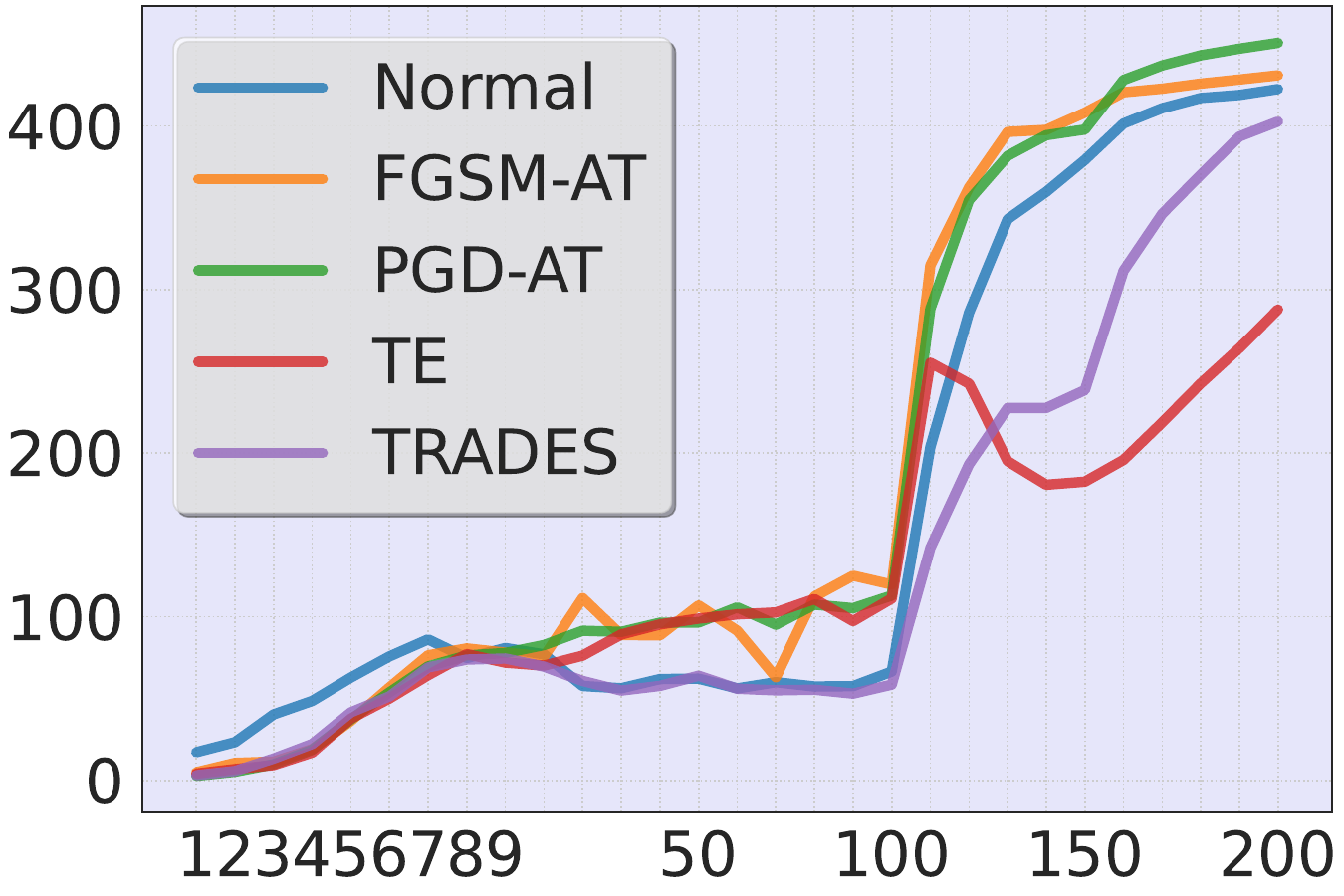}
\caption{\textbf{KER} on AE-NTKs}
\label{fig:kd-ker4-ap} 
\end{subfigure}
\begin{subfigure}[b]{0.3\linewidth}
\centering
\includegraphics[width=\linewidth]{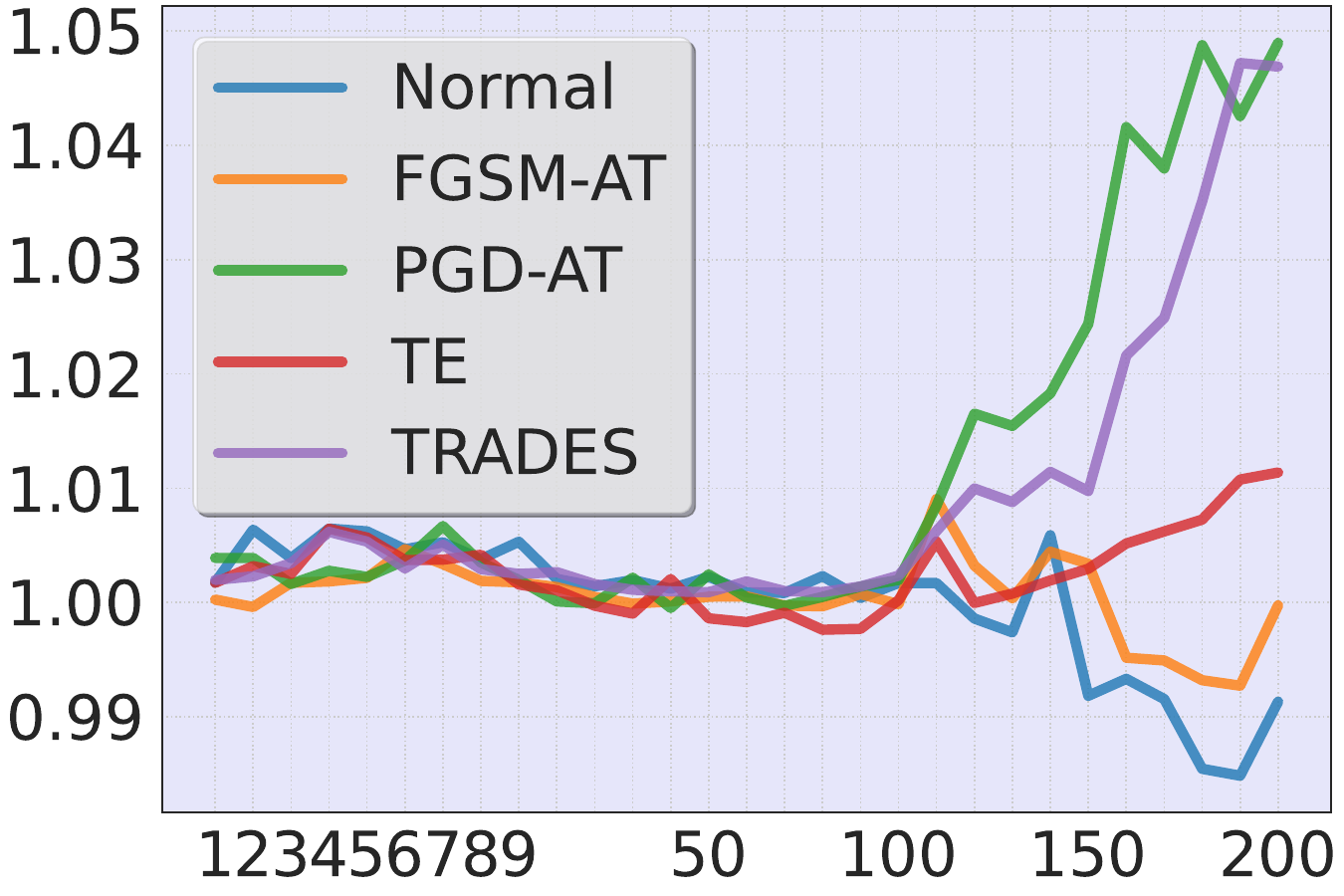}
\caption{\textbf{KS} on AE-NTKs}
\label{fig:ks2-ap}  
\end{subfigure}
\caption{\textbf{KD}, \textbf{KER}, and \textbf{KS} in training. x-axis: training epoch; y-axis: metrics.}
\label{fig:kd-ker-ap}
\vspace{-10pt}
\end{figure*}

\end{document}